\def\jmpu{{\lbrack\!\lbrack \widetilde{\gamma}> \\
  <g,\widetilde{\Balpha}> \\
<g,\widetilde{\Bbeta}> \end{bmatrix}$, whereu\rbrack\!\rbrack}}
\def\sC{{\mathcal C}}
\def\sN{{\mathcal N}}
\def\sF{{\mathcal F}}
\def\sJ{{\mathcal J}}
\def\sD{{\mathcal D}}
\def\sA{{\mathcal A}}
\def\sQ{{\mathcal Q}}
\DeclareMathOperator*{\argmax}{argmax}
\def\Balpha{\mbox{\boldmath$\alpha$}}
\def\Bbeta{\mbox{\boldmath$\beta$}}
\def\Btheta{\mbox{\boldmath$\theta$}}
\def\btheta{\boldsymbol{\theta}} 
\def\bphi{\boldsymbol{\phi}} 
\def\Bbeta{\boldsymbol{\beta}}
\newcommand{\LRc}[1]{\left\{ #1 \right\}}
\newcommand{\LRp}[1]{\left( #1 \right)}
\newcommand{\LRs}[1]{\left[ #1 \right]}
\newcommand{\norm}[1]{\left \Vert #1 \right \Vert}
\newcommand{\real}{{\mathbb{R}}}
\newcommand{\mb}[1]{\mathbf{#1}}
\newcommand{\Ex}{{\mathbb{E}}}
\newcommand{\expect}{\Ex}
\def\bC{\mb{C}}
\def\bff{\mb{f}}
\def\bg{\mb{g}}
\def\bQ{\mb{Q}}
\def\bW{\mb{W}}
\def\bQ{\mb{Q}}
\def\bX{\mb{X}}
\def\bb{\mb{b}}
\def\bc{\mb{c}}
\def\bff{\mb{f}}
\def\bg{\mb{g}}
\def\bp{\mb{p}}
\def\bu{\mb{u}}
\def\bv{\mb{v}}
\def\bx{\mb{x}}
\def\by{\mb{y}}
\def\bz{\mb{z}}
\newcommand{\bs}{\boldsymbol}
\newcommand{\vb}{\bs{v}}
\renewcommand{\epsilon}{\varepsilon}
\setlist[enumerate]{leftmargin=.5in}
\setlist[itemize]{leftmargin=.5in}
\crefname{hypothesis}{Hypothesis}{Hypotheses}
\title{
 Topological derivative approach for deep neural network architecture adaptation
}
\author{C G Krishnanunni\thanks{Dept of Aerospace Engineering $\&$ Engineering Mechanics, UT  Austin
  (\email{krishnanunni@utexas.edu}).}
\and Tan Bui-Thanh\thanks{Dept of Aerospace Engineering $\&$ Engineering Mechanics, Oden Institute for Computational Engineering $\&$ Sciences, UT  Austin 
  (\email{tanbui@utexas.edu}, \email{clint.dawson@austin.utexas.edu}).}
\and Clint Dawson\footnotemark[2]}
\newcommand*{\addFileDependency}[1]{
  \typeout{(#1)}
  \@addtofilelist{#1}
  \IfFileExists{#1}{}{\typeout{No file #1.}}
}
\newcommand*{\myexternaldocument}[1]{%
    \externaldocument{#1}%
    \addFileDependency{#1.tex}%
    \addFileDependency{#1.aux}%
}
\newenvironment{proof}{\paragraph{\textit{Proof:}}}{}
\newcounter{inlineenum}
\renewcommand{\theinlineenum}{\alph{inlineenum}}
\newenvironment{inlineenum}
  {\unskip\ignorespaces\setcounter{inlineenum}{0}%
   \renewcommand{\item}{\refstepcounter{inlineenum}{\textit{\theinlineenum})~}}}
  {\ignorespacesafterend}
\begin{document}
\maketitle
\begin{abstract}
 This work presents a novel algorithm for progressively adapting neural network architecture along the depth. In particular, we attempt to address the following questions in a mathematically principled way: i)  Where to add a new capacity (layer) during the training process? ii) How to initialize the new capacity? At the heart of our approach are two key ingredients: i) the introduction of a ``shape functional" to be minimized, which depends on neural network topology, and ii) the introduction of 
a topological derivative of the shape functional 
with respect to 
the neural network topology. Using an optimal control viewpoint, we show that the network topological derivative exists under certain conditions, and its closed-form expression is derived. In particular,  we explore, for the first time, the connection between the topological derivative from a topology optimization framework with the Hamiltonian from optimal control theory. Further, we show that 
 the optimality condition for the shape functional
 leads to an eigenvalue problem for deep neural architecture adaptation. 
 Our approach thus determines the most sensitive location along the depth where a new layer needs to be inserted during the training phase and the associated parametric initialization for the newly added layer.  We also demonstrate that our layer insertion strategy can be derived from an optimal transport viewpoint as a solution to maximizing a topological derivative in $p$-Wasserstein space, where $p\geq 1$. Numerical investigations with fully connected network, convolutional neural network, and vision transformer on various regression and classification problems demonstrate that our proposed approach can outperform an ad-hoc baseline network and other architecture adaptation strategies. Further, we also demonstrate other applications of topological derivative in fields such as transfer learning.
\end{abstract}

\begin{keywords}
  Neural architecture adaptation, Topological derivative, Optimal control theory, Hamiltonian, Optimal transport theory.
\end{keywords}

\begin{AMS}
  68T07,  68T05
\end{AMS}

\section{Introduction}
It has been observed that deep neural networks (DNNs) create increasingly
simpler but more useful  representations  of the learning problem layer by layer \cite{hinton2007learning, montavon2010layer,
  montavon2011kernel, zeiler2014visualizing}. Furthermore, 
empirical evidence supports the paradigm that  depth of a network is of crucial
importance \cite{russakovsky2015imagenet, simonyan2014very,
  szegedy2015going,krishnanunni2022layerwise}.
 Some of the problems associated with training such deep networks include: 
i) a possible large training set is needed to overcome the over-fitting issue; ii) the architecture adaptability problem, e.g., any amendments to a pre-trained DNN, requires retraining even with transfer learning; iii) GPU employment is almost  mandatory due to 
massive network and data sizes. 
Most importantly, it is often unclear on the number of layers and number of neurons to be used in each layer while training
a neural network for a specific task.  Therefore, there is a critical need for rigorous adaptive principles to guide the architecture design of a neural network.

\subsection{Related work}

Neural architecture adaptation algorithms can be broadly classified into two categories: i) neural architecture search algorithms and ii) principled adaptive strategies. 
Neural architecture search (NAS) algorithms rely on metaheuristic optimization,  reinforcement learning strategy, or Bayesian hyperparameter optimization to arrive at a reasonable architecture \cite{zoph2016neural, stanley2002evolving, suganuma2017genetic, elsken2018efficient, real2019regularized, balaprakash2018deephyper, miikkulainen2019evolving, liu2021survey,li2020random,kandasamy2018neural}. However, these strategies involve training and evaluating many candidate architectures (possibly deep) in the process and are thus computationally expensive. {One key issue with NAS algorithms is that most of the methods report the performance of the best-found architecture, presumably resulting from a single run of the search process \cite{li2020random}. However,  random initializations of each candidate architecture could drastically influence the choice of best architecture prompting the use of multiple runs to select the best architecture, and hence prohibitively expensive. }
On the other hand, sensible adaptive strategies  are algorithms for growing neural networks where one
starts by training a small network and progressively increasing the size of the network (width/depth) \cite{wu2019splitting, wynne1993node,wu2020firefly,evci2022gradmax,krishnanunni2022layerwise,chen2015net2net}.

In particular, existing works have considered growing the width gradually by adding neurons for a fixed depth neural network \cite{wu2019splitting, wynne1993node,maile2022and,evci2022gradmax}. Liu et al. \cite{wu2019splitting} developed a simple criterion for
deciding the best subset of neurons to split and a splitting gradient for optimally updating the off-springs.  Wynne-Jones \cite{wynne1993node}  considered splitting the neurons (adding neurons) based on a principal component analysis on the oscillating weight vector.  Chen et al. \cite{chen2015net2net} showed that replacing a network with an equivalent network that is wider (has more neurons in
each hidden layer) allows the equivalent network to inherit the knowledge from the existing one and can be trained to further improve the performance. Firefly algorithm by Wu et al. \cite{wu2020firefly} generates candidate neurons that either split existing
neurons with noise or are completely new and selects those with the highest gradient norm.

On the other hand, many efforts have been proposed for growing neural architecture along the depth \cite{hettinger2017forward,kulkarni2017layer,wen2020autogrow}. Layerwise training of neural networks is an approach that addresses the issue of the choice of depth of a neural network and the computational complexity involved with training \cite{xu1999training}.   Hettinger et al. \cite{hettinger2017forward} showed that layers can be trained one at a time and the resulting DNN can generalize better. Bengio et al.  \cite{bengio2007greedy} proposed a greedy layerwise unsupervised learning algorithm where the initial layers of a network are supposed to represent more abstract concepts that explain the input observation, whereas subsequent layers extract low-level features. Recently, we devised a manifold regularized greedy layerwise training approach for adapting a neural architecture along its depth \cite{krishnanunni2022layerwise}. Net2Net algorithm \cite{chen2015net2net} introduced the concept of function-preserving transformations for rapidly transferring the information stored in one
neural network into another neural network. Wei et al. \cite{wen2020autogrow} proposed the AutoGrow algorithm to automate depth discovery in deep neural networks where new layers are inserted progressively
if the accuracy improves; otherwise, stops growing and
thus discovers the depth.

It is important to note that any algorithm for growing neural networks (width and/or depth) should adequately address the following questions in a mathematically principled way rather than a heuristic approach \cite{evci2022gradmax}: {\bf{When}} to add new capacity (neurons/layers)?; {\bf{Where}} to add new capacity?; {\bf{How}} to initialize the new capacity? 
Most existing efforts to address these questions focus on growing networks in width, i.e., by adding neurons \cite{wu2019splitting, wu2020firefly, wynne1993node, evci2022gradmax, maile2022and}. {When it comes to growing neural architectures in depth, relatively little work adequately addresses these questions. Sensli \cite{kreis2023sensli} extends the Net2Net framework \cite{chen2015net2net} by addressing not only how to initialize a new layer, but also where to insert it within the network. 
However, in Sensli \cite{kreis2023sensli}, the initialization of a new layer is independent of both the data and the location at which the new layer is inserted in the network. We hypothesize that data-dependent, location-dependent initialization of the added layer is a crucial component of such adaptation strategies, contributing to improved generalization rather than focusing solely on where the new layer is inserted.}  

\subsection{Our contributions}

In this work, we derive an algorithm for progressively increasing a
neural network’s depth inspired by topology optimization.
In particular, we provide solutions to the following questions: i) Where to add a new layer; ii)  When to add a new layer; and iii) How to initialize the new layer? 
{We will show that our approach leads to a data-dependent, position dependent initialization of an added layer.} We shall present two versions of the algorithm: i) a semi-automated version where a predefined scheduler is used to decide when a new layer needs to be added during the training phase  \cite{evci2022gradmax}; ii) a fully automated growth process in which a validation metric is employed to automatically detect when a new layer needs to be added. Our method does not have the limitation that the loss function needs to plateau before growing such as those in \cite{wu2019splitting,kilcher2018escaping}. Our algorithm is based on deriving a closed-form expression for the topological derivative for a shape functional with respect to a neural network (\cref{exist_th}). To that end, in \cref{admissible_pert} and \cref{prop_admissible} we introduce the concept of admissible perturbation and suggest ways for constructing an admissible perturbation. In \cref{deriv_algo}, we show that the first-order optimality condition leads to an eigenvalue problem whose solution determines where a new layer needs to be inserted along with the associated parametric initialization for the added layer. 
The efficiency of our proposed algorithm is explored in \cref{experim} by carrying out extensive numerical investigations with different architectures on prototype regression and classification problems.  

\section{Mathematical framework}

In this work, all the matrices and vectors are represented in boldface. 
Consider a regression/classification task where one is provided with $S$ training data points, input data dimension $n_0$, and label dimension $n_T$. Let the inputs $\bx_i \in \mathbb{R}^{n_0}$ 
for $i \in \{1,2,...S\}$ be organized
row-wise into a matrix $\bX \in \mathbb{R}^{S\times {n_0}}$ and let the corresponding true labels be denoted as $\bc_i\in \mathbb{R}^{n_T}$ and  stacked row-wise as $\bC \in \mathbb{R}^{S\times {n_T}}$.
We start by considering the following empirical risk minimization problem (neural network training) with some loss function\footnote{More precisely the notation $\Phi_s$ may be used. } $\Phi$  for a typical feed-forward neural network:
\begin{equation}
    \begin{aligned}
        \hspace{0 cm} \min_{\boldsymbol{\theta}\in \Theta} \sJ(\Btheta)= \frac{1}{S}\sum_{s=1}^S \Phi\LRp{\bx_{s,T}},\quad \text{subject to:} \ \ \bx_{s,t+1}=\bff_{t+1}(\bx_{s,t};\ \Btheta_{t+1}), \ \ \ \bx_{s,0}=\bx_s,\\
    \end{aligned}
    \label{training_problem}
\end{equation}
where  $t=0,\dots T-1$, $ s\in \{1,\dots S\}$,  $\bff_{t+1}: \real^{n(t)}\times \Theta_{t+1}\mapsto \real^{n(t+1)}$ 
denotes the forward propagation function, $n(t)$ denotes the number of neurons in the $t$-th  layer, the parameter set $\Theta_{t+1}$ is a subset of a Euclidean space and  $\Theta=\Theta_1\times \dots \times \Theta_T$,
$\Btheta_{t+1}$ represents the network parameters corresponding to $(t+1)^{th}$ layer, $\boldsymbol{\theta}=\LRp{\Btheta_1,\dots,\Btheta_T}$ such that $\Btheta \in \Theta$ implies  $\Btheta_{t+1}\in \Theta_{t+1}$, $\bx_{s,t}$ represents the hidden states of the network,  $T-1$ is the total number of hidden layers in the network.  In the case of a fully connected network (FNN), one  has:
\[\bff_{t+1}(\bx_{s,t};\ \Btheta_{t+1})=\sigma_{t+1} \LRp{\bW_{t+1}\bx_{s,t}+\bb_{t+1}},\]
where $\bW_{t+1} \in \real^{n(t+1)\times n(t)}$, $\bb_{t+1} \in \real^{n(t+1)}$, and $\sigma_{t+1}$ is a nonlinear activation function acting component-wise on its arguments. In the following sections, for clarity, let us limit our discussion to the case of a fully-connected neural network.

\subsection{Motivation}
\label{motiv}

{Note that during the training of the network in \eqref{training_problem}, it is often desirable to dynamically add new layers (with  specific initialization for the parameters), at specific locations in the architecture \cite{chen2015net2net,hettinger2017forward}. Our objective in this work is to develop a mathematically principled criterion that guides such layer additions in a network during the training process.}

{Our framework is based on the concept of topological derivative which
was formally introduced  as the "bubble method" for the
optimal design of structures \cite{eschenauer1994bubble}, and further studied in detail by Sokolowski and Zochowski \cite{sokolowski1999topological}. The topological derivative is, conceptually, a derivative of a shape functional with respect to infinitesimal changes in its topology, such as adding an infinitesimal hole or crack.
The concept finds enormous application in the field of structural mechanics, image processing and inverse problems \cite{amstutz2022introduction}. In the field of structural mechanics, the concept has been used to detect and locate cracks for a simple model problem: the steady-state heat equation with the heat flux imposed and the temperature measured on the boundary \cite{amstutz2005crack}. In the field of image processing, the topological derivative has been used to derive a non-iterative algorithm to perform edge detection and image restoration by studying the impact of an insulating crack in the domain \cite{belaid2006image}. In the field of inverse problems, the topological derivative approach has been applied to tomographic reconstruction problem \cite{auroux2010application}. 
}

\def\layersep{1.4cm}
\def\nodeinlayersep{0.7cm}
\begin{figure}[H]
\centering
\begin{tikzpicture}[
    node distance=\layersep,
    edge/.style={-stealth,shorten >=1pt, draw=black!50,thin},
    neuron/.style={circle,fill=black!25,minimum size=10pt,inner sep=0pt},
    operator/.style={rectangle,fill=green!,minimum height= \nodeinlayersep, minimum width= 0.8 * \layersep, inner sep=0pt, rounded corners},
    input neuron/.style={neuron, fill=green!50,minimum size=12pt},
    output neuron/.style={neuron, fill=green!50,minimum size=12pt},
    hidden neuron/.style={neuron, fill=blue!50},
    Forward map/.style={operator, fill=red!50},
    annot/.style={text width=4em, text centered},
    every node/.style={scale=1.0},
    node1/.style={scale=2.0}
]
    \foreach \name / \y in {1,...,4}
        {\ifnum \y=3
            \node (I-\name) at (0,-\nodeinlayersep * \y - \nodeinlayersep * 0.5) {$\vdots$};
        \else
            \ifnum \y=4
                \node[input neuron] (I-\name) at (0,-\nodeinlayersep * \y - \nodeinlayersep * 0.5) {$x_{n}$};
            \else
                \node[input neuron] (I-\name) at (0,-\nodeinlayersep *\y - \nodeinlayersep * 0.5 ) {$x_{\y}$};
            \fi
        \fi}
        
    \foreach \name / \y in {1,...,4}
        {\ifnum \y=3
            \node (O-\name) at (3*\layersep,-\nodeinlayersep *\y - \nodeinlayersep * 0.5) {$\vdots$};
        \else
            \ifnum \y=4
                \node[input neuron] (O-\name) at (3*\layersep,-\nodeinlayersep *\y - \nodeinlayersep * 0.5) {$y_{p}$};
            \else
                \node[input neuron] (O-\name) at (3*\layersep,-\nodeinlayersep *\y - \nodeinlayersep * 0.5) {$y_{\y}$};
            \fi
        \fi}

    \newcommand \Nhidden{2}
    \foreach \N in {1,...,\Nhidden} {
        \foreach \y in {1,...,5} { 
            \ifnum \y=4
                \node at (\N*\layersep,-\y*\nodeinlayersep) {$\vdots$};
            \else
                \node[hidden neuron] (H\N-\y) at (\N*\layersep,-\y*\nodeinlayersep ) {$\sigma$};
        \fi
      }
    }

    \foreach \source in {1,2,4}
        \foreach \dest in {1,...,3,5} 
            \draw[edge] (I-\source) -- (H1-\dest);
    
    \foreach [remember=\N as \lastN (initially 1)] \N in {2,...,\Nhidden}
      \foreach \source in {1,...,3,5} 
          \foreach \dest in {1,...,3,5} 
              \draw[edge] (H\lastN-\source) -- (H\N-\dest);
              
    \foreach \source in {1,...,3,5} 
        \foreach \dest in {1,2,4}
            \draw[edge] (H\Nhidden-\source) -- (O-\dest);

    \draw [decorate, decoration = {calligraphic brace, mirror}, thick]  (1*\layersep,-.7*\nodeinlayersep) -- (1*\layersep,-.7*\nodeinlayersep) node[pos=1.5,below=3.5cm,black]{$l=1$};

    \draw [decorate, decoration = {calligraphic brace, mirror}, thick]  (2*\layersep,-.7*\nodeinlayersep) -- (2*\layersep,-.7*\nodeinlayersep) node[pos=1.5,below=3.5cm,black]{$l=2$};
       \draw [decorate, decoration = {calligraphic brace, mirror}, thick]  (1.5*\layersep,-.7*\nodeinlayersep) -- (1.5*\layersep,-.7*\nodeinlayersep) node[pos=1.5,below=4cm,black]{Original network $\Omega_0$};


\end{tikzpicture}
\hspace{2 cm}
\begin{tikzpicture}[
    node distance=\layersep,
    edge/.style={-stealth,shorten >=1pt, draw=black!50,thin},
    neuron/.style={circle,fill=black!25,minimum size=10pt,inner sep=0pt},
    operator/.style={rectangle,fill=green!,minimum height= \nodeinlayersep, minimum width= 0.8 * \layersep, inner sep=0pt, rounded corners},
    input neuron/.style={neuron, fill=green!50,minimum size=12pt},
    output neuron/.style={neuron, fill=green!50,minimum size=12pt},
    hidden neuron/.style={neuron, fill=blue!50},
    Forward map/.style={operator, fill=red!50},
    annot/.style={text width=4em, text centered},
    every node/.style={scale=1.0},
    node1/.style={scale=2.0}
]
    \foreach \name / \y in {1,...,4}
        {\ifnum \y=3
            \node (I-\name) at (0,-\nodeinlayersep * \y - \nodeinlayersep * 0.5) {$\vdots$};
        \else
            \ifnum \y=4
                \node[input neuron] (I-\name) at (0,-\nodeinlayersep * \y - \nodeinlayersep * 0.5) {$x_{n}$};
            \else
                \node[input neuron] (I-\name) at (0,-\nodeinlayersep *\y - \nodeinlayersep * 0.5 ) {$x_{\y}$};
            \fi
        \fi}
        
    \foreach \name / \y in {1,...,4}
        {\ifnum \y=3
            \node (O-\name) at (4*\layersep,-\nodeinlayersep *\y - \nodeinlayersep * 0.5) {$\vdots$};
        \else
            \ifnum \y=4
                \node[input neuron] (O-\name) at (4*\layersep,-\nodeinlayersep *\y - \nodeinlayersep * 0.5) {$y_{p}$};
            \else
                \node[input neuron] (O-\name) at (4*\layersep,-\nodeinlayersep *\y - \nodeinlayersep * 0.5) {$y_{\y}$};
            \fi
        \fi}

    \newcommand \Nhidden{3}
    \foreach \N in {1,...,\Nhidden} {
        \foreach \y in {1,...,5} { 
            \ifnum \y=4
                \node at (\N*\layersep,-\y*\nodeinlayersep) {$\vdots$};
            \else
                \node[hidden neuron] (H\N-\y) at (\N*\layersep,-\y*\nodeinlayersep ) {$\sigma$};
        \fi
      }
    }

    \foreach \source in {1,2,4}
        \foreach \dest in {1,...,3,5} 
            \draw[edge] (I-\source) -- (H1-\dest);
    
    \foreach [remember=\N as \lastN (initially 1)] \N in {2,...,\Nhidden}
      \foreach \source in {1,...,3,5} 
          \foreach \dest in {1,...,3,5} 
              \draw[edge] (H\lastN-\source) -- (H\N-\dest);
              
    \foreach \source in {1,...,3,5} 
        \foreach \dest in {1,2,4}
            \draw[edge] (H\Nhidden-\source) -- (O-\dest);

    \draw [decorate, decoration = {calligraphic brace, mirror}, thick]  (1*\layersep,-.7*\nodeinlayersep) -- (1*\layersep,-.7*\nodeinlayersep) node[pos=1.5,below=3.5cm,black]{$l=1$};

    \draw [decorate, decoration = {calligraphic brace, mirror}, thick]  (2*\layersep,-.7*\nodeinlayersep) -- (2*\layersep,-.7*\nodeinlayersep) node[pos=1.5,below=3.5cm,black]{$l=1.5$};

        \draw [decorate, decoration = {calligraphic brace, mirror}, thick]  (3*\layersep,-.7*\nodeinlayersep) -- (3*\layersep,-.7*\nodeinlayersep) node[pos=1.5,below=3.5cm,black]{$l=2$};
        
    \draw [decorate, decoration = {calligraphic brace, mirror}, thick]  (1.5*\layersep,-.7*\nodeinlayersep) -- (1.5*\layersep,-.7*\nodeinlayersep) node[pos=1.5,above=0.5cm,black]{Added layer with };

    \draw [decorate, decoration = {calligraphic brace, mirror}, thick]  (1.5*\layersep,-.7*\nodeinlayersep) -- (1.5*\layersep,-.7*\nodeinlayersep) node[pos=1.5,above=0.1cm,black]{ parameters $\epsilon \boldmath\phi$};

    \draw [decorate, decoration = {calligraphic brace, mirror}, thick]  (2.0*\layersep,-.7*\nodeinlayersep) -- (2.0*\layersep,-.7*\nodeinlayersep) node[pos=1.8,below=4cm,black]{ Perturbed network $\Omega_{\epsilon}$};

\draw[->, thick] (2,-0.3) -- (2,-0.65);
\end{tikzpicture}
\caption{Schematic view of the topological derivative approach: A new layer with parameters $\epsilon \bphi$ is inserted between the $1^{st}$ and $2^{nd}$ layer. If one views $\epsilon$ as the magnitude of perturbation, then for $\epsilon=0$, the network $\Omega_\epsilon$ should behave exactly the same way as $\Omega_0$ under the standard training process (Residual connections are not shown in the figure).} 
\label{fig:topo}
\end{figure}
{In the topological derivative approach, one often studies how the presence of a circular hole (or inclusion of new material) of radius $\epsilon$  at a specific location $\bz$ in the domain $\Omega_0\subset \real^2$ affects
the solution of a given partial differential equation (PDE) with prescribed boundary conditions \cite{amstutz2022introduction}.}

{Analogously, in the context of neural networks, we will examine  how the addition of a new layer initialized with parameters $\epsilon \bphi$,  at a specific location $l\in \{1,2,\dots T-1\}$ in the neural network $\Omega_0$ affects the training equations. Here, $\epsilon \in \real$ and $\bphi$ is the vector of parameters of the new layer. 
\cref{fig:topo} shows an example where a new layer with parameters $\epsilon \bphi$ is inserted at the location between layers 1 and 2 of the  network $\Omega_0$. This raises the following question:}
\vspace{0.2 cm}

{{\it{In what sense does adding a layer constitute a perturbation of the neural network graph?}} }

\vspace{0.2 cm}

{We answer this question by drawing analogy with the concept of ``topological derivative" from mechanics\footnote{{The analysis resulting from this viewpoint differs from other works on perturbation analysis in the machine learning community, where the focus is either (i) on how perturbations to input data affect the output of a neural network \cite{fawzi2016robustness,rahnama2020robust}, or (ii) on how perturbations to network parameters influence the network's performance or output \cite{cheney2017robustness,tsai2021formalizing,buchel2021network,yu2023adversarial}.}}. Since we are interested in understanding how the addition of a layer, as illustrated in \cref{fig:topo}, affects the neural network training, it is important to first understand the gradient flow/backpropagation equations for training a neural network. 
To that end, in \cref{opt_top_full} we present the neural network training problem from an optimal control viewpoint \cite{li2018optimal, benning2019deep}. A key quantity that arises in this framework is the Hamiltonian $H_t$ (see \eqref{hamiltonian}). Later in \cref{exist_th}, we unveil an interesting connection between the network topological derivative, formally defined in \cref{topo_defin}, and the Hamiltonian $H_t$ in \eqref{hamiltonian}.}

\subsection{Optimal control viewpoint for neural network training}
\label{opt_top_full}

Recently, there has been an interest in formulating deep learning as a  discrete-time
optimal control problem \cite{li2018optimal, benning2019deep}. Note that in the context of popular architectures such as residual neural networks (ResNet), equation \eqref{training_problem} can be interpreted as discretizations of an optimal control problem subject to an ordinary differential equation constraint \cite{benning2019deep}.
The Hamiltonian  for the $t$-th layer, $H_t: \real^{n(t)}\times \real^{n(t+1)}\times \Theta_{t+1} \mapsto \real$ corresponding to the loss in \eqref{training_problem} is defined as \cite{todorov2006optimal}: 
\begin{equation}
    H_t\LRp{\bx_{s,t};\ \bp_{s,t+1};\ \Btheta_{t+1}}=\bp_{s,t+1} \cdot \bff_{t+1}(\bx_{s,t};\ \Btheta_{t+1}),
    \label{hamiltonian}
\end{equation}
where, $\{ \bp_{s,1},\ \bp_{s,2},\dots \bp_{s,T} \}$ denote the adjoint variables computed during a particular gradient descent iteration (backpropagation) as follows \cite{todorov2006optimal}:
\begin{equation}
    \begin{aligned}
   & \underline{\mathrm{Forward\ propagation}}\\ 
   & \bx_{s,t+1}=\nabla_\bp H_t\LRp{\bx_{s,t};\bp_{s,t+1};\Btheta_{t+1}}=\bff_{t+1}(\bx_{s,t};\ \Btheta_{t+1}),\\
   & \underline{\mathrm{Computing \ adjoints}}\\
       & \bp_{s,T}=-\frac{1}{S}\nabla \Phi\LRp{\bx_{s,T}},\\
         & \bp_{s,t}=\nabla_{\bx} H_t\LRp{\bx_{s,t};\ \bp_{s,t+1};\ \Btheta_{t+1}}= \LRs{\nabla_{\bx}\bff_{t+1}(\bx_{s,t};\Btheta_{t+1})}^T\ \bp_{s,t+1},\ \ t=0,\dots T-1,\\
       & \underline{\mathrm{Updating \ parameters \ via\ gradient\ descent}}\\
& \Btheta_{t+1}\leftarrow\Btheta_{t+1}-\ell\ \nabla_{\btheta} H_t\LRp{\bx_{s,t};\ \bp_{s,t+1};\ \Btheta_{t+1}}=\Btheta_{t+1}-\ell\ \LRs{\nabla_{\btheta}\bff_{t+1}(\bx_{s,t};\Btheta_{t+1})}^T\bp_{s,t+1},
\end{aligned}
    \label{set_of}
\end{equation}
where, $\nabla_{\bx}\bff_{t+1}$ denotes the gradient of $\bff_{t+1}$ with respect to the first argument (i.e. the state $\bx$) and $\nabla_{\btheta}\bff_{t+1}$ denotes the gradient of $\bff_{t+1}$ with respect to the second argument (i.e. the parameter $\btheta$), $\ell$ denotes the step-size used in the particular gradient descent iteration, notation {`$\leftarrow$' means that the parameters $\Btheta_{t+1}$ of the $(t+1)-$th layer are being updated.}
From a  Lagrangian viewpoint, the Hamiltonian can be understood as a device to generate the first-order necessary conditions for optimality \cite{todorov2006optimal}. 

{With the concept of the topological derivative introduced in \cref{motiv}, and backpropagation described in \cref{opt_top_full},  we are now ready to introduce key concepts such as ``perturbed network" and ``admissible perturbation"  leading to the definition of a discrete topological derivative which we call as the ``network topological derivative" in our work. The concept of an “admissible perturbation” is based on the idea that, if $\epsilon = 0$ in \cref{fig:topo} (i.e., the perturbation magnitude), the solutions in \cref{set_of} should correspond to those of the original network $\Omega_0$ at any gradient descent iteration, as if the added layer were not present in the graph. This is analogous to the topological derivative approach in mechanics, where setting $\epsilon = 0$ (the radius of the hole) recovers the solution of the PDE on the original domain $\Omega_0$.}

\begin{definition}[Network perturbation]
\label{def_pert}
Consider $\sA$ as the set of all feed-forward neural networks with constant width `n' ($n \in \mathbb{N}$) in each hidden layer. 
Let $\Omega_0 \in \sA$ be a given neural network with $(T-1)$ hidden layers, and 
$\sigma\LRp{.}:\real\mapsto \real$ be a non-linear activation function. A perturbation $\Omega_\epsilon$  from  $\Omega_0$ at $l\in \{1,2,\dots T-1\}$ along the ``direction"   $\boldsymbol{\phi}$ and with magnitude $\epsilon$ is defined as:
\[ \Omega_{\epsilon}={\Omega_0\oplus (l,\ \epsilon \boldsymbol{\phi}, \ \sigma)}, \quad s.t.\ \ \Omega_\epsilon \in \sA,\]
where $\oplus$ represents the operation of adding a layer between  the $l^{th}$ and $(l+1)^{th}$ layer of network $\Omega_0$ initialized with vectorized parameters (weights/biases) $\epsilon \boldsymbol{\phi}$  and activation $\sigma$ (refer \cref{fig:topo}). The added layer is denoted as $\LRp{l+\frac{1}{2}}$ in \cref{fig:topo}. Note that $\Omega_{\epsilon}$ is a also a function of $
l, \boldsymbol{\phi}$ and $\sigma$, but for the simplicity of the notation, we omit them.
\label{perturbedNetworkDef}
\end{definition}
In the rest of the paper (and in \cref{def_pert}), the notations $\Omega_0,\ \Omega_{\epsilon}$ denote the neural network with the list of all parameters and activation functions.
 In addition, we adopt the notation $\Omega_0(\bx_s):\real^{n_0}\mapsto \real^{n_T}$, and  $\Omega_\epsilon(\bx_s; \  \epsilon \bphi):\real^{n_0}\times \real^{n_p}\mapsto \real^{n_T}$ to denote the neural network functions (forward propagation constraint given in \eqref{training_problem}) where the second argument $ \epsilon\bphi$ denotes the parameters of the added layer for $\Omega_\epsilon$. Now it is necessary that by setting the magnitude of perturbation $\epsilon=0$ in \cref{def_pert}, 
the network $\Omega_\epsilon \Big |_{\epsilon=0}$ should behave exactly the same as
$\Omega_0$ under the gradient-based training process. We formalize this notion  in \cref{admissible_pert} below.

\begin{definition}[Admissible perturbation]
\label{admissible_pert}
We say that $\Omega_\epsilon$ in \cref{def_pert}  is an admissible perturbation  if:
\begin{equation}
\Omega_{\epsilon}\Big |_{\epsilon=0}=\Omega_0,
\label{admit_per}
\end{equation}
 where `=' in \eqref{admit_per} is used to denote the fact that  the network $\Omega_\epsilon\Large |_{\epsilon=0}$  behaves exactly the same  as $\Omega_0$ under gradient based training process. In particular,
     the added layer in $\Omega_{\epsilon}\Big |_{\epsilon=0}$ is redundant and only acts as a message-passing layer. Furthermore,  the solutions  given by \eqref{set_of} for $t=\{0,1,\dots l,(l+1),\dots T\}$ and the loss $\sJ$ in \eqref{training_problem} together with its gradient  coincide for both $\Omega_0$ and $\Omega_\epsilon \Big |_{\epsilon=0}$ at every gradient descent iteration.
     \end{definition}    
\begin{proposition}[Construction of an admissible perturbation]
\label{prop_admissible}
Consider \cref{admissible_pert}. The following two steps produce an admissible perturbation:

\begin{enumerate}
   \item  The hidden layer propagation equation in \eqref{training_problem} for  $\Omega_\epsilon$ in \cref{def_pert} satisfies: 
   \begin{equation}
   \begin{aligned}
             & \bx_{s,t+1} =\bff_{t+1}(\bx_{s,t};\ \Btheta_{t+1})=\bx_{s,t}+\bg_{t+1}(\bx_{s,t};\ \Btheta_{t+1}),\ t=1,.., l-1, l+1,..T-2,\\
             & \bx_{s,l+\frac{1}{2}} =\bff_{l+\frac{1}{2}}\LRp{\bx_{s,l};\ \Btheta_{l+\frac{1}{2}}}=\bx_{s,l}+\bg_{l+\frac{1}{2}}\LRp{\bx_{s,l};\ \Btheta_{l+\frac{1}{2}}},\\
              & \bx_{s,l+1} =\bff_{l+1}\LRp{\bx_{s,l+\frac{1}{2}};\ \Btheta_{l+1}}=\bx_{s,l+\frac{1}{2}}+\bg_{l+1}\LRp{\bx_{s,l+\frac{1}{2}};\ \Btheta_{l+1}}.
   \end{aligned}
    \label{res_o}
\end{equation}
   \label{one}
   \item Choose the activation function $\sigma$ such that $\bg_{t}\LRp{.;\ .}$ is continuously differentiable  w.r.t both the arguments and:
   \[ \bg_{t}(\bx_{s,t};\ {\bf{0}})={\bf{0}}, \ \ \nabla_{\bx}\bg_{t}(\bx_{s,t};\ {\bf{0}})={\bf{0}}, \ \ \nabla_{\btheta}\bg_{t}(\bx_{s,t};\ {\bf{0}})={\bf{0}},\ \  t=2,\dots l,l+\frac{1}{2},l+1,\dots {T-1}.\]
    \label{two}
\end{enumerate}
\end{proposition}

\begin{proof}{}{}
 Let us compute the states and adjoints in \eqref{set_of} for network $\Omega_{\epsilon}$ corresponding to the first gradient descent iteration. The forward propagation of residual neural network $\Omega_\epsilon$ from $l^{th}$ layer to $(l+1)^{th}$ layer can be  written as:
\begin{subequations}
\begin{gather}
\label{forr_a}
\bx_{s,l+\frac{1}{2}} =\bff_{l+\frac{1}{2}}(\bx_{s,l};\ \epsilon \bphi)= \bx_{s,l} + \bg_{l+\frac{1}{2}}(\bx_{s,l};\ \epsilon \bphi),\\
\bx_{s,l+1} = \bff_{l+1}\LRp{ \bx_{s,l+\frac{1}{2}};\ \Btheta_{l+1}}= \bx_{s,l+\frac{1}{2}} + \bg_{l+1}\LRp{ \bx_{s,l+\frac{1}{2}};\ \Btheta_{l+1}}.
\end{gather}
\label{forr}
\end{subequations}
Now, for $\epsilon=0$ \eqref{forr_a}  gives
\[   \bx_{s,l+\frac{1}{2}} =\bx_{s,l} , \implies \bx_{s,l+1} = \bx_{s,l} +\bg_{l+1}\LRp{ \bx_{s,l} ;\ \Btheta_{l+1}}=\bff_{l+1}\LRp{ \bx_{s,l};\ \Btheta_{l+1}}, \]
where we have used the fact that $\bg_{l+\frac{1}{2}}(\bx_{s,l};\ {\bf{0}})={\bf{0}}$.
Therefore, adding a new layer recovers the forward propagation equations of the original network $\Omega_0$ when $\epsilon = 0$ for the first gradient descent iteration. Now, the backward propagation of the adjoint using \eqref{set_of}  can be written as:
\begin{equation}
\begin{aligned}
   \bp_{s,l+\frac{1}{2}}&= \LRs{\nabla_{\bx}\bff_{l+1}(\bx_{s,l+\frac{1}{2}};\Btheta_{l+1})}^T\bp_{s,l+1}=\bp_{s,l+1} + \LRs{\nabla_{\bx}\bg_{l+1}\LRp{ \bx_{s,l+\frac{1}{2}};\ \Btheta_{l+1}}}^T\bp_{s,l+1} ,\\
   \bp_{s,l} & = \LRs{\nabla_{\bx}\bff_{l+\frac{1}{2}}\LRp{ \bx_{s,l};\ \epsilon\bphi}}^T\bp_{s,l+\frac{1}{2}} =\bp_{s,l+\frac{1}{2}} + \LRs{\nabla_{\bx}\bg_{l+\frac{1}{2}}\LRp{ \bx_{s,l};\ \epsilon\bphi}}^T\bp_{s,l+\frac{1}{2}}.
\end{aligned}
\label{for}
\end{equation}
Again using $ \nabla_{\bx}\bg_{l+\frac{1}{2}}\LRp{ \bx_{s,l};\ \bf{0}}={\bf{0}}$, we conclude
\[\bp_{s,l}=\bp_{s,l+\frac{1}{2}}.\]
Therefore, when $\epsilon = 0$, the adjoint of the perturbed network and the original one are the same for the first gradient descent iteration. 
Finally, the third condition in \cref{two} for the added layer, i.e. $\nabla_{\btheta}\bg_{l+\frac{1}{2}}(\bx_{s,l};\ {\bf{0}})={\bf{0}}$ implies  that $\nabla_{\btheta}\bff_{l+\frac{1}{2}}(\bx_{s,l};\ {\bf{0}})={\bf{0}}$. Therefore,
for $\epsilon= 0$, the gradients for updating the parameters of the added layer given by the last equation in \eqref{set_of} is ${\bf{0}}$, leading to the parameters of the added layer not getting updated at the end of the first gradient descent iteration. Applying the above arguments recursively for subsequent gradient descent iterations concludes the proof.
\end{proof}

\begin{remark}{}{}$ $
\label{act_const}
\begin{enumerate}
    \item Note that for typical neural networks such as FNN's and CNN's,  condition \ref{two} can be easily realized by choosing continuously differentiable activation function $\sigma$ with the property that $\sigma(0)=0,$ and $\sigma'(0)=0$.
    \label{activ_1}
    \item   The activation $\sigma(x)$  for the added layer can be constructed as a linear combination of some existing activation functions, i.e $\sigma(x)=\alpha_1\sigma_1(x)+\sigma_2(x)$, where $\sigma_1(0)=\sigma_2(0)=0$ and $\alpha_1=-\frac{\sigma_2'(0)}{\sigma'_1(0)}$, $\sigma_1'(0)\neq 0$. For instance, the set $\sF=\{\sigma_1,\ \sigma_2 \}$ can be chosen as follows:
\begin{equation}
    \sF=\{ \mathrm{Swish},\ \mathrm{tanh}\},\ \{ \mathrm{Mish},\ \mathrm{tanh}\},\  \{ \mathrm{Swish},\ \mathrm{Mish}\},\ \ etc.
    \label{univer}
\end{equation}
 \label{activ_2}
Note that it can be proved that the constructed function $\sigma(x)$ based on the choices in \eqref{univer}
is universal\footnote{Let $\sN\sN^{\sigma}_{n,m,k}$ represent the class of functions $\real^n \rightarrow \real^m$ described by feedforward neural networks of arbitrary number of hidden layers, each with $k$ neurons and activation
function  $\sigma$. Let $K\subseteq \real^n$ be compact. We say that the activation function $\sigma(x)$ is universal if $\sN\sN^{\sigma}_{n,m,k}$ is dense in $C(K;\real^m)$ with respect to the uniform norm.} by verifying the conditions in \cite{kidger2020universal} or \cite{bui2024unified}.
\end{enumerate}
 
\end{remark}

\subsection{Network topological derivative}
\label{full_topo}

In this section, we define the ``network topological derivative'' for a feed-forward neural network and derive its explicit expression. Note that the admissible perturbation (adding a layer) defined in  \cref{admissible_pert} can be viewed as an infinitesimal change in the neural network topology. 
In order to formally define the ``network topological derivative'' let us first rewrite the loss function \eqref{training_problem}  as follows:
\begin{equation}
     \sJ(\Omega_\epsilon)= \frac{1}{S}\sum_{s=1}^S\Phi\LRp{\Omega_\epsilon\LRp{\bx_{s,0};\ \epsilon\bphi}},
     \label{for_later}
\end{equation}
where $\epsilon \bphi$ is the initialization of the added layer in  $\Omega_\epsilon$ (see \cref{def_pert}).
\begin{definition}[Network topological derivative]
\label{topo_defin}
Consider an admissible perturbation $\Omega_\epsilon$ in \cref{admissible_pert}. We say the loss functional $\sJ$ in \eqref{for_later} admits a network topological derivative\textemdash denoted as $d\sJ(\Omega_0;\ (l,\  \boldsymbol{\phi},\ \sigma))$\textemdash at $\Omega_0\in \sA$ and at the location $l\in \{1,2\dots T-1\}$ along the direction $\boldsymbol{\phi}$  if there exists a  function $q: \real^{+}\mapsto \real^+$ with $\lim\limits_{\epsilon \downarrow  0}q(\epsilon)=0$ such that the following ``topological asymptotic expansion" holds:
\begin{equation}
    \sJ(\Omega_\epsilon)=\sJ(\Omega_0)-q(\epsilon)d\sJ(\Omega_0;\ (l,\  \boldsymbol{\phi},\ \sigma))+o(q(\epsilon)),
    \label{asymptotic_exp}
\end{equation}
where $o(q(\epsilon))$ is the remainder\footnote{The notation $p(\epsilon)=o(q(\epsilon))$ means that $\lim_{\epsilon \rightarrow 0}\frac{p(\epsilon)}{q(\epsilon)}=0$, where $q(\epsilon)\neq 0$ for all $\epsilon \neq 0$.}. \eqref{asymptotic_exp} is equivalent to the existence of the following limit:
\begin{equation}
    d\sJ(\Omega_0;\ (l,\  \boldsymbol{\phi},\ \sigma))=-\lim_{\epsilon \downarrow 0}\frac{\sJ(\Omega_\epsilon)-\sJ(\Omega_0)}{q(\epsilon)}.
    \label{topo_der}
\end{equation}
\end{definition}
\begin{remark}
Note that the `-' sign is used in \eqref{topo_der} to ensure that 
    a positive derivative favors layer addition and vice versa.  
    Since for an  admissible perturbation $\Omega_\epsilon$ in \cref{admissible_pert}, we have $\sJ\LRp{\Omega_{\epsilon}\Big |_{\epsilon=0}}=\sJ\LRp{\Omega_0}$, 
the ``topological asymptotic expansion" in \eqref{asymptotic_exp},  can  be understood as the following Taylor series expansion:
    \begin{equation}
            \sJ(\Omega_\epsilon)=\sJ\LRp{\Omega_\epsilon\Big |_{\epsilon=0}}+ \epsilon\LRs{\frac{d}{d\epsilon}\LRp{\sJ\LRp{\Omega_\epsilon}}}_{\epsilon=0}+ \frac{\epsilon^2}{2}\LRs{\frac{d^2}{d\epsilon^2}\LRp{\sJ\LRp{\Omega_\epsilon}}}_{\epsilon=0}+\dots,
    \label{eq:Taylor}
    \end{equation}
where $\sJ\LRp{\Omega_\epsilon}$ is given by \eqref{for_later}. In \cref{exist_th}  we will unveil a non-trivial connection between 
the  Hamiltonian $H_l$ introduced in \eqref{hamiltonian} with the network topological derivative  in \eqref{topo_der}.
\end{remark}

\begin{theorem}[Existence of network topological derivative]
\label{exist_th}
Assume the conditions in \cref{prop_admissible} and the loss $\sJ$ in \eqref{training_problem}. Further, let $\mathcal{X}_t$ be the set of all possible states $\bx_{s,t}$ reachable\footnote{We say that a state $\bx_{s,t+1}$ is reachable at layer $(t+1)$ if there exists $\btheta \in \Theta$ and  sample $\bx_{s,0}$ such that the constraint in \eqref{training_problem} holds.}  at layer $t$ from all possible  initial sample $\bx_{s,0}$ 
and trainable parameters in $\Theta$.  Assume that
\begin{enumerate}
    \item $\Phi$ has bounded second and third order derivatives on $\mathcal{X}_T$; 
   \label{cond_one}
    \item $\bff_{t+1}(.;\ .)$ has bounded second and third order derivatives on $\mathcal{X}_t \times \Theta_{t+1}$.
    \label{cond_two}
\end{enumerate}
Then, the network topological derivative given by \eqref{topo_der} exists with $q(\epsilon)=\epsilon^2$ and is given by:
\begin{equation}
     d\sJ(\Omega_0;\ (l,\  \boldsymbol{\phi},\ \sigma))=\frac{1}{2}\sum_{s=1}^S \boldsymbol{\phi}^T \ \nabla^2_{\boldsymbol{\theta}} H_l\LRp{\bx_{s,l};\bp_{s,l};\Btheta}\Big |_{\boldsymbol{\theta}={\bf{0}}}\ \boldsymbol{\phi},
     \label{topo_de}
\end{equation}
where $H_l$ is the Hamiltonian defined in \eqref{hamiltonian}\footnote{Note the change in subscript  for the adjoint variable $\bp$ while evaluating the Hamiltonian.}.

\end{theorem}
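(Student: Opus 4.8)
The plan is to read off the coefficients of the Taylor expansion \eqref{eq:Taylor} and match them against the definition \eqref{topo_der} with $q(\epsilon)=\epsilon^2$. The admissibility established in \cref{prop_admissible} already guarantees that all states and adjoints for layers $t\le l$ are untouched by the perturbation, so the entire $\epsilon$-dependence is confined to the layers $t\ge l+\frac12$. I would first use the regularity hypotheses \cref{cond_one} and \cref{cond_two} to argue that $\epsilon\mapsto\sJ(\Omega_\epsilon)$, being a finite composition of the maps $\bff_{t+1}$ and $\Phi$ evaluated along the smooth path $\btheta=\epsilon\bphi$, is three times continuously differentiable near $\epsilon=0$ with a uniformly bounded third derivative on the reachable sets. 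This justifies writing \eqref{eq:Taylor} with an $o(\epsilon^2)$ (indeed $O(\epsilon^3)$) remainder and fixes $q(\epsilon)=\epsilon^2$ once the first-order coefficient is shown to vanish.

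Next I would show the first-order term is zero. Introduce the first-order state sensitivities $\bz_{s,t}:=\frac{d}{d\epsilon}\bx_{s,t}\big|_{\epsilon=0}$. Differentiating the added-layer map \eqref{forr} and using that $\bx_{s,l}$ carries no $\epsilon$-dependence gives the source $\bz_{s,l+\frac12}=\nabla_{\btheta}\bg_{l+\frac12}(\bx_{s,l};\bzero)\,\bphi=\bzero$ by \cref{two}. Since every downstream layer $\bx_{s,t+1}=\bff_{t+1}(\bx_{s,t};\Btheta_{t+1})$ has fixed parameters, the sensitivities obey the homogeneous linear recursion $\bz_{s,t+1}=\nabla_{\bx}\bff_{t+1}\,\bz_{s,t}$, hence $\bz_{s,t}=\bzero$ for all $t\ge l+\frac12$ and in particular $\bz_{s,T}=\bzero$. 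Therefore $\frac{d}{d\epsilon}\sJ\big|_{\epsilon=0}=\frac1S\sum_s\nabla\Phi(\bx_{s,T})^T\bz_{s,T}=0$.

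For the second-order term I would propagate the second sensitivities $\bw_{s,t}:=\frac{d^2}{d\epsilon^2}\bx_{s,t}\big|_{\epsilon=0}$. Because the $\bz_{s,t}$ vanish for $t\ge l+\frac12$, the quadratic terms in the second derivative of the forward map drop out, leaving the same homogeneous linear recursion $\bw_{s,t+1}=\nabla_{\bx}\bff_{t+1}\,\bw_{s,t}$ driven solely by the source whose $i$-th component is $\bphi^T\nabla^2_{\btheta}g^{(i)}_{l+\frac12}(\bx_{s,l};\bzero)\,\bphi$. Unrolling and using $\bz_{s,T}=\bzero$ yields $\frac{d^2}{d\epsilon^2}\sJ\big|_{\epsilon=0}=\frac1S\sum_s\nabla\Phi(\bx_{s,T})^T\,\bw_{s,T}$ with $\bw_{s,T}=\big[\partial\bx_{s,T}/\partial\bx_{s,l+\frac12}\big]\bw_{s,l+\frac12}$. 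The decisive step is the adjoint identity: since $\bp_{s,T}=-\frac1S\nabla\Phi(\bx_{s,T})$ and the backward recursion \eqref{set_of} composes the transposed Jacobians over exactly the same layers, $\big[\partial\bx_{s,T}/\partial\bx_{s,l+\frac12}\big]^T\bp_{s,T}=\bp_{s,l+\frac12}$. Contracting cancels the $\frac1S$ and leaves
\[
\frac{d^2}{d\epsilon^2}\sJ\Big|_{\epsilon=0}=-\sum_s\bphi^T\Big[\textstyle\sum_i p^{(i)}_{s,l+\frac12}\,\nabla^2_{\btheta}g^{(i)}_{l+\frac12}(\bx_{s,l};\bzero)\Big]\bphi.
\]
Finally, because $\bff_{l+\frac12}=\bx_{s,l}+\bg_{l+\frac12}$ implies $\nabla^2_{\btheta}\bff_{l+\frac12}=\nabla^2_{\btheta}\bg_{l+\frac12}$, the bracket is precisely $\nabla^2_{\btheta}H_l\big|_{\btheta=\bzero}$ for the added layer; and since \cref{prop_admissible} gives $\bp_{s,l+\frac12}=\bp_{s,l}$ at $\epsilon=0$, the adjoint index may be rewritten as $\bp_{s,l}$ (the subscript change noted in the footnote). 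Substituting into \eqref{eq:Taylor} and comparing with \eqref{topo_der} gives \eqref{topo_de} with the factor $\frac12$ and the stated sign convention.

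The main obstacle is the uniform remainder control of the first paragraph: one must verify that for small $\epsilon$ the perturbed states remain in a neighborhood of the reachable sets $\mathcal{X}_t$ where the second- and third-derivative bounds of \cref{cond_one}--\cref{cond_two} hold, so that the remainder of \eqref{eq:Taylor} is genuinely $o(\epsilon^2)$ and the limit \eqref{topo_der} exists. The algebra of the remaining steps is routine once the vanishing of the first-order sensitivities is exploited to annihilate the quadratic forward-propagation terms; the only delicate bookkeeping is checking that the forward Jacobian product and the backward adjoint recursion are transposes of one another over identically the same block of layers $l+\frac12,\dots,T$.
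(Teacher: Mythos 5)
Your proposal is correct and reaches \eqref{topo_de} by a genuinely different route from the paper. The paper works entirely with the differences $\delta \bx_{s,t}$, $\delta \bp_{s,t}$ between the perturbed and unperturbed trajectories: it writes the identity $\sum_t \LRp{H_t - \bp_{s,t+1}\cdot\bx_{s,t+1}} = 0$ for both $\Omega_\epsilon$ and $\Omega_0$, subtracts, and splits the result into three terms; it then shows $\delta\bx_{s,t}=\mathcal{O}(\epsilon^2)$ and $\delta\bp_{s,t}=\mathcal{O}(\epsilon^2)$ so the quadratic cross term is $\mathcal{O}(\epsilon^4)$, converts the remaining sums into $\sJ(\Omega_0)-\sJ(\Omega_\epsilon)$ plus Hamiltonian gradient terms that cancel, and finally Taylor-expands $H_l$ in $\btheta$ about $\bzero$. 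You instead compute the first- and second-order forward sensitivities $\bz_{s,t}$, $\bw_{s,t}$ of the states in $\epsilon$, kill the first-order ones at the source using $\nabla_{\btheta}\bg_{l+\frac12}(\cdot;\bzero)=\bzero$, propagate the second-order ones through the homogeneous linear recursion, and contract against $\nabla\Phi(\bx_{s,T})$ via the transposed-Jacobian identity $\LRs{\partial\bx_{s,T}/\partial\bx_{s,l+\frac12}}^T\bp_{s,T}=\bp_{s,l+\frac12}$ to land directly on $\sum_i p^{(i)}_{s,l}\nabla^2_{\btheta}g^{(i)}_{l+\frac12}=\nabla^2_{\btheta}H_l\big|_{\btheta=\bzero}$. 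Your version is the more elementary and transparent one: it makes the content of \cref{coro:Hessian} (vanishing gradient, Hessian as the second coefficient) immediate, at the cost of having to verify explicitly that the forward Jacobian product and the backward adjoint recursion are transposes over the same block of layers. The paper's version keeps the optimal-control bookkeeping in the foreground and yields the intermediate trajectory estimates $\delta\bx_{s,t},\delta\bp_{s,t}=\mathcal{O}(\epsilon^2)$ as a by-product. The remainder-control issue you flag (keeping perturbed states in the region where the derivative bounds of conditions \ref{cond_one}--\ref{cond_two} apply) is treated no more rigorously in the paper, so it is not a gap relative to the paper's own standard.
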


\begin{proof} 
Let us represent the parameters of the perturbed network $\Omega_\epsilon$ as $\btheta^\epsilon$ and the parameters of the original network $\Omega_0$ as  $\btheta^0$. For an admissible perturbation $\Omega_\epsilon$ in \cref{admissible_pert},   $\Omega_0$ is equivalent to  $\Omega_\epsilon\Large |_{\epsilon=0}$. We thus identify $\Omega_0$ with $\Omega_\epsilon\Large |_{\epsilon=0}$. 
By  \cref{perturbedNetworkDef} $\btheta^\epsilon_i=\btheta^0_i,\ \forall i\neq l+\frac{1}{2}$.  Define
\[ \delta \bx_{s,t}=\bx_{s,t}^{{\btheta}^\epsilon}-\bx_{s,t}^{\btheta^0},\quad \delta \bp_{s,t}=\bp_{s,t}^{\btheta^\epsilon}-\bp_{s,t}^{\btheta^0},
\]
and we have
\[  \delta \bx_{s,t}=0,\ \ \forall t\leq l.\]
Applying definition \eqref{hamiltonian} for both $\Omega_{\epsilon}$ and $\Omega_0$ gives
\begin{subequations}
\begin{gather}
\label{hamil_cases_a}
\sum_{t=0}^{T-1} \LRp{H_t\LRp{\bx_{s,t}^{\theta^0}+\delta \bx_{s,t};\ \bp_{s,t+1}^{\btheta^0}+\delta \bp_{s,t+1};\ \btheta_{t+1}^\epsilon}-(\bp_{s,t+1}^{\btheta^0}+\delta \bp_{s,t+1})^T(\bx_{s,t+1}^{\theta^0}+\delta \bx_{s,t+1})}=0,\\
\label{hamil_cases_b}
\sum_{t=0}^{T-1} \LRp{H_t\LRp{\bx_{s,t}^{\theta^0};\ \bp_{s,t+1}^{\btheta^0};\ \btheta_{t+1}^0}-(\bp_{s,t+1}^{\btheta^0})^T(\bx_{s,t+1}^{\theta^0})}=0.
\end{gather}
\label{hamil_cases}
\end{subequations}
Note that the  summation from $t=0$ to $t=T-1$ also includes $t=\LRp{l+\frac{1}{2}}$. Subtracting   \eqref{hamil_cases_b} from \eqref{hamil_cases_a} yields:
\begin{equation}
    \begin{aligned}
  &  \sum_{t=0}^{T-1} H_t\LRp{\bx_{s,t}^{\theta^0}+\delta \bx_{s,t};\ \bp_{s,t+1}^{\btheta^0}+\delta \bp_{s,t+1};\ \btheta_{t+1}^\epsilon}- H_t\LRp{\bx_{s,t}^{\theta^0};\ \bp_{s,t+1}^{\btheta^0};\ \btheta_{t+1}^0}=\\
  &  \LRp{\delta \bp_{s,l+\frac{1}{2}}\cdot \delta \bx_{s,l+\frac{1}{2}}}+\LRp{\bp_{s,l+\frac{1}{2}}^{\btheta^0}\cdot \delta \bx_{s,l+\frac{1}{2}}}\\
   &+\underbrace{\sum_{t=l}^{T-1} \LRp{\delta \bp_{s,t+1}\cdot \delta \bx_{s,t+1}}}_{I}+\underbrace{\sum_{t=l}^{T-1} \LRp{\bp_{s,t+1}^{\btheta^0}\cdot \delta \bx_{s,t+1}}}_{II}+\underbrace{\sum_{t=0}^{T-1} \LRp{\delta \bp_{s,t+1}\cdot \bx_{s,t+1}^{\btheta_0}}}_{III}.
\end{aligned}
\label{decomposition}
\end{equation}
Now let's analyze each term on the right-hand side of \eqref{decomposition}. 

\hspace{-0.8 cm} {\bf{\underline{Analyzing Term I in \eqref{decomposition}}}}

    \[  \delta \bx_{s,l+\frac{1}{2}}=\bff_{l+\frac{1}{2}}\LRp{\bx_{s,l}^{\btheta_\epsilon};\ \epsilon \bphi}-\bff_{l+\frac{1}{2}}\LRp{\bx_{s,l}^{\btheta_0};\ \boldsymbol{0}}.\]
  Assuming that $\bff_{l+\frac{1}{2}}(.;.)$ has bounded second order derivative w.r.t the second argument and noting that $\bx_{s,l}^{\btheta_\epsilon}=\bx_{s,l}^{\btheta_0}$,  Taylor series approximation about $\epsilon=0$ gives:
    \[ \delta \bx_{s,l+\frac{1}{2}}=\epsilon \ \nabla_{\btheta}\bff_{l+\frac{1}{2}}\LRp{\bx_{s,l}^{\btheta_0};\ \boldsymbol{0}}\cdot \bphi+\mathcal{O}(\epsilon^2)=0+\mathcal{O}(\epsilon^2),\]
where, $\nabla_{\btheta}\bff_{l+\frac{1}{2}}$ denotes the derivative of $\bff_{l+\frac{1}{2}}$ w.r.t the second argument. Note that one has $\nabla_{\btheta}\bff_{l+\frac{1}{2}}(\bx_{s,l};\ \boldsymbol{0})={\bf{0}}$ by condition \ref{two} of \cref{prop_admissible}. Therefore, we have:

\[  \delta \bx_{s,l+1}= \bff_{l+1}\LRp{\bx_{s,l+\frac{1}{2}}^{\btheta^0}+\mathcal{O}(\epsilon^2);\ \btheta_{l+1}^\epsilon}-\bff_{l+1}\LRp{\bx_{s,l+\frac{1}{2}}^{\btheta^0};\ \btheta_{l+1}^0}.\]
Now, considering the Taylor expansion about $\bx_{s,l+\frac{1}{2}}^{\btheta^0}$ and assuming that $\bff_{l+1}(.;.)$ has bounded second order derivative w.r.t the first argument, it is easy to see that by Taylor's theorem:
\begin{equation}
    \begin{aligned}
       \delta \bx_{s,l+1}&=\nabla_{\bx}\bff_{l+1}\LRp{\bx_{s,l+\frac{1}{2}}^{\btheta^0};\ \btheta_{l+1}^0}\cdot \mathcal{O}(\epsilon^2)+ \mathcal{O}(\epsilon^4)\\
        &=\LRs{\mathcal{I}+\nabla_{\bx}\bg_{l+1}\LRp{\bx_{s,l+\frac{1}{2}}^{\btheta^0};\ \btheta_{l+1}^0}}\cdot \mathcal{O}(\epsilon^2)+\mathcal{O}(\epsilon^4),
    \end{aligned}
\end{equation}
where we have used the fact that $\btheta_{l+1}^\epsilon=\btheta_{l+1}^0$ and used the ResNet propagation equation in \eqref{forr}. Therefore, we have:
\[ \delta \bx_{s,l+1}=\mathcal{O}(\epsilon^2). \]
Applying the above arguments recursively, one can show that:

\begin{equation}
    \delta \bx_{s,i}=\mathcal{O}(\epsilon^2),\quad \ i=\LRp{l+\frac{1}{2}},(l+1),\dots T.
    \label{taylor_x}
\end{equation}
Now from  \eqref{set_of} we have:
\[\bp_{s,T}^{\btheta^\epsilon}=-\frac{1}{S}\nabla \Phi\LRp{\bx_{s,T}^{\btheta^\epsilon}}, \quad \bp_{s,T}^{\btheta^0}=-\frac{1}{S}\nabla \Phi\LRp{\bx_{s,T}^{\btheta^0}},\]
for networks $\Omega_\epsilon$ and $\Omega_0$ respectively. Therefore,
\begin{equation}
    \begin{aligned}
     \delta \bp_{s,T}&= \bp_{s,T}^{\btheta^\epsilon}-\bp_{s,T}^{\btheta^0}\\
        &= \frac{1}{S}\LRs{\nabla \Phi\LRp{\bx_{s,T}^{\btheta^0}}-\nabla \Phi\LRp{\bx_{s,T}^{\btheta^\epsilon}}}=\frac{1}{S}\LRs{\nabla \Phi\LRp{\bx_{s,T}^{\btheta^0}}-\nabla\Phi\LRp{\bx_{s,T}^{\btheta^0}+\delta \bx_{s,T}}}\\
        &=\frac{1}{S}\LRs{\nabla \Phi\LRp{\bx_{s,T}^{\btheta^0}}-\nabla \Phi\LRp{\bx_{s,T}^{\btheta^0}+\mathcal{O}(\epsilon^2)}}.
    \end{aligned}
\end{equation}
Now assuming that $\Phi$ has bounded third order derivative,  Taylor series approximation about $\bx_{s,T}^{\btheta^0}$ gives:
\[ \delta \bp_{s,T}= \mathcal{O}(\epsilon^2). \]
Further, from  \eqref{set_of} we also have:
\begin{equation}
    \begin{aligned}
        \bp_{s,T-1}^{\btheta^\epsilon}&=\LRs{\nabla_{\bx}\bff_{T}\LRp{\bx_{s,T-1}^{\btheta^\epsilon};\btheta_{T}^\epsilon}}^T\bp_{s,T}^{\btheta^\epsilon}=\LRs{\nabla_{\bx}\bff_{T}\LRp{\bx_{s,T-1}^{\btheta^0}+\mathcal{O}(\epsilon^2);\ \btheta_{T}^0}}^T\LRp{\bp_{s,T}^{\btheta^0}+\mathcal{O}(\epsilon^2)},\\
        \bp_{s,T-1}^{\btheta^0}&=\LRs{\nabla_{\bx}\bff_{T}\LRp{\bx_{s,T-1}^{\btheta^0};\btheta_{T}^0}}^T\bp_{s,T}^{\btheta^0}.
    \end{aligned}
    \label{t_one}
\end{equation}
Again, under the assumption that $\bff_T(.;\ .)$ has bounded third order derivative  w.r.t the first argument, it is easy to show that:
\[ \delta  \bp_{s,T-1}= \bp_{s,T-1}^{\btheta^\epsilon}-\bp_{s,T-1}^{\btheta^0}=\mathcal{O}(\epsilon^2),\]
where we have considered the Taylor expansion of $\nabla_{\bx}\bff_{T}$ about $\bx_{s,T-1}^{\btheta^0}$. Now using the above arguments recursively, one can show that:
\begin{equation}
    \delta \bp_{s,i}=\mathcal{O}(\epsilon^2),\quad \ i=1,\dots T.
    \label{taylor_p}
\end{equation}
Therefore, from \eqref{taylor_x} and \eqref{taylor_p} we have:
\begin{equation}
    {\delta \bp_{s,t}\cdot \delta \bx_{s,t}}=\mathcal{O}(\epsilon^4),\quad \ t=\LRp{l+\frac{1}{2}},(l+1),\dots T.
    \label{five_t}
\end{equation}
\end{proof}

\hspace{-0.8 cm} {\bf{\underline{Analyzing Term II and III in \eqref{decomposition}}}}
\begin{equation}
    \begin{aligned}
       &  \sum_{t=l}^{T-1} \LRp{\bp_{s,t+1}^{\btheta^0}\cdot \delta \bx_{s,t+1}}+\sum_{t=0}^{T-1} \LRp{\delta \bp_{s,t+1}\cdot \bx_{s,t+1}^{\btheta_0}}=\\ 
       & \sum_{t=l+1}^{T-1} \LRp{\bp_{s,t}^{\btheta^0}\cdot \delta \bx_{s,t}}+\bp_{s,T}^{\btheta^0} \cdot \delta \bx_{s,T}+ \sum_{t=0}^{T-1} \LRp{\delta \bp_{s,t+1}\cdot \bx_{s,t+1}^{\btheta_0}}. 
    \end{aligned}
     \label{new_t}
\end{equation}
Now from \eqref{set_of} note that:
\[ \bp_{s,T}^{\btheta^0} \cdot \delta \bx_{s,T}=-\frac{1}{S}\nabla \Phi\LRp{\bx_{s,T}^{\btheta^0}}\cdot \delta \bx_{s,T}= -\frac{1}{S}\LRs{\Phi\LRp{\bx_{s,T}^{\btheta^\epsilon}}-\Phi\LRp{\bx_{s,T}^{\btheta^0}}}+\mathcal{O}\LRp{\norm{\delta \bx_{s,T}}_2^2}.\]
where we have assumed $\Phi$ has bounded second order derivative. Therefore,
\begin{equation}
\sum_{s=1}^S \bp_{s,T}^{\btheta^0} \cdot \delta \bx_{s,T}=\sJ(\Omega_0)-\sJ(\Omega_\epsilon)+\mathcal{O}(\epsilon^4).
\label{two_t}
\end{equation}
Further, the remaining terms in \eqref{new_t} can be written in terms of the Hamiltonian as, 
\begin{equation}
  \begin{aligned}
    &\sum_{t=l+1}^{T-1} \LRp{\bp_{s,t}^{\btheta^0}\cdot \delta \bx_{s,t}}+\sum_{t=0}^{T-1} \LRp{\delta \bp_{s,t+1}\cdot \bx_{s,t+1}^{\btheta_0}}=\\
   & \sum_{t=l+1}^{T-1}\nabla_{\bx} H_t\LRp{\bx_{s,t}^{\btheta^0}; \bp_{s,t+1}^{\btheta^0};\btheta_{t+1}^0}\cdot \delta \bx_{s,t}+\sum_{t=0}^{T-1} \nabla_{\bp} H_t\LRp{\bx_{s,t}^{\btheta^0}; \bp_{s,t+1}^{\btheta^0};\btheta_{t+1}^0}\cdot \delta \bp_{s,t+1}.
\end{aligned}  
\label{three_t}
\end{equation}
Finally, substituting \eqref{two_t}, \eqref{three_t}, \eqref{five_t} in \eqref{decomposition} and summing over all the training data points we have:
\begin{equation}
   \begin{aligned}
   & \sJ(\Omega_0)-\sJ(\Omega_\epsilon) +\sum_{s=1}^S\sum_{t=l+1}^{T-1}\nabla_{\bx} H_t\LRp{\bx_{s,t}^{\btheta^0}; \bp_{s,t+1}^{\btheta^0};\btheta_{t+1}^0}\cdot \delta \bx_{s,t}\\
    &+\sum_{s=1}^S\sum_{t=0}^{T-1}\nabla_{\bp} H_t\LRp{\bx_{s,t}^{\btheta^0}; \bp_{s,t+1}^{\btheta^0};\btheta_{t+1}^0}\cdot \delta \bp_{s,t+1}+\sum_{s=1}^S\nabla_{\bx} H_{l+\frac{1}{2}}\LRp{\bx_{s,{l+\frac{1}{2}}}^{\btheta^0}, \bp_{s,l+1}^{\btheta^0},\btheta_{l+1}^0}\cdot \delta \bx_{s,l+\frac{1}{2}}+\mathcal{O}(\epsilon^4)\\
    &=\sum_{s=1}^S\sum_{t=0}^{T-1} \LRs{H_t\LRp{\bx_{s,t}^{\theta^0}+\delta \bx_{s,t};\ \bp_{s,t+1}^{\btheta^0}+\delta \bp_{s,t+1};\ \btheta_{t+1}^\epsilon}- H_t\LRp{\bx_{s,t}^{\theta^0};\ \bp_{s,t+1}^{\btheta^0};\ \btheta_{t+1}^0}}.
\end{aligned} 
\label{final_one}
\end{equation}
The R.H.S of \eqref{final_one} can be simplified by considering the Taylor series expansion about $\LRp{\bx_{s,t}^{\btheta^0},\ \bp_{s,t}^{\btheta^0},\ \btheta_{t+1}^\epsilon}$as:
\begin{align*}
    & H_t\LRp{\bx_{s,t}^{\btheta^0}+\delta \bx_{s,t};\ \bp_{s,t+1}^{\btheta^0}+\delta \bp_{s,t+1};\ \btheta_{t+1}^\epsilon}- H_t\LRp{\bx_{s,t}^{\theta^0};\ \bp_{s,t+1}^{\btheta^0};\ \btheta_{t+1}^0}=\\
     & H_t\LRp{\bx_{s,t}^{\btheta^0};\ \bp_{s,t+1}^{\btheta^0};\ \btheta_{t+1}^\epsilon}-H_t\LRp{\bx_{s,t}^{\theta^0};\ \bp_{s,t+1}^{\btheta^0};\ \btheta_{t+1}^0}+\nabla_{\bx} H_t(\bx_{s,t}^{\btheta^0}; \bp_{s,t+1}^{\btheta^0};\btheta_{t+1}^\epsilon)\cdot \delta \bx_{s,t}+\\
     &\nabla_{\bp} H_t\LRp{\bx_{s,t}^{\btheta^0}; \bp_{s,t+1}^{\btheta^0};\btheta_{t+1}^\epsilon}\cdot \delta \bp_{s,t+1}+\mathcal{O}(\epsilon^4).
\end{align*}
Substituting the above in \eqref{final_one} and simplifying  (note that $\btheta_{t}^0=\btheta_{t}^\epsilon, \ \forall t\neq l+\frac{1}{2}$) we have:
\begin{equation}
    \begin{aligned}
        \sJ(\Omega_0)-\sJ(\Omega_\epsilon)&=\sum_{s=1}^S H_l\LRp{\bx_{s,l}^{\btheta^0};\ \bp_{s,l+\frac{1}{2}}^{\btheta^0};\ \epsilon \bphi}-H_l\LRp{\bx_{s,l}^{\theta^0};\ \bp_{s,l+\frac{1}{2}}^{\btheta^0};\ {\bf{0}}}+\mathcal{O}(\epsilon^4)\\
        &+ \LRs{\nabla_{\bx} H_{l}\LRp{\bx_{s,l}^{\btheta^0}, \bp_{s,l+\frac{1}{2}}^{\btheta^0},\epsilon \bphi}-\nabla_{\bx} H_{l}\LRp{\bx_{s,l}^{\btheta^0}, \bp_{s,l+\frac{1}{2}}^{\btheta^0},{\bf{0}}}}\cdot \delta \bx_{s,l}\\
       & +\LRs{\nabla_{\bp} H_{l}\LRp{\bx_{s,l}^{\btheta^0}, \bp_{s,l+\frac{1}{2}}^{\btheta^0},\epsilon \bphi}-\nabla_{\bp} H_{l}\LRp{\bx_{s,l}^{\btheta^0}, \bp_{s,l+\frac{1}{2}}^{\btheta^0},{\bf{0}}}}\cdot \delta \bp_{s,l}.
    \end{aligned}
    \label{tay}
\end{equation}
Now considering the Taylor expansion about $\epsilon=0$ for the last two terms in \eqref{tay} and noting that $ \nabla_{\bx\btheta}H_{l}\LRp{\bx_{s,l}^{\btheta^0}, \bp_{s,l+\frac{1}{2}}^{\btheta^0},{\bf{0}}}={\bf{0}}$ and $\nabla_{\bp\btheta}H_{l}\LRp{\bx_{s,l}^{\btheta^0}, \bp_{s,l+\frac{1}{2}}^{\btheta^0},{\bf{0}}}={\bf{0}}$ due to assumption \ref{two},
it follows that the last two terms in \eqref{tay} are of order  $\mathcal{O}(\epsilon^4)$. Therefore,  we finally have:
\[\sJ(\Omega_0)-\sJ(\Omega_\epsilon)=\sum_{s=1}^S H_l\LRp{\bx_{s,l}^{\btheta^0};\ \bp_{s,l}^{\btheta^0};\ \epsilon \bphi}-H_l\LRp{\bx_{s,l}^{\theta^0};\ \bp_{s,l}^{\btheta^0};\ {\bf{0}}}+\mathcal{O}(\epsilon^4). 
\]
where we have also used the fact $\bp_{s,l+\frac{1}{2}}^{\btheta_0}=\bp_{s,l}^{\btheta_0}$ since the added layer with zero weights and biases acts as a message-passing layer due to \cref{prop_admissible}. Finally, applying Taylor series expansion about $\epsilon=0$ and assuming that $\bff_t(.;\ .)$  has bounded third order derivative w.r.t second argument we have:
\begin{equation}
    \sJ(\Omega_0)-\sJ(\Omega_\epsilon)=\sum_{s=1}^S \LRp{\epsilon \nabla_{\btheta} H_l\Big |_{\boldsymbol{\theta}=0} \cdot \bphi+\frac{\epsilon^2}{2}\boldsymbol{\phi}^T \ \nabla^2_{\boldsymbol{\theta}} H_{l}\Big |_{\boldsymbol{\theta}=0}\ \boldsymbol{\phi}  +\mathcal{O}(\epsilon^3)}.
    \label{connect}
    \end{equation}
Note that $\nabla_{\btheta} H_l\Big |_{\boldsymbol{\theta}=0}=\bf{0}$  by condition \ref{two} of  \cref{prop_admissible}. Therefore, the topological derivative is computed as:
\[   d\sJ(\Omega_0;\ (l,\ \boldsymbol{\phi},\ \sigma))=-\lim_{\epsilon \downarrow 0}\frac{\sJ(\Omega_\epsilon)-\sJ(\Omega_0)}{\epsilon^2}=\frac{1}{2}\sum_{s=1}^S \boldsymbol{\phi}^T \ \nabla^2_{\boldsymbol{\theta}} H_{l}\Big |_{\boldsymbol{\theta}=0}\ \boldsymbol{\phi},\]
thereby concluding the proof.

\begin{corollary}
Assume the conditions in  \cref{exist_th}. For the expansion \cref{eq:Taylor}  there hold:
      \begin{itemize}
        \item $\LRs{\frac{d}{d\epsilon}\LRp{\sJ\LRp{\Omega_\epsilon}}}_{\epsilon=0}= \nabla_{\epsilon\phi}\sJ\LRp{\Omega_\epsilon}\Big |_{\epsilon=0} \cdot \boldsymbol{\phi} = -\sum\limits_{s=1}^S\nabla_{\btheta} H_l\Big |_{\boldsymbol{\theta}=0} \cdot \bphi=        
        0$, 
        \item  $\frac{1}{2} \LRs{\frac{d^2}{d\epsilon^2}\LRp{\sJ\LRp{\Omega_\epsilon}}}_{\epsilon=0}        = 
        \frac{1}{2}\boldsymbol{\phi}^T\nabla^2_{\epsilon\phi}\sJ\LRp{\Omega_\epsilon}\Big|_{\epsilon=0}\boldsymbol{\phi}=-\frac{1}{2}\sum \limits_{s=1}^S \boldsymbol{\phi}^T \ \nabla^2_{\boldsymbol{\theta}} H_{l}\Big |_{\boldsymbol{\theta}=0}\ \boldsymbol{\phi}$, and        \item $d\sJ(\Omega_0;\ (l,\ \boldsymbol{\phi},\ \sigma))       = 
        -\frac{1}{2}\boldsymbol{\phi}^T\nabla^2_{\epsilon\phi}\sJ\LRp{\Omega_\epsilon}\Big|_{\epsilon=0}\boldsymbol{\phi}=\frac{1}{2}\sum \limits_{s=1}^S \boldsymbol{\phi}^T \ \nabla^2_{\boldsymbol{\theta}} H_{l}\Big |_{\boldsymbol{\theta}=0}\ \boldsymbol{\phi}$.
    \end{itemize}
    \label{coro:Hessian}
\end{corollary}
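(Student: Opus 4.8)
The plan is to read off all three identities directly from the expansion \eqref{connect} already established in the proof of \cref{exist_th}, combined with the chain rule and the definition \eqref{topo_der} of the topological derivative. No new estimates are required; the work is entirely a matching of Taylor coefficients, so I would present the argument as a short corollary to the theorem rather than a fresh computation.

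First I would record the chain-rule identity. Since $\sJ(\Omega_\epsilon)$ depends on $\epsilon$ only through the parameter vector $\epsilon\bphi$ of the inserted layer, I regard $\sJ(\Omega_\epsilon)$ as a function of the added-layer parameters evaluated at $\btheta = \epsilon\bphi$. Differentiating in $\epsilon$ then gives $\tfrac{d}{d\epsilon}\sJ(\Omega_\epsilon) = \nabla_{\btheta}\sJ(\Omega_\epsilon)\cdot\bphi$ and $\tfrac{d^2}{d\epsilon^2}\sJ(\Omega_\epsilon) = \bphi^{T}\nabla^2_{\btheta}\sJ(\Omega_\epsilon)\,\bphi$, where the gradient and Hessian are taken with respect to those parameters. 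Evaluating at $\epsilon=0$ produces the first equality in the first bullet and the first equality in the second bullet. The bounded-derivative hypotheses on $\Phi$ and $\bff$ in \cref{exist_th} guarantee that $\epsilon\mapsto\sJ(\Omega_\epsilon)$ is twice differentiable at the origin, so the Taylor expansion \eqref{eq:Taylor} is genuine and its coefficients are well defined.

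Next I would match coefficients. Rearranging \eqref{connect} and using admissibility ($\sJ(\Omega_\epsilon)\big|_{\epsilon=0}=\sJ(\Omega_0)$) yields
\[
\sJ(\Omega_\epsilon) = \sJ(\Omega_0) - \epsilon\sum_{s=1}^S \nabla_{\btheta}H_l\big|_{\btheta=0}\cdot\bphi - \frac{\epsilon^2}{2}\sum_{s=1}^S \bphi^{T}\nabla^2_{\btheta}H_l\big|_{\btheta=0}\,\bphi + \mathcal{O}(\epsilon^3).
\]
Comparing the coefficients of $\epsilon$ and $\epsilon^2$ with \eqref{eq:Taylor}, uniqueness of the Taylor coefficients gives the middle equality of the first bullet and the last equality of the second bullet. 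Condition \ref{two} of \cref{prop_admissible} forces $\nabla_{\btheta}H_l\big|_{\btheta=0}=\mathbf{0}$, so the first-order coefficient vanishes, finishing the first bullet.

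Finally, for the third bullet I would invoke \eqref{topo_der} with $q(\epsilon)=\epsilon^2$: since the $\epsilon$-linear term is absent, $\sJ(\Omega_\epsilon)-\sJ(\Omega_0) = \tfrac{\epsilon^2}{2}\big[\tfrac{d^2}{d\epsilon^2}\sJ(\Omega_\epsilon)\big]_{\epsilon=0}+o(\epsilon^2)$, so the limit in \eqref{topo_der} selects exactly $-\tfrac12\bphi^{T}\nabla^2_{\btheta}\sJ(\Omega_\epsilon)\big|_{\epsilon=0}\bphi$; combining with the second bullet and the theorem's formula \eqref{topo_de} gives both stated representations. The only delicate point, which I would state explicitly, is the twice-differentiability of $\epsilon\mapsto\sJ(\Omega_\epsilon)$ at $\epsilon=0$ that legitimizes the coefficient-matching step; this is the substance secured by \cref{exist_th}, and once it is granted the corollary is immediate.
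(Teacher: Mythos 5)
Your proposal is correct and follows essentially the same route as the paper, which proves the corollary by comparing the coefficients of the expansion \cref{connect} with the Taylor series \cref{eq:Taylor} and using $\sJ\LRp{\Omega_\epsilon}\Big|_{\epsilon=0}=\sJ\LRp{\Omega_0}$. Your additional remarks on the chain rule and on the twice-differentiability of $\epsilon\mapsto\sJ(\Omega_\epsilon)$ merely make explicit what the paper leaves implicit in that comparison.
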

\begin{proof}
    Comparing equations \cref{connect} and \cref{eq:Taylor} and noting that $\sJ\LRp{\Omega_{\epsilon}\Big |_{\epsilon=0}}=\sJ\LRp{\Omega_0}$ concludes the proof.
    \end{proof}
    
As can be seen from \cref{coro:Hessian}, the directional derivative along the direction $\boldsymbol{\phi}$ of the loss function $\sJ$ at $\boldsymbol{0}$ vanishes. That is, when the newly added layer is a message passing layer, it does not affect the loss function, and this is consistent with \cref{prop_admissible}. Ignoring the factor $1/2$, we can also see that that network derivative is the negative of the Hessian of the loss function at $\boldsymbol{0}$ acting on $\boldsymbol{\phi}$ both on the left and the right. \cref{coro:Hessian} was also the reason we  introduced the generalized notion of network derivative in \cref{topo_der} that allows us to deploy the Hessian as the measure of sensitivity when the gradient vanishes.

\subsection{Derivation of our adaptation algorithm}
\label{deriv_algo}
We now exploit the derived closed-form expression for the network topological derivative \eqref{topo_de} to devise a greedy algorithm for architecture adaptation. Given the network derivative in \cref{topo_de}, the criterion is obvious: {\bf we add a new layer at the depth $l^*$ if the network derivative there is the largest} as that will incur the greatest decrease of the loss function according to \cref{asymptotic_exp}. In fact, \eqref{topo_de} helps us determine not only the best location $l^*$ to add a new layer but also an appropriate initialization for the training process when the new layer is added. 

To begin, we note that the maximal network derivative at a location $l$ and the corresponding direction $\Phi_l$ (and hence the most appropriate initialization of the newly added layer, has it been added) are determined by
\begin{equation}
    \begin{aligned}
       & \Lambda_l= \max_{\bphi} \LRp{d\sJ(\Omega_0;\ (l,\ \bphi,\ \sigma))},\quad s.t. \norm{\bphi}_2^2=1,\\
       & \Phi_l=\underset{\bphi}{\operatorname{argmax}}  \LRp{d\sJ(\Omega_0;\ (l,\ \bphi,\ \sigma))},\quad s.t. \norm{\bphi}_2^2=1,
    \end{aligned}
    \label{quant}
\end{equation}
where we have considered initializing the new layer with unit direction $\bphi$ 
to avoid an arbitrary scaling in \eqref{quant}.  
{\it{ A new layer is added at the location $l^*=\argmax\limits_l \{\Lambda_l\}_{l=1}^{T-1}$.}}
It is clear that $\Lambda_l$ is the maximum eigenvalue and $\Phi_l$ is the corresponding eigenvector to the following eigenvalue problem:
\begin{equation}
    Q_l\bphi=\lambda \bphi,\quad Q_l=\frac{1}{2}\sum_{s=1}^S \nabla^2_{\btheta} H_l\LRp{\bx_{s,l};\bp_{s,l};\Btheta}\Big |_{\btheta=0}= -\frac{1}{2}\nabla^2_{\epsilon\boldsymbol{\phi}}\sJ\LRp{\Omega_\epsilon}\Big|_{\epsilon=0}.
    \label{eigen_matrixx}
\end{equation}

\begin{theorem}
    Consider adding a new layer with weights/biases $\epsilon{\bphi}$ at depth $l$. Suppose $\Lambda_l$ in \cref{quant} is positive,
    and the Hessian $\nabla^2_{\epsilon\boldsymbol{\phi}}\sJ\LRp{\Omega_\epsilon}$ is locally Lipschitz at $\epsilon = {0}$ in the neighborhood of radius $\frac{2\Lambda_l}{L}$ with Lipschitz constant $L > 0$. If $0 < \epsilon < \frac{2\Lambda_l}{L}$, then initializing the newly added layer with $\epsilon \Phi_l$ decreases the loss function, i.e., $\sJ\LRp{\Omega_\epsilon} < \sJ\LRp{\Omega_0}$.
\end{theorem}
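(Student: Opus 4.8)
The plan is to reduce the statement to a one-dimensional Taylor argument along the ray $\epsilon \mapsto \epsilon\Phi_l$ and to control the second-order remainder using the Lipschitz hypothesis on the Hessian. First I would introduce the scalar function $g(\epsilon) := \sJ(\Omega_\epsilon)$, where the layer added at depth $l$ carries parameters $\epsilon\Phi_l$ with $\norm{\Phi_l}_2 = 1$. By \cref{coro:Hessian} the first derivative vanishes at the origin, $g'(0) = \nabla_{\btheta}\sJ\LRp{\Omega_\epsilon}\big|_{\epsilon=0}\cdot\Phi_l = 0$, and the second derivative there equals $g''(0) = \Phi_l^T\nabla^2_{\btheta}\sJ\LRp{\Omega_\epsilon}\big|_{\epsilon=0}\Phi_l = -2\Lambda_l$, the last equality following from $\Lambda_l = d\sJ(\Omega_0;(l,\Phi_l,\sigma)) = -\tfrac{1}{2}\Phi_l^T\nabla^2_{\btheta}\sJ\LRp{\Omega_\epsilon}\big|_{\epsilon=0}\Phi_l$ in \cref{coro:Hessian}. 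The chain-rule identities $g'(\epsilon) = \nabla_{\btheta}\sJ\cdot\Phi_l$ and $g''(\epsilon) = \Phi_l^T\nabla^2_{\btheta}\sJ\LRp{\Omega_\epsilon}\Phi_l$ are exactly the quantities appearing in \cref{coro:Hessian}.

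Next, since the bounded-derivative assumptions of \cref{exist_th} guarantee that $g$ is twice continuously differentiable, I would invoke Taylor's theorem with the Lagrange form of the remainder: for each $\epsilon>0$ there is $\xi\in(0,\epsilon)$ with
\[
g(\epsilon) = g(0) + \epsilon\,g'(0) + \tfrac{\epsilon^2}{2}\,g''(\xi) = \sJ(\Omega_0) + \tfrac{\epsilon^2}{2}\,g''(\xi).
\]
It then suffices to show $g''(\xi)<0$. Subtracting the value at the origin and using Cauchy--Schwarz together with the Lipschitz hypothesis gives
\[
g''(\xi) - g''(0) = \Phi_l^T\LRp{\nabla^2_{\btheta}\sJ\LRp{\Omega_\xi} - \nabla^2_{\btheta}\sJ\LRp{\Omega_0}}\Phi_l \le \norm{\nabla^2_{\btheta}\sJ\LRp{\Omega_\xi} - \nabla^2_{\btheta}\sJ\LRp{\Omega_0}} \le L\xi,
\]
where I use $\norm{\Phi_l}_2=1$ and the fact that the perturbed parameter $\xi\Phi_l$ has norm $\xi$, which lies inside the Lipschitz neighborhood since $\xi<\epsilon<\tfrac{2\Lambda_l}{L}$. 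Hence $g''(\xi) \le -2\Lambda_l + L\xi < -2\Lambda_l + L\epsilon < 0$ precisely when $\epsilon<\tfrac{2\Lambda_l}{L}$, so $\sJ(\Omega_\epsilon) - \sJ(\Omega_0) = \tfrac{\epsilon^2}{2}\,g''(\xi) < 0$, which is the claim.

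The main obstacle is conceptual rather than computational: one must pin down what ``locally Lipschitz at $\epsilon=0$'' means and verify that the intermediate point from the Taylor argument stays inside the stated neighborhood. Because $\Phi_l$ is a unit vector, the perturbation $\epsilon\Phi_l$ has Euclidean norm $\epsilon$, so the Lipschitz estimate in the added-layer parameters and the estimate in $\epsilon$ coincide, and the radius $\tfrac{2\Lambda_l}{L}$ is exactly the threshold beyond which the sign of $g''$ can no longer be guaranteed. A secondary technical point is justifying the existence and continuity of $g''$ on $[0,\epsilon]$, which follows from the same bounded-derivative hypotheses of \cref{exist_th} already used to derive \eqref{connect}; positivity of $\Lambda_l$ is what forces the quadratic term to be strictly decreasing.
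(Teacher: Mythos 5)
Your proposal is correct and follows essentially the same route as the paper's own proof: a second-order Taylor expansion with Lagrange remainder along the ray $\epsilon\mapsto\epsilon\Phi_l$, identification of the quadratic term at the origin with $-2\Lambda_l$ via \cref{coro:Hessian}, and a Lipschitz bound on $\Phi_l^T\LRs{\nabla^2_{\boldsymbol{\theta}}\sJ\LRp{\Omega_\xi}-\nabla^2_{\boldsymbol{\theta}}\sJ\LRp{\Omega_0}}\Phi_l$ to keep the remainder strictly negative for $\epsilon<\tfrac{2\Lambda_l}{L}$. Your write-up merely makes explicit the one-dimensional reduction and the smoothness justification that the paper leaves implicit.
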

\begin{proof}
    Terminating the Taylor expansion \cref{eq:Taylor} at the second order term, we have
    \[
    \sJ(\Omega_\epsilon)=\sJ\LRp{\Omega_0}+ \frac{\epsilon^2}{2}{\Phi_l}^T\nabla^2_{\epsilon\boldsymbol{\phi}}\sJ\LRp{\Omega_\epsilon}\Big |_{\epsilon=\gamma}{\Phi_l},
    \]
    for some $0 \le \gamma \le \epsilon$. It follows that
    \begin{multline*}
        \sJ(\Omega_\epsilon)=\sJ\LRp{\Omega_0} - \Lambda_l\epsilon^2  + \frac{\epsilon^2
        }{2}{\Phi_l}^T\LRs{\nabla^2_{\epsilon\boldsymbol{\phi}}\sJ\LRp{\Omega_\epsilon}\Big |_{\epsilon=\gamma}-\nabla^2_{\epsilon\boldsymbol{\phi}}\sJ\LRp{\Omega_\epsilon}\Big |_{\epsilon=0}}{\Phi_l} \\ \le 
        \sJ\LRp{\Omega_0} - \Lambda_l\epsilon^2  + \frac{\gamma L\epsilon^2
        }{2} <  \sJ\LRp{\Omega_0}.
    \end{multline*}
\end{proof}

\begin{remark}{}{}
   We have shown that if $\Lambda_l$, given by \eqref{quant}, is  greater than $0$, then adding a new layer at depth $l$  guarantees a decrease in the loss for sufficiently small $\epsilon$. The question is when  $\Lambda_l > 0$?  \cref{coro:Hessian} shows that $\Lambda_l$ is half of the negative of the smallest eigenvalue value of the  Hessian $\nabla^2_{\epsilon\boldsymbol{\phi}}\sJ\LRp{\Omega_\epsilon}\Big|_{\epsilon=0}$. Thus, the smallest eigenvalue needs to be negative. In that case, initializing the newly added layer with $\epsilon \Phi_l$ corresponds to moving along a negative curvature direction, and thus decreasing the loss. If the smallest eigenvalue is non-negative, and thus $\Lambda_l$ is non-positive, we do not add a new layer at the depth $l$, as it would not improve the loss function.
\end{remark}
\begin{remark}
Note that if the geometric multiplicity of $\Lambda_l$ (the maximum eigenvalue in \eqref{quant}) is greater than 1, then one may choose  $\Phi_l$ (eigenvector in \eqref{quant}) corresponding to any eigenvalue as initialization for the new layer since all these initialization gives the same topological derivative.
\end{remark}

Our algorithm starts by training a small network for some $E_e$ epochs. Then, we compute the quantities in \eqref{quant}  for each layer $l$. At the $l^*$ location where the topological derivative is highest and positive, we insert a new layer there. That is, we add a new layer at the depth at which the loss function is most topologically sensitive. Our formulation \eqref{eigen_matrixx} also suggests that the initial value of the parameters of the newly added layer should be $\epsilon \Phi_l$. 
The new network $\Omega_\epsilon$ is trained again for $E_e$ epochs and the procedure is repeated until a suitable termination criteria is satisfied (see \cref{Algo_full}). The procedure in the case of a fully-connected network is provided in  \cref{Algo_full}. 

\subsection{Application to fully-connected neural network}

In this section, we derive the expression for the topological derivative $ d\sJ(\Omega_0;\ (l,\  \boldsymbol{\phi},\ \sigma))$ for a fully connected network. The Hamiltonian $H_l$ for the $l^{th}$  layer is written as (refer equation \eqref{hamiltonian}):
\begin{equation}
    H_l\LRp{\bx_{s,l};\bp_{s,l};\bW_{l+1};\bb_{l+1} }=\bp_{s,l}\cdot \LRs{\bx_{s,l}+\sigma(\bW_{l+1}^T \bx_{s,l}+\bb_{l+1})},
    \label{hamiltonian_full}
\end{equation}
where,  $\bW_{l+1}$ and $\bb_{l+1}$ denote the weights and biases. Further, without loss of generality let's assume $\bb_{l+1}={\bf{0}}$ 
for the hidden layers and vectorized parameters (weights) as $\btheta_{l+1}=\LRs{\bW_{l+1}^{11},\dots \bW_{l+1}^{n1}, \bW_{l+1}^{12},\dots \bW_{l+1}^{n2},\ \bW_{l+1}^{1N_h},\dots \bW_{l+1}^{n n}}$. Then,  the matrix $\bQ_l$ in \eqref{eigen_matrixx} can be explicitly written as
\begin{equation}
    \bQ_l=\frac{1}{2}\begin{bmatrix}
   \sum_{s=1}^S\bp_{s,l}^{(1)}\bx_{s,l}\bx_{s,l}^T\sigma''(0) & {\bf{0}}&{\bf{0}}  & \dots \\ 
  {\bf{0}} &  \sum_{s=1}^S\bp_{s,l}^{(2)}\bx_{s,l}\bx_{s,l}^T\sigma''(0)  & {\bf{0}} & \dots \\ 
  \vdots &  &  \ddots &  \\ 
  {\bf{0}} & \dots &  {\bf{0}} &  \sum_{s=1}^S\bp_{s,l}^{(n)}\bx_{s,l}\bx_{s,l}^T\sigma''(0)  
 \end{bmatrix},
 \label{block_diagonal}
\end{equation}
where $\bp_{s,l+1}^{(r)}$ denotes the $r^{th}$ component of the vector. Therefore, in the case of a fully connected network, $\bQ_l$ turns out to be a block-diagonal matrix. This structure can be further exploited to devise a strategy to choose the best subset of neurons in the new layer to activate in  \cref{select_neuron}. 

\begin{remark}[Assumptions on activation functions and their practical implications]
   { Note that \cref{activ_1} in \cref{act_const} gives conditions to be satisfied by the activation function $\sigma(x)$ in our framework. \cref{block_diagonal} shows that $\sigma''(0)\neq 0$ is an additional necessary condition to be satisfied for ensuring that $\bQ_l\neq {\bf{0}}$. There are several choices for $\sigma(x)$ that satisfies this condition. In particular, the choice of constructed activation functions mentioned in \cref{activ_2} of \cref{act_const} automatically satisfies this constraint. In \cref{poisson_sec}, \cref{stat_po}, we conducted a neural architecture search (NAS) experiment comparing the performance of our newly designed activation function (linear combination of Swish and tanh) with popular activation functions such as   ReLU and tanh. Our  results summarized in Table 1 shows that our activation function performs on par with, and in some cases slightly better than, ReLU and tanh activations.
Further, when applying our approach to fine-tuning or transfer learning, as demonstrated in \cref{trans_lear}, it is sufficient for the activation function in the newly added layer to satisfy the above constraints. Additionally, the new layer should include residual connections, while no constraints are imposed on the pretrained network. This demonstrates the broader utility of the technique beyond neural architecture adaptation.}
\end{remark}
\begin{remark}[Computational cost associated with solving the eigenvalue problem \eqref{eigen_matrixx}]
{   Let $\bQ_l^{(i)}$ be the $i^{th}$ block (along the diagonal) of $\bQ_l$ given by \eqref{block_diagonal}. Note that the eigenvalues of $\bQ_l$ are just the list of eigenvalues of each $\bQ_l^{(i)}$. \footnote{This follows from the fact that the characteristic polynomial of $\bQ_l$ is the product of the characteristic polynomials of all  $\bQ_l^{(i)}$ \cite{garcia2020block}} Therefore, one only needs to solve the eigenvalue problem for each block $\bQ_l^{(i)}$ separately thereby significantly reducing the computational cost. In particular, for a network with $k$ hidden layers of width $n$, using the QR algorithm \cite{trefethen2022numerical} for eigenvalue decomposition results in $kn$ embarrasingly parallel $\mathcal{O}(n^3)$ floating-point operations. Furthermore, since only the largest eigenvalue of  $\bQ_l^{(i)}$ is of interest, specialized methods (such as power method \cite{trefethen2022numerical}) may be employed that require fewer floating-point operations than the full QR algorithm.}
   
 An important consequence of the block-diagonal structure of $\bQ_l$  is that it enables an extension of our algorithm that activates only the most sensitive neurons in a newly added layer, rather than all $n$ neurons. The details of this extension are discused in \cref{select_neuron}. 
\end{remark}

\subsubsection{Activating the most sensitive neurons in a newly added layer}
\label{select_neuron}
Let $\bQ_l^{(r)} := \sum_{s=1}^S\bp_{s,l}^{(r)}\bx_{s,l}\bx_{s,l}^T\sigma''(0) \in \real^{n \times n}$ be the $r$th diagonal block in $\bQ_l$ corresponding to the $r$th neuron. 
The following result is obvious.

\begin{proposition}
Let $\vb$ be a vector in  $\real^{n}$. Define
\begin{equation}
\Phi_r :=[\underbrace{{{0}},\ \dots }_{{(r-1)\times n.}}, \vb, \ {{0}} \dots {{0}}]^T.
\label{new_rep}
\end{equation}
Then $\vb$ is an eigenvector of $\bQ_l^{(r)}$ if and only if $\Phi_r$ is an eigenvector of $\bQ_l$. Furthermore, the eigenvalues corresponding to $\vb$ and $\Phi_r$ are the same.
\label{subBlockQl}
\end{proposition}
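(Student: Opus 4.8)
The plan is to reduce the claim to the elementary action of a block-diagonal matrix on a block-sparse vector, so that everything follows from a one-line computation of $Q_l\Phi_r$ together with the definitional requirement that an eigenvector be nonzero. From \eqref{block_diagonal} the matrix $Q_l$ is block diagonal, $Q_l=\diag\LRp{Q_l^{(1)},\dots,Q_l^{(n)}}$, with each block $Q_l^{(r)}\in\real^{n\times n}$; correspondingly I would partition any vector in $\real^{n^2}$ into $n$ consecutive sub-blocks of length $n$, so that by construction \eqref{new_rep} the vector $\Phi_r$ has its $r$th sub-block equal to $\vb$ and all other sub-blocks equal to ${\bf{0}}$.

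The single computation driving the whole proof is that, because $Q_l$ is block diagonal and $Q_l^{(j)}{\bf{0}}={\bf{0}}$ for every $j\neq r$, the product $Q_l\Phi_r$ again has all sub-blocks zero except the $r$th, which equals $Q_l^{(r)}\vb$. I would then establish both implications by comparing sub-blocks. For the forward direction, if $Q_l^{(r)}\vb=\lambda\vb$ then the $r$th sub-block of $Q_l\Phi_r$ equals $\lambda\vb$ while the others vanish, which is precisely $\lambda\Phi_r$; hence $\Phi_r$ is an eigenvector of $Q_l$ with the same eigenvalue $\lambda$. For the converse, if $Q_l\Phi_r=\lambda\Phi_r$, then equating the $r$th sub-blocks on each side gives $Q_l^{(r)}\vb=\lambda\vb$.

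The only point requiring any care, and it is minor, is that the converse must yield a genuine (nonzero) eigenvector $\vb$: since $\Phi_r$ is assumed to be an eigenvector it is nonzero by convention, and as all its sub-blocks other than the $r$th vanish this forces $\vb\neq{\bf{0}}$. I do not expect any obstacle of substance; the result is a direct instance of the spectral decoupling of block-diagonal matrices already invoked in the footnote preceding the statement, and the equality of the eigenvalues is read off immediately in both directions.
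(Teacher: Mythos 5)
Your argument is correct and is exactly the standard block-diagonal computation the paper has in mind; the paper itself simply declares the result ``obvious'' and omits the proof, so your write-up supplies the intended one-line verification (blockwise computation of $Q_l\Phi_r$ plus the nonvanishing of $\vb$ in the converse). Nothing further is needed.
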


As a direct consequence of \cref{subBlockQl}, if $\vb$ is an eigenvector corresponding to the largest eigenvalue (among all eigenvalues of all diagonal blocks), then the corresponding $\Phi_r$ is associated with the largest eigenvalue of $\bQ_l$.

As a result, the shape functional is most sensitive to the $r$th neuron, and the $r$th neuron should be activated (i.e. initialized with non-zero weights/biases). Alternatively, given an integer $m$, we can identify ``the best" $m$ neurons such that when they are activated, the induced change in the shape functional is largest.

\begin{proposition}
\label{prop_fi}
    Consider the following constrained optimization problem:
\begin{equation}
\label{quant_full_a}
 \max_{\bphi_i, \bphi_j \in \LRc{\Phi_k}_{k=1}^n,
 \bphi_i \ne \bphi_j}  \LRp{d\sJ(\Omega_0;\ (l,\  (\bphi_1+\dots \bphi_{m}),\ \sigma))}, \quad s.t \norm{\bphi_i}_2^2=1, \, i=1,\hdots,m. 
\end{equation}
Then, the optimal value is the sum of the first $m$ largest eigenvalues of $m$ diagonal sub-blocks of $\bQ_l$. The optimal solutions $\LRc{\bphi_i}_{i=1}^m$ are the corresponding eigenvectors of these diagonal sub-blocks written in the form given by \eqref{new_rep}.
\end{proposition}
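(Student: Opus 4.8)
The plan is to reduce this constrained maximization to a separable problem over the diagonal blocks of $Q_l$, exploiting two facts already in hand: by \eqref{eigen_matrixx} the topological derivative is the quadratic form $d\sJ(\Omega_0;(l,\bphi,\sigma))=\bphi^T Q_l\bphi$, and by \eqref{block_diagonal} the matrix $Q_l$ is block diagonal with diagonal blocks $Q_l^{(r)}$. First I would substitute $\bphi=\bphi_1+\dots+\bphi_m$ into the quadratic form and expand
\begin{equation*}
\bphi^T Q_l\bphi=\sum_{i=1}^m\sum_{j=1}^m \bphi_i^T Q_l\bphi_j .
\end{equation*}
Each admissible $\bphi_i$ is, by the construction \eqref{new_rep}, supported on a single diagonal block, say block $r_i$, and the constraint $\bphi_i\ne\bphi_j$ forces the chosen blocks to be distinct. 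Since $Q_l$ is block diagonal, $Q_l\bphi_j$ is again supported on block $r_j$, so $\bphi_i^T Q_l\bphi_j=0$ whenever $r_i\ne r_j$, i.e.\ for all $i\ne j$. The cross terms therefore vanish and the objective collapses to the additive form $\sum_{i=1}^m \bphi_i^T Q_l\bphi_i$.

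Next, by \cref{subBlockQl} each diagonal term equals the corresponding block Rayleigh quotient, $\bphi_i^T Q_l\bphi_i=\vb_i^T Q_l^{(r_i)}\vb_i$, where $\vb_i\in\real^n$ is the nonzero core of $\bphi_i$ in \eqref{new_rep}. Under the unit-norm constraint $\norm{\vb_i}_2^2=1$, the standard Rayleigh quotient characterization gives $\max_{\norm{\vb_i}_2=1}\vb_i^T Q_l^{(r_i)}\vb_i=\lambda_{\max}\LRp{Q_l^{(r_i)}}$, attained at the leading eigenvector of $Q_l^{(r_i)}$. Hence, for any fixed choice of $m$ distinct blocks $\{r_1,\dots,r_m\}$, the best achievable value of the objective is $\sum_{i=1}^m\lambda_{\max}\LRp{Q_l^{(r_i)}}$, realized by taking each $\bphi_i$ to be the leading eigenvector of block $r_i$ embedded via \eqref{new_rep}.

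Finally I would optimize over the choice of blocks. Because the objective is now a sum of independent per-block contributions with no interaction between distinct blocks, the maximization over the distinct index set $\{r_1,\dots,r_m\}$ is separable and is solved greedily: order the blocks by their largest eigenvalue and retain the $m$ blocks whose leading eigenvalues are the largest. This yields exactly the claimed optimal value, namely the sum of the $m$ largest among the block-wise top eigenvalues of $Q_l$, with optimal solutions given by the associated leading eigenvectors in the form \eqref{new_rep}.

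The main point requiring care is the vanishing of the cross terms, which uses both the block-diagonal structure of $Q_l$ and the distinctness of supports enforced by $\bphi_i\ne\bphi_j$; once this additive decomposition is established, the remaining optimization is routine because separability makes the greedy ``top-$m$ block'' selection globally optimal. I would also remark that if repeated blocks were permitted, the cross-term cancellation would instead rely on orthogonality of distinct eigenvectors within a block; the distinct-block reading adopted here is the one consistent with the stated conclusion about $m$ diagonal sub-blocks.
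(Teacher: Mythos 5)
Your proof is correct and rests on the same structural facts as the paper's (block-diagonality of $Q_l$ from \eqref{block_diagonal} and the single-block support of each $\bphi_i$ from \eqref{new_rep}), but the route through the per-block optimization is genuinely different and, in one respect, more complete. The paper forms a Lagrangian with multipliers $\lambda_i$ for the norm constraints, uses the block structure to rewrite it as $\sum_i \hat{\bphi}_i^T Q_l^{\iota(i)}\hat{\bphi}_i + \sum_i \lambda_i(1-\|\hat{\bphi}_i\|_2^2)$, and reads off from the first-order stationarity conditions that each $\hat{\bphi}_i$ must be an eigenvector of its block; it then asserts that the optimal value is the sum of the $m$ largest block eigenvalues. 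You instead expand the quadratic form directly, show the cross terms $\bphi_i^T Q_l\bphi_j$ vanish, and then invoke the Rayleigh-quotient characterization on each block followed by an explicit greedy selection of the $m$ blocks with the largest leading eigenvalues. This buys you two things the paper leaves implicit: stationarity only identifies critical points (any eigenpair of any block), whereas the Rayleigh argument directly certifies the maximum; and the outer combinatorial step of choosing which $m$ blocks to use is actually argued rather than asserted. Both proofs share the same reading of the constraint $\bphi_i\ne\bphi_j$ as forcing the supports onto distinct blocks (strictly, two distinct vectors could live on the same block); you at least flag this interpretive point, which the paper does not.
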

\begin{proof}
    The first-order optimality condition can be obtained by first formulating the 
Lagrangian as:
\begin{equation}
\begin{aligned}
     L\LRp{\bphi_1;\dots, \bphi_m;\lambda_1;\dots \lambda_m}&=d\sJ(\Omega_0;\ (l,\  \varphi,\ \sigma))+\sum_{i=1}^{m}\lambda_i (1-\norm{\bphi_i}_2^2)\\
     &=\varphi^T \bQ_l \varphi+\sum_{i=1}^{m}\lambda_i (1-\norm{\bphi_i}_2^2),
\end{aligned} 
\label{lag}
\end{equation}
where $\varphi=\bphi_1+\dots \bphi_m$. Now notice that using the constraint $\bphi_i\neq \bphi_j$ in \eqref{quant_full_a} and $\bphi_i, \bphi_j \in \LRc{\Phi_k}_{k=1}^n$,  the Lagrangian can be rewritten in terms of the new variables $\{\hat{\bphi}_1,\dots \hat{\bphi}_m\}$ as:
\[ L\LRp{\hat{\bphi}_1;\dots; \hat{\bphi}_m;\lambda_1;\dots \lambda_m}=\sum_{i=1}^m \LRs{\hat{\bphi}_{i}}^T \bQ_l^{\iota(i)} \hat{\bphi}_{i} + \sum_{i=1}^{m}\lambda_i \LRp{ 1-\norm{\hat{\bphi}_{i}}_2^2},
\]
where $\iota\LRp{\cdot}: \LRc{1,\hdots,m} \ni i \mapsto \iota(i) \in \LRc{1,\hdots, n}$ is the mapping from the subscript of $\hat{\bphi}_i$ to the diagonal block number $\iota(i)$ in $\bQ_l$, and $\lambda_i$, $i=1,\hdots,m$ are the Lagrangian multiplier. Using the Lagrangian,  the first-order optimality condition of \eqref{quant_full_a} reads

\[ 
\bQ_l^{\iota(i)} \hat{\bphi}_{i} =\lambda_i  \hat{\bphi}_{i},\quad \forall i,
\]
that is, the pair $\LRc{\lambda_i ,\hat{\bphi}_{i}}$ is an eigenpair of the diagonal block $\iota(i)$ of $\bQ_l$. 



As a result, the optimal objective function in \eqref{quant_full_a} is the sum of the first $m$ largest eigenvalues of $m$ diagonal sub-blocks of $\bQ_l$.  The optimal solutions $\LRc{\bphi_i}_{i=1}^m$ are the corresponding eigenvectors of these diagonal sub-blocks written in the form \eqref{new_rep}. If we denote the $m$ largest eigenvalues (from highest to lowest) as $\{\lambda_i \}_{i=1}^m$, the solution is written as:
\begin{equation}
\Lambda_l=\sum_{i=1}^m \lambda_i, \quad \Phi_l=\bphi_i+\dots \bphi_m.
\label{compute_full}
\end{equation}

\end{proof}
Based on the above-computed quantities, our algorithm for adapting the depth of a fully connected feed-forward architecture is given in \cref{Algo_full}. 

\begin{remark}{}
\label{additional_rem}
Note that our framework can also be applied to a convolutional neural network (CNN) architecture and the details are provided in \cref{CNN_application}. We also provide a  numerical demonstration in \cref{additional_numerical_res}.
\end{remark}









\begin{algorithm} 
	\caption{Fully connected architecture adaptation algorithm for a regression task}
	\hspace*{\algorithmicindent} \textbf{Input}: Training data $\bX$, labels $\bC$, validation data $\bX_1$, validation labels $\bC_1$, number of neurons in each hidden layer $n$, loss function $\Phi$, number of neurons to activate in each hidden layer ${m}$, number of iterations $N_n$, parameter $\epsilon$, $\epsilon^t$, parameter $T_b$, hyperparameters and predefined scheduler for optimizer (\cref{hyper_parameter_n}).\\
	\hspace*{\algorithmicindent} \textbf{Initialize}:   Initialize  network $\sQ_1$ with $T_b$ hidden layers.\\
	\begin{algorithmic}[1] 
  \State Train network $\sQ_1$ and store the validation loss $(\epsilon_v)^{1}$.
  		\State set $i =1$,  $(\epsilon_v)^{0}>>(\epsilon_v)^{1}$, $\Lambda_l^m\geq \epsilon^t$
		\While{$i \le N_n$ \textbf{and} $\LRs{(\epsilon_v)^{i}\leq (\epsilon_v)^{i-1}}$ \textbf{and} $\Lambda_l^m\geq \epsilon^t$} 
 \State Compute the topological derivative for each layer $l$  using \eqref{compute_full} and store as $\{ \Lambda_l\}$, also store $\Lambda_l^m=\max_l\{ \Lambda_l\}$
   \State Store the corresponding eigenvectors for each layer as $\Phi_l$ given by \eqref{compute_full}.
        \State Obtain the new network $\sQ_{i+1}$ by adding a new layer at position $l^*=\argmax\limits_l \{\Lambda_l\}$ with initialization  $\epsilon\Phi_{l^*}$.
        \State Perform a backtracking line search to update $\epsilon$ as outlined in \cref{back_track}.
        \State Update the parameters for optimizer if required (refer  \cref{hyper_parameter_n} for scheduler details).
        \State Train network $\sQ_{i+1}$  and store the best validation loss $(\epsilon_v)^{i+1}$ and the best network $\sQ_{i+1}$.
		\State $i = i+1$
		\EndWhile
	\end{algorithmic} \label{Algo_full}
\hspace*{\algorithmicindent} \textbf{Output}: Network $\sQ_{i-1}$
\end{algorithm}

\section{Fully automated network growing \cref{Algo_full_auto}}
\label{automated}
Note that \cref{Algo_full} activates the ``the best" $m$ neurons while inserting a new layer as described in  \cref{select_neuron}. Note that $m$ is a user-defined hyperparameter in \cref{Algo_full}. In this section, we extend our algorithm to automatically select the parameter $m$, i.e. automatically determine the number of neurons (width) of the added hidden layer. To that end let us revisit
 \cref{prop_fi} where  $m$ is now unknown. The idea is to look at how the optimal loss in \eqref{quant_full_a} changes as one increases $m$. In particular, for each layer $l$ we choose the largest value of $m$ such that:
\begin{equation}
\frac{\lambda_{1}-\lambda_{m}}{\lambda_1}\leq \epsilon_s,\quad s.t \ \ \lambda_1,\ \lambda_m>0,
\label{additional_constraint}
\end{equation}
where $\lambda_i$ is  defined in \eqref{compute_full} and $\epsilon_s$ is a user-defined hyperparameter. Since now $m$ could be different for each layer $l$, let us denote it as $m(l)$. Criterion \eqref{additional_constraint} simply compares the sensitivity of each neuron relative to the most sensitive neuron in the added layer. 
If $\epsilon_s=0$, then it is clear that only the most sensitive neuron gets activated. On the other hand, when $\epsilon_s=1$ all the neurons (with $\lambda_m>0$) in the hidden layer gets activated. In practice, we choose $\epsilon_s=0.5$ such that all neurons which are at least $50\ \%$ as sensitive as the most sensitive neuron gets activated in the added layer. Further, note that the derivative $\Lambda_l$ in \eqref{compute_full} is computed with respect to the operation of activating exactly $m$ neurons in an added layer. However, since in this case $m=m(l)$ can be different for each layer $l$, one needs to normalize the derivative in \eqref{compute_full} as $\Lambda_l=\frac{\Lambda_l}{m(l)}$  while making decision on where to add a new layer. The full algorithm is provided in  \cref{Algo_full_auto}.

Further, in the fully automated network growing (Proposed II), we do not use a predefined scheduler (such as in \cref{Algo_full}) to decide when a new layer needs to be added during the training process. Instead, a validation data set is employed to decide when a new layer needs to be added.  The key difference of fully-automated growing from \cref{Algo_full} is the training loop without using a predefined scheduler (lines 9-17 in  \cref{Algo_full_auto}). Note that one stops training the current network $\sQ_{i+1}$ if the validation loss does not decrease for $N_k$ consecutive training epochs. Then, a new layer is added. 
The extension of \cref{Algo_full_auto} to the case of a CNN architecture is trivial (in this case the parameter $m$  corresponds to the number of channels in the hidden layer).
    \begin{algorithm} 
	\caption{Fully automated network growing algorithm for fully connected architecture}
	\hspace*{\algorithmicindent} \textbf{Input}: Training data $\bX$, labels $\bC$, validation data $\bX_1$, validation labels $\bC_1$, number of neurons in each hidden layer $n$, loss function $\Phi$, number of iterations $N_n$,  parameter $\epsilon$, $\epsilon^t$, parameter $T_b,\ N_k$, hyperparameters for optimizer (\cref{hyper_parameter_n}).\\
	\hspace*{\algorithmicindent} \textbf{Initialize}:   Initialize  network $\sQ_1$ with $T_b$ hidden layers.\\
	\begin{algorithmic}[1] 
  \State Train network $\sQ_1$  and store the best validation loss $(\epsilon_v)^{1}$.
  		\State set $i =1$,  $(\epsilon_v)^{0}>>(\epsilon_v)^{1}$, $\Lambda_l^m\geq \epsilon^t$
		\While{$i \le N_n$ \textbf{and} $\LRs{(\epsilon_v)^{i}\leq (\epsilon_v)^{i-1}}$ \textbf{and} $\Lambda_l^m\geq \epsilon^t$} 
 \State Compute $m(l)$ for each layer $l$ using   \eqref{additional_constraint} as outlined in  \cref{automated}.
 \State Compute the derivative for each layer $l$  using \eqref{compute_full} and store as $\{ \frac{\Lambda_l}{m(l)}\}$, also store $\Lambda_l^m=\max_l\{ \frac{\Lambda_l}{m(l)}\}$
   \State Store the corresponding eigenvectors for each layer as $\Phi_l$ given by \eqref{compute_full}.
        \State Obtain the new network $\sQ_{i+1}$ by adding a new layer at position $l^*=\argmax\limits_l \{ \frac{\Lambda_l}{m(l)}\}$ with initialization  $\epsilon\Phi_{l^*}$.
        \State Perform a backtracking line search to update $\epsilon$ as outlined in \cref{back_track}.
\State Set epoch $j=1$, $(\epsilon_v)^{i+1}_j$ as the validation loss for $\sQ_{i+1}$, and set $k=1$.
      \While{$k \leq N_k$}\Comment{Training loop without using a scheduler}
        \State Update parameters of network $\sQ_{i+1}$   and store the current best validation loss $(\epsilon_v)^{i+1}_{j+1}$. 
        \If{$(\epsilon_v)^{i+1}_{j+1}\geq (\epsilon_v)^{i+1}_{j}$}
        \State k=k+1
                \Else
        \State set $k=1$
        \EndIf
		\State $j = j+1$
		\EndWhile
  \State Store the best validation loss for $\sQ_{i+1}$ as $(\epsilon_v)^{i+1}$=$(\epsilon_v)^{i+1}_{j+1}$ and the corresponding  best network $\sQ_{i+1}$.
  \State $i = i+1$
  \EndWhile
	\end{algorithmic} \label{Algo_full_auto}
\hspace*{\algorithmicindent} \textbf{Output}: Network $\sQ_{i-1}$
\end{algorithm}

\section{Topological derivative approach through the lens of an  optimal transport problem}
\label{optimal_trans}
In this section, we show that our layer insertion strategy can be derived  as a solution to maximizing a topological derivative in $p$-Wasserstein space, where $p\geq 1$. Optimization in $\infty-$Wasserstein space was earlier considered by Liu et al. \cite{wu2019splitting} to explain their neuron splitting strategy for growing neural networks along the width. The standard optimal transport (Kantorovich formulation \cite{santambrogio2015optimal}) aims at computing a minimal cost transportation plan $\gamma$  between a source probability measure $\mu$, and target probability measure $\nu$.
A common formulation of the cost function is based on the Wasserstein distance  $W_p (\mu;\ \nu)$.
\begin{definition}[$p$-Wasserstein metric]
The  $p$-Wasserstein distance between two probability measures $\mu$ and $\nu$ is given as:
\begin{equation}
    W_p \LRp{\mu;\ \nu}={\inf_{\gamma \in \Gamma (\mu;\ \nu)} \LRp{\underset{(\bphi,\ \bphi') \sim \gamma}{\expect}  \norm{\bphi-\bphi'}_2^p}^{1/p}},
    \label{wasr}
\end{equation}
    where  $\Gamma (\mu;\ \nu)$ is the set of couplings of $\mu$ and $\nu$. A coupling $\gamma$ is a joint probability measure whose marginals are $\mu$ and $\nu$ on the first and second factors, respectively. 

\end{definition}
In this work, {\em we set up an optimization problem  to determine the unknown target measure $\nu$ given the source measure $\mu$.} Let us consider the scenario depicted in \cref{fig:TRANSPORT} in which we add a message-passing layer  with zero weights/biases, and then ask the question:  what is the initialization values $\overline{T}({\bf{0}})$ (where $\overline{T}$ is the transport map) for the weights/biases of the newly added layer such that the loss function is most sensitive too? In this section, we answer this question from an optimal transport point of view.  For the clarity of the exposition (and without loss of generality), let us assume that the message-passing layer (new layer) is added in front of the output layer of $\Omega_0$ as depicted in 
\cref{fig:TRANSPORT}.
\def\layersep{1.4cm}
\def\nodeinlayersep{0.7cm}
\begin{figure}[H]
\centering
\begin{tikzpicture}[
    node distance=\layersep,
    edge/.style={-stealth,shorten >=1pt, draw=black!50,thin},
    neuron/.style={circle,fill=black!25,minimum size=10pt,inner sep=0pt},
    operator/.style={rectangle,fill=green!,minimum height= \nodeinlayersep, minimum width= 0.8 * \layersep, inner sep=0pt, rounded corners},
    input neuron/.style={neuron, fill=green!50,minimum size=12pt},
    output neuron/.style={neuron, fill=green!50,minimum size=12pt},
    hidden neuron/.style={neuron, fill=blue!50},
    Forward map/.style={operator, fill=red!50},
    annot/.style={text width=4em, text centered},
    every node/.style={scale=1.0},
    node1/.style={scale=2.0}
]
    \foreach \name / \y in {1,...,4}
        {\ifnum \y=3
            \node (I-\name) at (0,-\nodeinlayersep * \y - \nodeinlayersep * 0.5) {$\vdots$};
        \else
            \ifnum \y=4
                \node[input neuron] (I-\name) at (0,-\nodeinlayersep * \y - \nodeinlayersep * 0.5) {$x_{n}$};
            \else
                \node[input neuron] (I-\name) at (0,-\nodeinlayersep *\y - \nodeinlayersep * 0.5 ) {$x_{\y}$};
            \fi
        \fi}
        
    \foreach \name / \y in {1,...,5}
        {\ifnum \y=4
            \node (O-\name) at (3*\layersep,-\nodeinlayersep *\y - \nodeinlayersep * 0.0) {$\vdots$};
      
        \else
            \ifnum \y=5
                \node[input neuron] (O-\name) at (3*\layersep,-\nodeinlayersep *\y - \nodeinlayersep * 0.0) {$y_{p}$};
            \else
                \node[input neuron] (O-\name) at (3*\layersep,-\nodeinlayersep *\y - \nodeinlayersep * 0.0) {$y_{\y}$};
            \fi
        \fi}

    \newcommand \Nhidden{2}
    \foreach \N in {1,...,\Nhidden} {
        \foreach \y in {1,...,5} { 
            \ifnum \y=4
                \node at (\N*\layersep,-\y*\nodeinlayersep) {$\vdots$};
            \else
                \node[hidden neuron] (H\N-\y) at (\N*\layersep,-\y*\nodeinlayersep ) {$\sigma$};
        \fi
      }
    }

    \foreach \source in {1,2,4}
        \foreach \dest in {1,...,3,5} 
            \draw[edge] (I-\source) -- (H1-\dest);
    
    \foreach [remember=\N as \lastN (initially 1)] \N in {2,...,\Nhidden}
      \foreach \source in {1,...,3,5} 
          \foreach \dest in {1,...,3,5} 
              \draw[edge] (H\lastN-\source) -- (H\N-\dest);
              
    \foreach \source in {1,...,3,5} 
        \foreach \dest in {1,2,4,5}
            \draw[dashed] (H\Nhidden-\source) -- (O-\dest);

    \draw [decorate, decoration = {calligraphic brace, mirror}, thick]  (1*\layersep,-.7*\nodeinlayersep) -- (1*\layersep,-.7*\nodeinlayersep) node[pos=1.5,below=3.5cm,black]{$l=1$};
  
    \draw [decorate, decoration = {calligraphic brace, mirror}, thick]  (2.5*\layersep,-.7*\nodeinlayersep) -- (2.5*\layersep,-.7*\nodeinlayersep) node[pos=1.5,above=0.5cm,black]{message-pasing layer};

    \draw [decorate, decoration = {calligraphic brace, mirror}, thick]  (2.5*\layersep,-.7*\nodeinlayersep) -- (2.5*\layersep,-.7*\nodeinlayersep) node[pos=1.5,above=0.1cm,black]{ with parameters ${\bf{0}}$};
    
    \draw [decorate, decoration = {calligraphic brace, mirror}, thick]  (2*\layersep,-.7*\nodeinlayersep) -- (2*\layersep,-.7*\nodeinlayersep) node[pos=1.5,below=3.5cm,black]{$l=2$};
       \draw [decorate, decoration = {calligraphic brace, mirror}, thick]  (1.5*\layersep,-.7*\nodeinlayersep) -- (1.5*\layersep,-.7*\nodeinlayersep) node[pos=1.5,below=4cm,black]{Original network $\Omega_0$};

          \draw[->, thick] (3.5,-0.3) -- (3.5,-0.65);
  \draw[->, thick] (5.8,-0.1) -- (9.5,-0.1) node[pos=0.5,above=0cm,black]{Transport of parameters};
\end{tikzpicture}
\hspace{-2 cm}
\begin{tikzpicture}[
    node distance=\layersep,
    edge/.style={-stealth,shorten >=1pt, draw=black!50,thin},
    neuron/.style={circle,fill=black!25,minimum size=10pt,inner sep=0pt},
    operator/.style={rectangle,fill=green!,minimum height= \nodeinlayersep, minimum width= 0.8 * \layersep, inner sep=0pt, rounded corners},
    input neuron/.style={neuron, fill=green!50,minimum size=12pt},
    output neuron/.style={neuron, fill=green!50,minimum size=12pt},
    hidden neuron/.style={neuron, fill=blue!50},
    Forward map/.style={operator, fill=red!50},
    annot/.style={text width=4em, text centered},
    every node/.style={scale=1.0},
    node1/.style={scale=2.0}
]
    \foreach \name / \y in {1,...,4}
        {\ifnum \y=3
            \node (I-\name) at (0,-\nodeinlayersep * \y - \nodeinlayersep * 0.5) {$\vdots$};
        \else
            \ifnum \y=4
                \node[input neuron] (I-\name) at (0,-\nodeinlayersep * \y - \nodeinlayersep * 0.5) {$x_{n}$};
            \else
                \node[input neuron] (I-\name) at (0,-\nodeinlayersep *\y - \nodeinlayersep * 0.5 ) {$x_{\y}$};
            \fi
        \fi}
        
    \foreach \name / \y in {1,...,5}
        {\ifnum \y=4
            \node (O-\name) at (3*\layersep,-\nodeinlayersep *\y - \nodeinlayersep * 0.0) {$\vdots$};
      
        \else
            \ifnum \y=5
                \node[input neuron] (O-\name) at (3*\layersep,-\nodeinlayersep *\y - \nodeinlayersep * 0.0) {$y_{p}$};
            \else
                \node[input neuron] (O-\name) at (3*\layersep,-\nodeinlayersep *\y - \nodeinlayersep * 0.0) {$y_{\y}$};
            \fi
        \fi}

    \newcommand \Nhidden{2}
    \foreach \N in {1,...,\Nhidden} {
        \foreach \y in {1,...,5} { 
            \ifnum \y=4
                \node at (\N*\layersep,-\y*\nodeinlayersep) {$\vdots$};
            \else
                \node[hidden neuron] (H\N-\y) at (\N*\layersep,-\y*\nodeinlayersep ) {$\sigma$};
        \fi
      }
    }

    \foreach \source in {1,2,4}
        \foreach \dest in {1,...,3,5} 
            \draw[edge] (I-\source) -- (H1-\dest);
    
    \foreach [remember=\N as \lastN (initially 1)] \N in {2,...,\Nhidden}
      \foreach \source in {1,...,3,5} 
          \foreach \dest in {1,...,3,5} 
              \draw[edge] (H\lastN-\source) -- (H\N-\dest);
              
    \foreach \source in {1,...,3,5} 
        \foreach \dest in {1,2,4,5}
            \draw[edge] (H\Nhidden-\source) -- (O-\dest);

    \draw [decorate, decoration = {calligraphic brace, mirror}, thick]  (1*\layersep,-.7*\nodeinlayersep) -- (1*\layersep,-.7*\nodeinlayersep) node[pos=1.5,below=3.5cm,black]{$l=1$};
  
    \draw [decorate, decoration = {calligraphic brace, mirror}, thick]  (2.5*\layersep,-.7*\nodeinlayersep) -- (2.5*\layersep,-.7*\nodeinlayersep) node[pos=1.5,above=0.5cm,black]{New output layer with };

    \draw [decorate, decoration = {calligraphic brace, mirror}, thick]  (2.5*\layersep,-.7*\nodeinlayersep) -- (2.5*\layersep,-.7*\nodeinlayersep) node[pos=1.5,above=0.1cm,black]{ parameters $\overline{T}\LRp{{\bf{0}}}$};
    
    \draw [decorate, decoration = {calligraphic brace, mirror}, thick]  (2*\layersep,-.7*\nodeinlayersep) -- (2*\layersep,-.7*\nodeinlayersep) node[pos=1.5,below=3.5cm,black]{$l=2$};
       \draw [decorate, decoration = {calligraphic brace, mirror}, thick]  (1.5*\layersep,-.7*\nodeinlayersep) -- (1.5*\layersep,-.7*\nodeinlayersep) node[pos=1.5,below=4cm,black]{New network $\Omega_\epsilon$};

      \draw[->, thick] (3.5,-0.3) -- (3.5,-0.65);

\end{tikzpicture}
\caption{Optimal transport interpretation of our proposed approach:  We wish to optimally transport (in some sense) the parameters from network $\Omega_0$ (left figure) to the new network $\Omega_\epsilon$ (right figure).  }
\label{fig:TRANSPORT}
\end{figure}
For the message-passing layer with zero weights/biases, the functional representation of the loss \eqref{for_later} can be written as: 
\begin{equation}
\begin{aligned}
    \sJ\LRp{\mu_{\bf{0}}}=\frac{1}{S}\sum_{s=1}^S \Phi \LRp{ \underset{\bphi \sim \mu_{\bf{0}} }{\expect}    \LRs{\Omega_\epsilon \LRp{\bx_{s,0};\ \bphi}}}, \quad \quad 
    \mu_{\bf{0}}=\delta_{0} \times \delta_{0}\times \dots \mathrm(n_p\ times)\dots \times \delta_{0},
    \end{aligned}
    \label{delta_measure}
\end{equation}
where the second argument in  $\Omega_\epsilon \LRp{\bx_{s,0};\ \bphi}$ denotes the parameters $\bphi$ of the added layer, $\delta_{0}$ denotes the Dirac measure on $\real$, $n_p$ denotes the total number of parameters in the added layer, $\mu_{\bf{0}}$ is the product measure. Similarly, for some non-zero initialization values $\epsilon\boldsymbol{\phi}$ (which we seek to determine) for the weights/biases of the newly added layer, the functional representation of the loss \eqref{for_later} can be written as: 
\begin{equation}
\begin{aligned}
    \sJ(\nu_{\epsilon\bphi})=\frac{1}{S}\sum_{s=1}^S \Phi \LRp{ \underset{\bphi \sim \nu_{\epsilon\bphi} }{\expect}    \LRs{\Omega_\epsilon \LRp{\bx_{s,0};\ \bphi}}}, \quad \quad 
    \nu_{\epsilon\bphi}=\delta_{\epsilon\bphi_1} \times \delta_{\epsilon\bphi_2}\times \dots \times \delta_{\epsilon\bphi_{n_p}},
    \end{aligned}
    \label{delta_measure_v}
\end{equation}

\begin{theorem}[Topological derivative in $p-$Wasserstein space]
\label{lem_opti}
Let $\mu_{\bf{0}}$ be a given probability measure  \eqref{delta_measure} and assume that conditions \cref{one} and, \cref{two} in \cref{prop_admissible} are satisfied for the neural network. Let $\nu_{\epsilon\bphi}$, be another (unknown) perturbed measure \eqref{delta_measure_v} such that $ W_p \LRp{\mu_{\bf{0}};\ \nu_{\epsilon\bphi}}\leq \epsilon$. Further, assume that the conditions \ref{cond_one}, \ref{cond_two} of  \cref{exist_th} are satisfied. 
Then, the following results holds:
\begin{enumerate}
\item The probability measure $\nu_{\epsilon\bphi}$  has the form $\nu_{\epsilon\bphi}=\overline{T}_{\#}\mu$ where the transport map $\overline{T}$ satisfies:
\begin{equation}
    \overline{T}({\bf{0}})=\epsilon\bphi, \quad \mathrm{where}\ \norm{\bphi}_2\leq 1,
    \label{transp}
\end{equation}
    \item With $ W_p \LRp{\mu_{\bf{0}};\ \nu_{\epsilon\bphi}}\leq \epsilon$, the Topological derivative in $p-$Wasserstein space at $\mu_{\bf{0}}$ along the direction $\bphi$ is given by: 
    \begin{equation}
    d\sJ(\mu_{\bf{0}};\ \bphi)=-\lim_{\epsilon \downarrow 0}\frac{\sJ(\nu_{\epsilon\bphi})-\sJ(\mu_{\bf{0}})}{\epsilon^2}= \frac{\bphi^TQ(\bf{0})\bphi}{2},\quad \norm{\bphi}_2\leq 1,
    \label{topo_der_w}
\end{equation}
where  \[ Q(\bphi)=\sum_{s=1}^S \nabla^2_{\bphi} H_2\LRp{\bx_{s,2};\ \bp_{s,2}; \ \bphi},\]
and $H_2$ is the Hamiltonian  given by \eqref{hamiltonian} corresponding to the output layer. 
\item Consider the problem of maximizing the Topological derivative in $p-$Wasserstein space:
\[ \max_{\bphi} \LRp{d\sJ(\mu_{\bf{0}};\ \bphi)},\quad \text{subject to:} \norm{\bphi}_2\leq 1.\]
Then the optimal transport map in \eqref{transp} satisfies the following:
\[ \overline{T}({\bf{0}})= \epsilon \ v_{max}({\bf{0}}),\]
where $v_{max}({\bf{0}})$ is the eigenvector of $Q({\bf{0}})$ corresponding to  the maximum eigenvalue. 
\end{enumerate}

\end{theorem}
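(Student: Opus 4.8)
The plan is to exploit the fact that both $\mu_{\bf{0}}$ and $\nu_{\epsilon\bphi}$ are deterministic (product Dirac) measures, which collapses every expectation and reduces the apparently measure-theoretic statement to the finite-dimensional network topological derivative already established in \cref{exist_th}. I would treat the three items in order, with items (1) and (3) being short and item (2) carrying the content.

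First I dispose of item (1). Between two Dirac masses $\delta_a$ and $\delta_b$ on $\real$ the only coupling is the product $\delta_a\times\delta_b$, so the $p$-Wasserstein cost in \eqref{wasr} reduces to $\norm{a-b}_2$; applying this coordinatewise to $\mu_{\bf{0}}$ and $\nu_{\epsilon\bphi}$ gives $W_p\LRp{\mu_{\bf{0}};\ \nu_{\epsilon\bphi}}=\norm{\epsilon\bphi-{\bf{0}}}_2=\epsilon\norm{\bphi}_2$. The unique transport map is the translation sending the support point ${\bf{0}}$ of $\mu_{\bf{0}}$ to the support point $\epsilon\bphi$ of $\nu_{\epsilon\bphi}$, i.e. $\nu_{\epsilon\bphi}=\overline{T}_{\#}\mu_{\bf{0}}$ with $\overline{T}({\bf{0}})=\epsilon\bphi$, and the constraint $W_p\LRp{\mu_{\bf{0}};\ \nu_{\epsilon\bphi}}\leq\epsilon$ becomes exactly $\norm{\bphi}_2\leq 1$, which is \eqref{transp}.

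For item (2) I would observe that since $\nu_{\epsilon\bphi}$ is the Dirac mass at $\epsilon\bphi$, the expectation in \eqref{delta_measure_v} is a point evaluation, so $\sJ(\nu_{\epsilon\bphi})=\frac{1}{S}\sum_{s=1}^S\Phi\LRp{\Omega_\epsilon(\bx_{s,0};\ \epsilon\bphi)}=\sJ(\Omega_\epsilon)$ and likewise $\sJ(\mu_{\bf{0}})=\sJ(\Omega_0)$. Hence the Wasserstein-space quotient in \eqref{topo_der_w} is literally the quotient \eqref{topo_der} for a layer added in front of the output, so \cref{exist_th} applies with $q(\epsilon)=\epsilon^2$ and gives $d\sJ(\mu_{\bf{0}};\ \bphi)=\frac{1}{2}\sum_{s=1}^S\bphi^T\nabla^2_{\bphi}H_2\LRp{\bx_{s,2};\ \bp_{s,2};\ \bphi}\big|_{\bphi={\bf{0}}}\bphi=\frac{\bphi^TQ({\bf{0}})\bphi}{2}$, the adjoint subscript being $\bp_{s,2}$ because the message-passing layer leaves the adjoint unchanged (\cref{prop_admissible}). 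As a self-contained alternative I would compute the second variation directly along the displacement interpolation $\pi_\eta({\bf{0}})=(1-\eta){\bf{0}}+\eta\,\overline{T}({\bf{0}})$: the first variation vanishes because $\nabla_{\bphi}\Omega_\epsilon(\bx_{s,0};\ {\bf{0}})={\bf{0}}$ by condition \ref{two} of \cref{prop_admissible}, the second variation reproduces the Hamiltonian Hessian $Q({\bf{0}})$, and the bounded-derivative hypotheses \ref{cond_one}, \ref{cond_two} of \cref{exist_th} control the third-order remainder by $c\,\norm{\overline{T}({\bf{0}})-{\bf{0}}}_2^3\leq c\,\epsilon^3$, which vanishes after dividing by $\epsilon^2$.

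Finally, item (3) is a Rayleigh-quotient maximization. Since $Q({\bf{0}})$ is symmetric, for any $\norm{\bphi}_2\leq 1$ we have $\bphi^TQ({\bf{0}})\bphi\leq\lambda_{\max}\norm{\bphi}_2^2\leq\lambda_{\max}$, with equality when $\bphi$ is a unit eigenvector $v_{max}({\bf{0}})$ associated with the largest eigenvalue $\lambda_{\max}$ (taken positive, so the optimum sits on the boundary $\norm{\bphi}_2=1$). Thus the maximizing direction is $\bphi=v_{max}({\bf{0}})$ and, through \eqref{transp}, the optimal transport map satisfies $\overline{T}({\bf{0}})=\epsilon\,v_{max}({\bf{0}})$. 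The main obstacle I anticipate is bookkeeping rather than conceptual: making the reduction airtight requires verifying that for deterministic measures the $p$-Wasserstein constraint is equivalent to the pointwise bound $\norm{\overline{T}({\bf{0}})-{\bf{0}}}_2\leq\epsilon$ with a unique optimal coupling, and, if one takes the direct displacement-interpolation route instead of citing \cref{exist_th}, controlling the third-order remainder uniformly in $\epsilon$ via the bounded-derivative assumptions.
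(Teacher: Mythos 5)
Your proposal is correct, and for the central item (2) your primary route differs from the paper's. The paper proves item (2) from scratch in the measure-theoretic language: it Taylor-expands $\sJ$ along the displacement interpolation $\mu_\eta=(\pi_\eta)_{\#}\mu_{\bf{0}}$, shows the first variation $\mathbf{D}J(\mu_{\bf{0}})[\nu_{\epsilon\bphi}]$ vanishes using $\nabla_{\bphi}\Omega_\epsilon(\bx_{s,0};{\bf{0}})={\bf{0}}$, computes the second variation explicitly (a two-term expression whose first term dies for the same reason and whose second term collapses, via the Dirac measures and the identity $H_2=\bp_{s,T}\cdot\Omega_\epsilon(\bx_{s,0};\bphi)$, to $-\epsilon^2\bphi^TQ({\bf{0}})\bphi$), and bounds the third variation by $\mathcal{O}(\epsilon^3)$. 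Your primary argument instead observes that the Dirac expectations are point evaluations, so $\sJ(\nu_{\epsilon\bphi})=\sJ(\Omega_\epsilon)$ and $\sJ(\mu_{\bf{0}})=\sJ(\Omega_0)$, whence the Wasserstein quotient is literally the quotient in \eqref{topo_der} and \cref{exist_th} gives the answer directly. This reduction is valid under the stated hypotheses and is considerably shorter; what it gives up is the self-contained Wasserstein-calculus derivation that the paper evidently wants on record (it is the point of the section, connecting to the splitting-descent literature), and it quietly relies on \cref{exist_th} covering insertion immediately before the output layer, which does hold but deserves the one-line remark you give about $\bp_{s,2}=\bp_{s,T}$. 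Your alternative sketch via displacement interpolation is essentially the paper's proof, and your treatments of items (1) and (3) coincide with the paper's.
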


\begin{proof}
\begin{inlineenum}
    \item The first part of the proof follows from the fact that for Dirac measures $\mu_{\bf{0}},\ \nu_{\epsilon\bphi}$, $p-$Wasserstein distance  \cref{wasr} simplifies as:
\[W_p \LRp{\mu_{\bf{0}};\ \nu_{\epsilon\bphi}}=\norm{\overline{T}({\bf{0}})-{\bf{0}}}_2.\]
Therefore,
\[W_p \LRp{\mu_{\bf{0}};\ \nu_{\epsilon\bphi}}\leq  \epsilon  \implies \norm{\overline{T}({\bf{0}})-{\bf{0}}}_2\leq \epsilon\implies \overline{T}({\bf{0}})=\epsilon\bphi, \quad \norm{\bphi}_2\leq 1. \]
\item To derive the Topological derivative in $p-$Wasserstein space, we look at  the Taylor series expansion of the functional $\sJ$ about the measure $\mu_{\bf{0}}$ as follows:
\begin{equation}
    \sJ(\nu_{\epsilon\bphi})-\sJ(\mu_{\bf{0}})= \mathbf{D}J(\mu_{\bf{0}})[\nu_{\epsilon\bphi}]+ \frac{1}{2}\mathbf{D}^2J(\mu_{\bf{0}})[\nu_{\epsilon\bphi}]+\frac{1}{6}\LRs{\frac{d^3}{d\eta^3}\sJ(\mu_\eta)}_{\eta=\zeta},
    \label{variation_taylor}
\end{equation}
where $\zeta$ is a number between $0$ and $1$, $\mathbf{D}^kJ(\mu_{\bf{0}})[\nu_{\epsilon\bphi}]$ denotes the $k^{th}$ variation of the functional $J(\mu_{\bf{0}})$ computed based on the   displacement interpolation \cite{santambrogio2015optimal, carmona2018probabilistic,wu2019splitting} given by:
\[  \mathbf{D}^kJ(\mu_{\bf{0}})[\nu_{\epsilon\bphi}]=\LRs{\frac{d^k}{d\eta^k}\sJ(\mu_\eta)}_{\eta=0},\ \ \mu_\eta=(\pi_\eta)_{\#}\mu_{\bf{0}},\]
where $\pi_\eta=(1-\eta)\bphi+\eta\overline{T}(\bphi)$ with $\eta=[0,\ 1]$ (note that $\mu_0=\mu_{{\bf{0}}}$ and $\mu_1=\nu_{\epsilon\bphi}$). Analyzing the first-order term in \eqref{variation_taylor} yields:
\begin{equation}
    \begin{aligned}
  \mathbf{D}J(\mu_{\bf{0}})[\nu_{\epsilon\bphi}]&=\frac{d}{d\eta} \LRp{\frac{1}{S}\sum_{s=1}^S \Phi \LRp{\underset{\bphi_{\eta} \sim \mu_{\eta} }{\expect}  
  \LRs{\Omega_\epsilon \LRp{\bx_{s,0};\ \bphi_\eta}}}}\Bigg  |_{\eta=0} \\
    &= \frac{d}{d\eta} \LRp{\frac{1}{S}\sum_{s=1}^S \Phi \LRp{\underset{\bphi \sim \mu_{\bf{0}} }{\expect}    
    \LRs{\Omega_\epsilon \LRp{\bx_{s,0};\ \eta \overline{T}(\bphi)+(1-\eta)\bphi}}}}\Bigg  |_{\eta=0}\\
    &= \frac{1}{S}\sum_{s=1}^S \LRs{\nabla \Phi \LRp{\underset{\bphi \sim \mu_{\bf{0}} }{\expect}    
    \LRs{\Omega_\epsilon (\bx_{s,0};\ \bphi)}} \cdot \underset{\bphi \sim \mu_{\bf{0}} }{\expect}    
    \LRs{\LRp{\nabla_{\bphi} \ \Omega_\epsilon (\bx_{s,0};\ \bphi)} \LRp{\overline{T}(\bphi)-\bphi} } },
\end{aligned}
\label{last_line}
\end{equation}
where $\nabla_{\bphi}$ denotes the gradient w.r.t the second argument of $\Omega_{\epsilon}$. 
Further  note that the second term in the last line of \eqref{last_line}  can be simplified as:
\[\underset{\bphi \sim \mu_{\bf{0}} }{\expect}\LRs{\LRp{\nabla_{\bphi} \ \Omega_\epsilon (\bx_{s,0};\ \bphi)} \LRp{\overline{T}(\bphi)-\bphi} } =\nabla_{\bphi} \ \Omega_\epsilon (\bx_{s,0};\ {\bf{0}}) \overline{T}({\bf{0}}) -  {(\nabla_{\bphi} \ \Omega_\epsilon (\bx_{s,0};\ {\bf{0}})) {\bf{0}}}={\bf{0}},\]
where we have used the property of the Dirac-measure as defined in \eqref{delta_measure}  and also used 
$\nabla_{\bphi} \ \Omega_\epsilon (\bx_{s,0};\ {\bf{0}})={\bf{0}}$ due to
to assumption (\cref{two}) in  \cref{lem_opti}. Proceeding similarly for computing the second-order term $\mathbf{D}^2J(\mu_{\bf{0}})[\nu_{\epsilon\bphi}]$ in \eqref{variation_taylor}, we have:
\begin{equation}
    \begin{aligned}
 &  \mathbf{D}^2J(\mu_{\bf{0}})[\nu_{\epsilon\bphi}]= \frac{d}{d\eta} \LRs{\frac{1}{S}\sum_{i=1}^S \LRs{\nabla \Phi \LRp{\underset{\bphi \sim \mu_{\bf{0}} }{\expect}  
 \LRs{\Omega_\epsilon \LRp{\bx_{s,0};\ \pi_\eta}}} \cdot \underset{\bphi \sim \mu_{\bf{0}} }{\expect}  
 \LRs{\LRp{\nabla_{\bphi} \ \Omega_\epsilon (\bx_{s,0};\ {\pi_\eta)}} g(\bphi) } }}\Bigg  |_{\eta=0}=\\
  & \frac{1}{S}\sum_{s=1}^S \LRs{\LRp{\nabla^2\Phi   
  \LRs{     \underset{\bphi \sim \mu_{\bf{0}} }{\expect}   \LRs{\Omega_\epsilon (\bx_{s,0};\ \bphi)}}\hspace{-0.1 cm}\underset{\bphi \sim \mu_{\bf{0}} }{\expect}   \LRs{\LRp{\nabla_{\bphi} \ \Omega_\epsilon (\bx_{s,0};\ {\bphi})} (g(\bphi))}} \cdot \hspace{-0.2 cm} \underset{\bphi \sim \mu_{\bf{0}} }{\expect}   
  \LRs{\nabla_{\bphi} \ \Omega_\epsilon (\bx_{s,0};\ {\bphi}) g(\bphi)} }\\
  &+\sum_{s=1}^S \LRs{  \underset{\bphi \sim \mu_{\bf{0}} }{\expect}   
  \LRs{g(\bphi)^T \ \nabla^2_{\bphi} \ (-\bp_{s,T} \cdot \Omega_\epsilon (\bx_{s,0};\ {\bphi}))\ g(\bphi) }},
  \end{aligned}
  \label{second_order}
\end{equation}
where $g(\bphi)=\overline{T}(\bphi)-\bphi$ and we have used \eqref{set_of} in the second term of \eqref{second_order} while applying the chain rule, i.e,
\[\bp_{s,T}=-\frac{1}{S}\nabla\Phi\LRp{\bx_{s,T}}=
-\frac{1}{S} \nabla\Phi \LRp{\underset{\bphi \sim \mu_{\bf{0}} }{\expect} 
\LRs{\Omega_\epsilon \LRp{\bx_{s,0};\ \bphi}}}.\]
Now note that the first term in \eqref{second_order} vanishes due to the property of the Dirac-measure as defined in \eqref{delta_measure}  and also used 
$\nabla_{\bphi} \ \Omega_\epsilon (\bx_{s,0};\ {\bf{0}})={\bf{0}}$ due to assumptions (\cref{two}) in  \cref{lem_opti}.
Further, note that from \eqref{hamiltonian} we have: 
  \[ H_2\LRp{\bx_{s,2};\ \bp_{s,2};\ \bphi}=\bp_{s,T}  \cdot \Omega_\epsilon (\bx_{s,0};\ {\bphi})=\bp_{s,T}  \cdot \bff_3\LRp{\bx_{s, 2};\ \bphi}.\]
where we have used $\bp_{s,2}=\bp_{s,3}=\bp_{s,T}$ since the added layer with zero weights and biases acts as a message-passing layer due to \cref{prop_admissible}.  Using the above result and the definition of $Q(\bphi)$, \eqref{second_order} can be simplified as:
  \[   \mathbf{D}^2J(\mu_{\bf{0}})[\nu_{\epsilon\bphi}]= -\sum_{s=1}^S \underset{\bphi \sim \mu_{\bf{0}} }{\expect}  
  \LRs{\LRp{\overline{T}(\bphi)-\bphi}^T \ \nabla^2_{\bphi} H_2\LRp{\bx_{s,2};\bp_{s,2};\bphi}\LRp{\overline{T}(\bphi)-\bphi} }\]
  \[=-\underset{\bphi \sim \mu_{\bf{0}} }{\expect}\LRs{\LRp{\overline{T}(\bphi)-\bphi}^T\  Q(\bphi) \ \LRp{\overline{T}(\bphi)-\bphi}}=-{\overline{T}({\bf{0}})}^T\  Q({\bf{0}}) \ {\overline{T}({\bf{0}})},\]
where we used the property of Dirac measure in the last line. Now using \eqref{transp}, the expression for the second variation above can be further simplified as:
\begin{equation}
    \mathbf{D}^2J(\mu_{\bf{0}})[\nu_{\epsilon\bphi}]=-\epsilon^2\bphi^T  Q({\bf{0}})\bphi,\quad \mathrm{where}\ \norm{\bphi}_2\leq 1.
    \label{second_var_simplif}
    \end{equation}
Proceeding similarly for computing the third-order term in \eqref{variation_taylor}, it can be shown after some algebraic manipulations that \cite{wu2019splitting}:
\begin{equation}
{\LRs{\frac{d^3}{d\eta^3}\sJ(\mu_\eta)}_{\eta=\zeta}}= \mathcal{O} \LRp{\underset{\bphi \sim \mu_{\bf{0}} }{\expect}\LRs{\norm{\bphi-\overline{T}(\bphi)}_2^3}}=\mathcal{O} \LRp{\norm{{\bf{0}}-\overline{T}({\bf{0}})}_2^3} = \mathcal{O} \LRp{\epsilon^3},
\label{c_10}
\end{equation}
where we have used  the assumptions in  \cref{exist_th} while simplifying and used the form of transport map \eqref{transp}.
Substituting results \eqref{last_line}, \eqref{second_var_simplif}, \eqref{c_10} in \eqref{variation_taylor} we have:
\[  \sJ(\nu_{\epsilon\bphi})-\sJ(\mu_{\bf{0}})= -\frac{\epsilon^2\bphi^T  Q({\bf{0}})\bphi}{2}  +\frac{1}{6} \mathcal{O} \LRp{\epsilon^3}, \quad \mathrm{where}\ \norm{\bphi}_2\leq 1.\]
Therefore, the Topological derivative in $p-$Wasserstein space at $\mu_{\bf{0}}$ along the direction $\bphi$ is given by: 
   \[ d\sJ(\mu_{\bf{0}};\ \bphi)=-\lim_{\epsilon \downarrow 0}\frac{\sJ(\nu_{\epsilon\bphi})-\sJ(\mu_{\bf{0}})}{\epsilon^2}= \frac{\bphi^TQ(\bf{0})\bphi}{2},\quad \mathrm{where}\ \norm{\bphi}_2\leq 1.\]
\item Finally maximizing the Topological derivative \eqref{topo_der_w}  yields:
\begin{equation}
     \max_{\bphi} \LRp{d\sJ(\mu_{\bf{0}};\ \bphi)}= \max_{\bphi}\LRp{\frac{\bphi^TQ(\bf{0})\bphi}{2}},\quad \mathrm{subject \ to}\ \norm{\bphi}_2\leq 1.
     \label{optim_was}
     \end{equation}
The optimal solution $\bphi^*$ to \eqref{optim_was} is clearly given by:
\[ \bphi^*=v_{max}({\bf{0}}),\]
where $v_{max}({\bf{0}})$ is the eigenvector of $Q({\bf{0}})$ corresponding to  the maximum eigenvalue. Therefore, from \eqref{transp} the optimal transport map satisfies:
\[  \overline{T}({\bf{0}})=\epsilon\bphi^*=\epsilon v_{max}({\bf{0}}),\]
thereby concluding the proof.
\end{inlineenum}

\end{proof}

\section{Numerical experiments}
\label{experim}
In this section, we numerically demonstrate the proposed approach using different types of architecture such as a) Radial basis function neural network; b) Fully connected neural network; and c) vision transformer (ViT). {In all our numerical experiments, we choose  a linear combination of $Swish$ and $tanh$ as our activation function; that is, we take $ \sF=\{ Swish,\ tanh\}$ in \cref{univer}}. Our supplementary file contains additional numerical examples where we considered both simulated and real-world data sets and also demonstrated the approach for a convolutional neural network architecture (CNN). General experimental settings for all the problems and descriptions of methods adopted for
comparison are detailed in \cref{hyper_parameter}. Note that in our numerical results, Proposed (I) refers to \cref{Algo_full} and proposed (II) refers to ``fully automated growing" algorithm mentioned in \cref{automated}.

\subsection{Proof of concept example: Radial basis function (RBF) neural network}

As a proof of concept of our proposed approach, we first consider the problem of learning a 1-dimensional function using a radial basis function (RBF) neural network. In particular, we consider the multi-layered RBF neural network \cite{craddock1996multi,bodyanskiy2020multilayer} with residual connections where $\bg_{t+1}(.;\ .)$ in \eqref{res_o} is given by:
\begin{equation}
   \bg_{i+1}(x_{s,i};\ \btheta_{i+1})=\btheta_{{i+1}}^{(3)}\times \exp\LRp{-\frac{1}{2}\LRp{\btheta_{{i+1}}^{(1)}x_{s,i}+\btheta_{{i+1}}^{(2)}-c}^2}-\btheta_{{i+1}}^{(3)}\times \exp\LRp{-\frac{1}{2}\LRp{c}^2},
    \label{resnet_RBF}
\end{equation}
where  $\btheta_i=\LRs{\theta_{i}^{(1)},\ \theta_{i}^{(2)},\ \theta_{i}^{(3)}}^T\in \real^3$ for $i=0,\dots T-1$, and $c$ is a non-zero parameter of the activation function.  Note that, the second term in \eqref{resnet_RBF} is introduced to satisfy condition \ref{two} of   \cref{prop_admissible} and therefore in this case we have a modified radial basis function. Further, note that \eqref{resnet_RBF} corresponds to using a single neuron in each hidden layer.

\subsubsection{Topological derivative for the modified RBF network \eqref{resnet_RBF}}

The topological derivative is computed based on solving the eigenvalue problem \eqref{eigen_matrixx} where the matrix $\bQ_l$ (equation \eqref{eigen_matrixx}) is given as:
\begin{equation}
\bQ_l= \frac{1}{2}\sum_{s=1}^S   \LRp{\begin{bmatrix}
  0  & 0&  c_1 \ p_{s,l}\ x_{s,l}  \\ 
  0 & 0 &     c_1 \ p_{s,l}\\ 
   c_1 \ p_{s,l}\ x_{s,l} &   c_1 \ p_{s,l} &      0
 \end{bmatrix}},
\end{equation}
where $c_1=c\ \exp\LRp{-\frac{1}{2}\LRp{c}^2}$ and $c$ is a parameter in \eqref{resnet_RBF}.
\subsubsection{Data generation and numerical results}

For generating the training data set, we set $T=15$ in \eqref{res_o} and sample a true set of parameters $\{\btheta_i\}_{i=1}^T$ from 
normal distribution $\sN\LRp{{\bf{0}},3 {\bf{I}}}$, where ${\bf{I}}$ is the $3\times 3$ identity matrix.
Further, we set $c=0.1$ in \eqref{resnet_RBF}.  The training data set $D=\{x_i,\ c_i\}_{i=1}^{5000}$ is then generated by drawing $x_i$ uniformly from $[-2,\ 2]$ and computing the corresponding labels as  $c_i=\Omega^*(x_i)$, where $\Omega^*$ denote the true map to be learnt. In addition, we consider an additional $500$ data points for generating the validation data set and $1000$ data points for the testing data set. The generated true function is shown in \cref{learnt_curve_for_different_cases} (rightmost figure).
\begin{figure}[h!]      

\hspace{-0.7 cm}  
  \begin{tabular}{c}

      \begin{tabular}{c}

          \centering
          \includegraphics[scale=0.38]{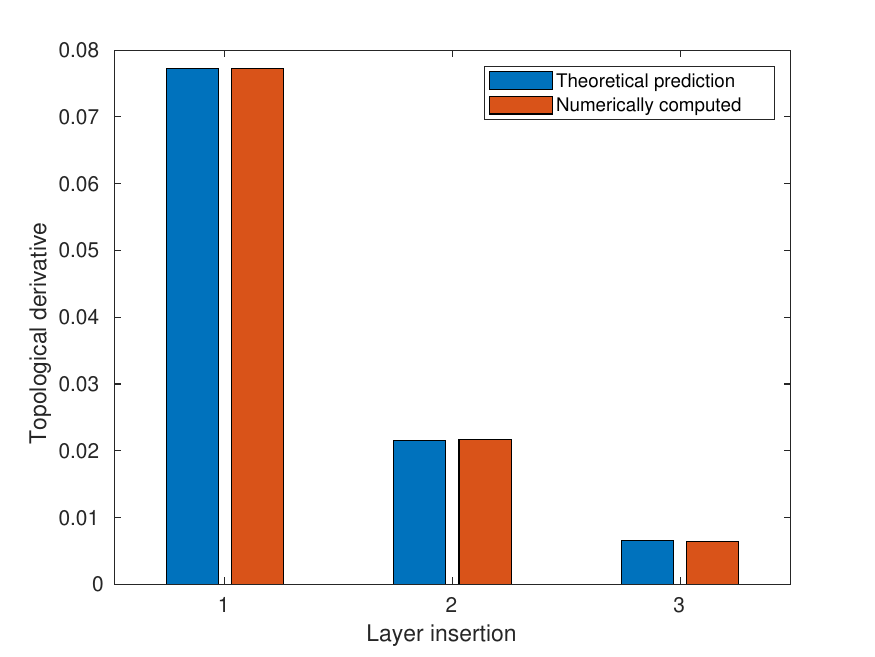}

      \end{tabular}

\hspace{-1 cm}

      \begin{tabular}{c}

          \centering
          \includegraphics[scale=0.38]{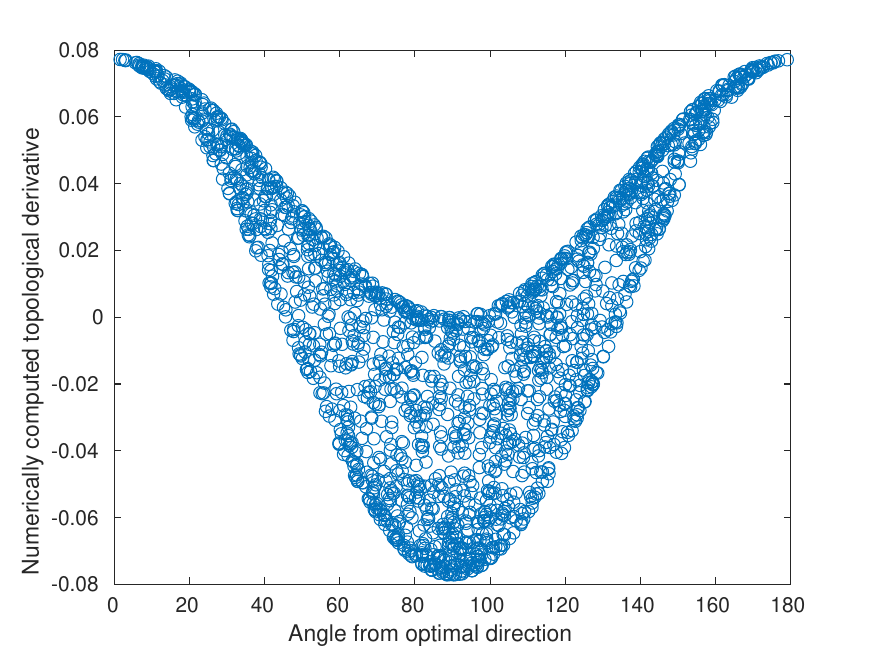}

      \end{tabular}

\hspace{-1 cm}

 \begin{tabular}{c}
           \includegraphics[scale=0.38]{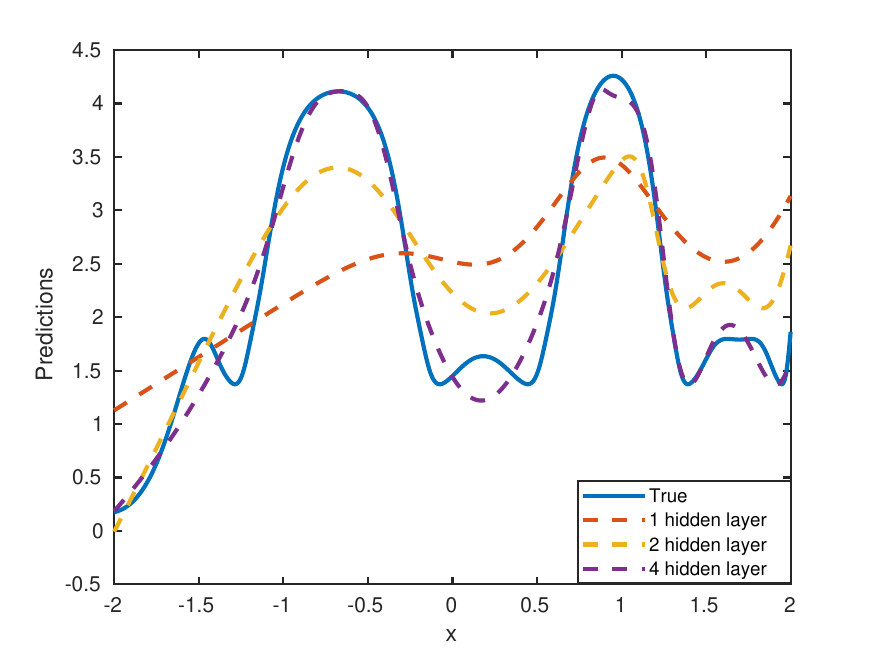}          

      \end{tabular}
  
  \end{tabular}
    \caption{Validating \cref{exist_th}. {\bf Left subfigure}: comparison of the theoretically computed topological derivative (equation  \eqref{topo_de}) with the numerically computed derivative for layer `l' with the largest eigenvalue and initialization  $\Phi_l$ given by \eqref{quant}; {\bf Middle subfigure}: effect of initialization $\bphi$ on the numerically computed derivative $ d\sJ(\Omega_0;\ (l,\  \boldsymbol{\phi},\ \sigma))$ in \eqref{topo_der} for $l=1$ and at the end of $1^{st}$ iteration; {\bf Right subfigure}: learned function using the proposed approach at different iterations of the algorithm. 
    }
  \label{learnt_curve_for_different_cases}
\end{figure} 

\subsubsection{Discussion of results}
As outlined in  \cref{deriv_algo}, we begin the adaptive training process by starting with a one-hidden layer network and progressively adding new layers. Each layer is trained to a (approximately) local minimum before adding a new layer. To validate our proposed theory, we compare the topological derivative predicted by our  \cref{exist_th}  with that computed numerically at each iteration of the algorithm and the results are shown in   \cref{learnt_curve_for_different_cases} (left). 
Note that the derivatives are computed for the layer $l$ with the largest eigenvalue at each iteration and initialization $\bphi=\Phi_l$ given by \eqref{quant}. We use a step size of $\epsilon=1\times 10^{-4}$ to compute  the numerical derivative $ d\sJ(\Omega_0;\ (l,\  \boldsymbol{\phi},\ \sigma))$ in \eqref{topo_der}. \cref{learnt_curve_for_different_cases} (left) shows that the numerical derivative is in close agreement with the theoretical derivative.  \cref{learnt_curve_for_different_cases} (middle) investigates how the choice of initialization $\bphi$ influences the derivative $ d\sJ(\Omega_0;\ (l,\ \boldsymbol{\phi},\ \sigma))$ in \eqref{topo_der}. For this, we set $l=1$ (at the end of training the initial one hidden layer network)  and generated different unit vectors (each sampled vector is represented as a blue circle in \cref{learnt_curve_for_different_cases} (middle)). The x-axis denotes the angle that each sampled vector makes with the optimal direction $\Phi_l$ given by \eqref{quant} and the y-axis denotes the numerically computed derivative for each sampled $\bphi$.  It is clear from \cref{learnt_curve_for_different_cases} (middle) that the maximum 
topological derivative is indeed observed for the optimal eigenvector $\Phi_l$ predicted by \eqref{quant}. \cref{learnt_curve_for_different_cases} (rightmost figure) shows the learned function (by our proposed approach) at each iteration of the algorithm.

\begin{figure}[h!]      
  \centering
  \begin{tabular}{c}
\begin{tabular}{c}
          \includegraphics[scale=0.4]{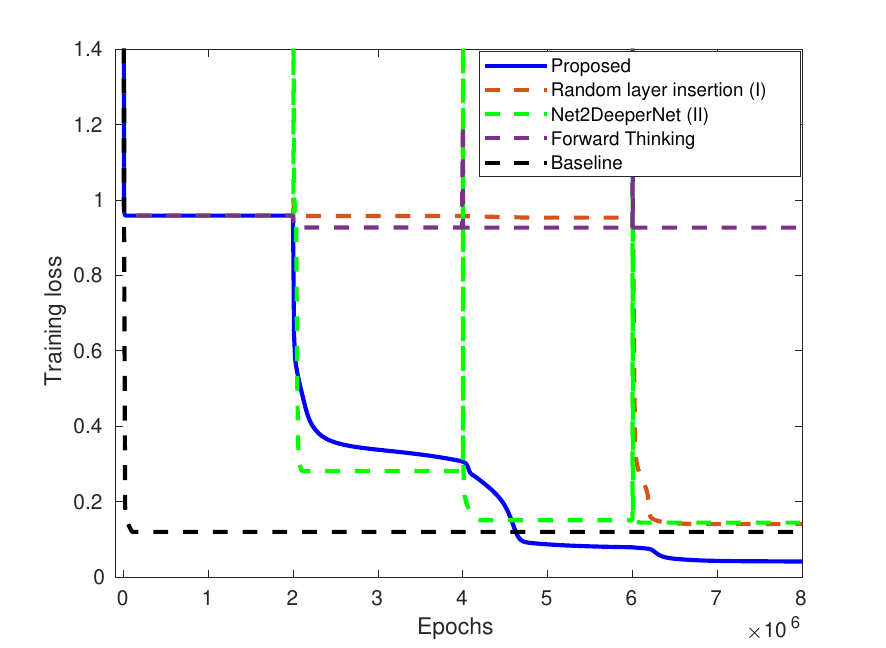}          
 \end{tabular}

      \begin{tabular}{|c | c | c | c|}
        \hline
    Method &  Test loss  ($\mu \pm \sigma$)  \\ \hline
          {\bf{  Proposed }} & $0.051 \pm 0.027$\\ \hline
   Random layer insertion (I)   & $0.099 \pm 0.0514$ \\ \hline
   Net2DeeperNet (II) \cite{chen2015net2net}  & $0.069 \pm 0.0335$  \\ \hline
     Baseline  network  & $0.0897 \pm 0.038$ \\ \hline
Forward Thinking \cite{hettinger2017forward} &  $0.23 \pm 0.150$   \\ \hline
	\end{tabular} 
 \end{tabular}
    \caption{{\bf Left subfigure}:  typical training loss curves for different approaches. {\bf Right subfigure}: summary of results.
    }
  \label{training}
\end{figure} 
Typical training curves for different adaptation strategies are shown in  \cref{training} (left). The sharp dip in the loss in  \cref{training} (left) corresponds to a decrease in loss on adding a new layer.  It is clear from  \cref{training} (left)
that our proposed framework outperforms all other adaptation strategies while also exhibiting superior performance in comparison to a baseline network trained from scratch. It is also interesting to note that while Net2DeeperNet (II) seems to exhibit superior performance in the beginning stages in \cref{training} (left), it is clear that there is no guarantee that  Net2DeeperNet (II)  will escape saddle points as evident from the later stages of adaptation. In contrast, our layer initialization strategy ensures a decrease in loss for sufficiently small $\epsilon$  thereby escaping saddle points.  The decision-making process at each iteration on where to add a new layer is shown in  \cref{rel_top_po_rbf} where the blocks represent the relative magnitude of the derivative for different layers. Note that a new layer is inserted at the location with maximum derivative.
\begin{figure}[h!]      
\centering
  
  \begin{tabular}{c}
\hspace{-1 cm}
      \begin{tabular}{c}

          \centering
            \includegraphics[scale=0.31,trim={0 0 0 5cm},clip]{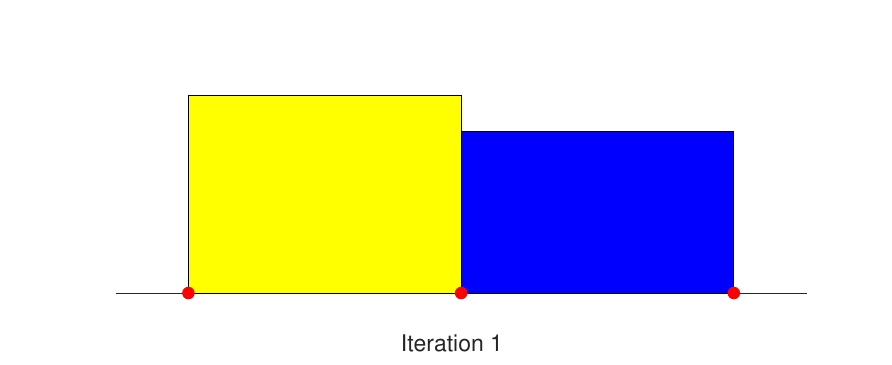}
        

      \end{tabular}

\hspace{-1.5 cm}

      \begin{tabular}{c}

          \centering
          \includegraphics[scale=0.31,trim={0cm 0 0 5cm},clip]{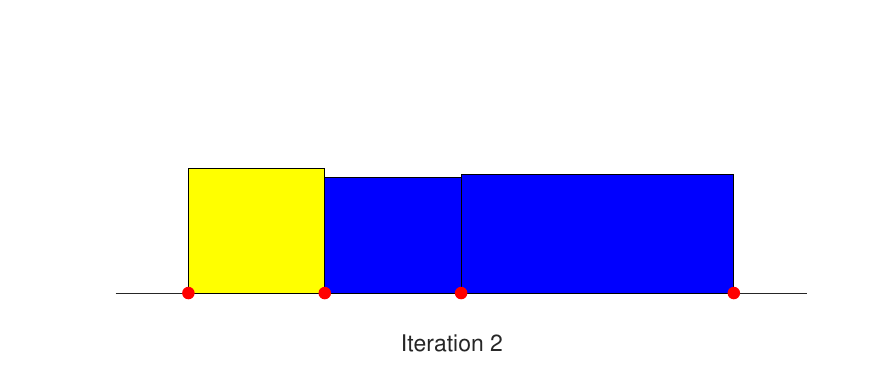}
      \end{tabular}
\hspace{-1.5 cm}

 \begin{tabular}{c}

          \centering
            \includegraphics[scale=0.31,trim={0 0 0 5cm},clip]{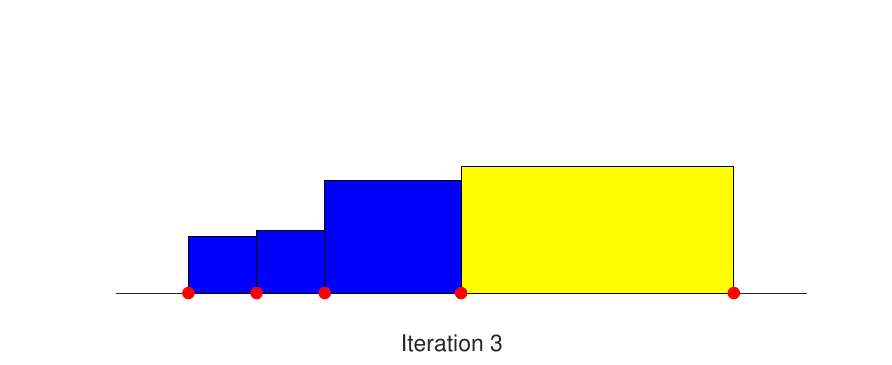}
        

      \end{tabular}
\hspace{-1.5 cm}
         \begin{tabular}{c}

          \centering
          \includegraphics[scale=0.31,trim={0 0 0 5cm},clip]{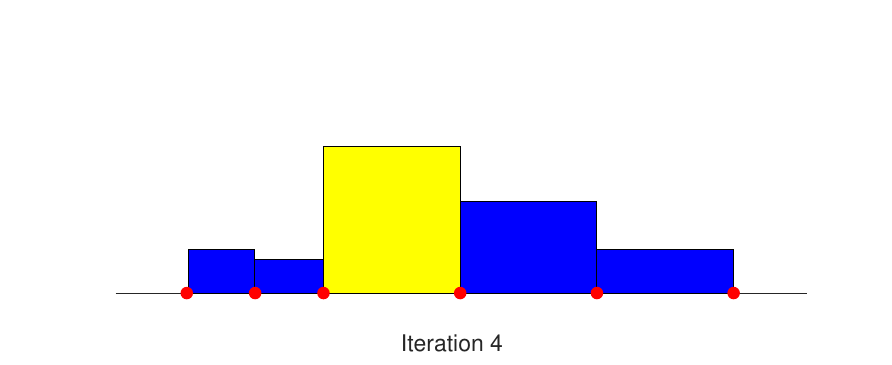}
      \end{tabular}

  \end{tabular}

    \caption{Relative magnitude of the topological derivative for different hidden layers during the adaptation procedure. A new layer is inserted at the position of the maximal topological derivative (yellow block), and the red dots denote the location of layers.
    }
    \label{rel_top_po_rbf}
\end{figure} 
Further, the effectiveness of the approach is quantified by performing an uncertainty analysis where the algorithm is run considering different random initializations for the initial single hidden layer network and the results are tabulated in  \cref{training} (right).  \cref{training} (right) shows that on average our proposed method outperformed all other adaptation strategies while also exhibiting lower uncertainty. Note that in contrast to other approaches, the only source of uncertainty in our approach is the random initialization of initial network $\sQ_1$ in  \cref{Algo_full}. Further, the results in  \cref{training} (right) can be further improved (lower testing loss) if one considers an RBF network with multiple neurons in each hidden layer. {\em{It is also important to note that in practice it is not essential to train the network to a local minima at each iteration of the algorithm as demonstrated in \cref{training} (left) (note that the topological derivative in Theorem \eqref{exist_th} is valid even when the current network $\Omega_0$ is not in a local minima).  This is a significant advantage over some other methods that require the training loss to plateau before growing the network \cite{wu2019splitting,kilcher2018escaping}.}}
One may consider more efficient training approaches where the network is trained for a fixed user-defined number of epochs (scheduler) at each iteration as suggested in \cite{evci2022gradmax}. Our \cref{Algo_full} considers the use of a predefined scheduler to decide when a new layer needs to be added during the training phase, whereas  \cref{Algo_full_auto}
employs the use of a validation metric to automatically detect when a new layer needs to be added. 

In the following sections, we consider experiments with more complex datasets and assess the performance of both  \cref{Algo_full} and \cref{Algo_full_auto}.

\subsection{Experiments with fully connected neural network}

In this section, we consider experiments with fully connected neural networks (see  \cref{Algo_full}).

\subsubsection{Learning the observable to parameter map for 2D heat equation}
\label{poisson_sec}

In this section we consider solving the 2D heat conductivity inversion problem \cite{nguyen2024tnet}. 
The heat equation we consider is the following:
\begin{equation}
\begin{aligned}
     -\nabla \cdot \LRp{e^u \nabla y} & = 20  \quad \text{in } \Omega = \LRp{0,1}^2,\\
    y & = 0 \quad \text{ on } \Gamma^{\text{ext}}, \\
    \textbf{n} \cdot \LRp{e^u \nabla y} & = 0 \quad \text{ on } \Gamma^{\text{root}},
\end{aligned}
\label{heat_equation_o}
\end{equation}
where the conductivity $u$ is the parameter of interest (PoI), $y$ is the temperature field, and $\textbf{n}$ is the unit outward normal vector on Neumann boundary part $\Gamma^{\text{root}}$. The objective is to infer the parameter field $u(\bx)$ given the observation field $y(\bx)$. We construct the input vector for the neural network as $\by=[y(\bx_1),\ \dots y(\bx_{n_0})]$, where $\bx_i$ are fixed locations on $\Omega$. The domain $\Omega$ along with the fixed locations is provided in \cref{mesh_details}. For the present experiment, we set the input dimension $n_0=10$.

\vspace{0.1 cm}

\hspace{-0.6 cm}{\bf{\underline{Data generation and numerical results}}}

\vspace{0.1 cm}

Note that in order to generate the observation vector $\by$, one needs to assume a parameter field $u(\bx)$ and solve \eqref{heat_equation_o}. The  parameter field $u(\bx)$ is generated as follows:
\begin{equation}
   u(\mb{x}) = \sum_{i =1 }^{n_T} \sqrt{\lambda_i}\  \mb{\phi}_i(\bx)\  c_i,\quad \bx \in [0,\ 1]^2,
   \label{twopo}
\end{equation}
where $\LRp{\lambda_i, \ \mb{\phi}_i}$ is the eigenpair of an exponential two-point correlation function from \cite{constantine2016accelerating} and $\bc = \LRp{c_1,\hdots, c_{n_T}}\sim \sN({\bf{0}}, {\bf{I}})$ is a standard Gaussian random vector. Note that we consider $\bc$ as the network output with dimension $n_T = 12$. In addition, $5 \%$ additive Gaussian noise is added to the observations $\by$ to represent the actual field condition. Therefore, the training data points can be denoted as $ \{ \by_i,\ \bc_i\}_{i=1}^S$. Given a new observation data $\by$, the network outputs the vector $\bc$ which can then be used to reconstruct $u(\bx)$ using \eqref{twopo}.  To quantify the performance of our algorithm, we compute the average
relative errors on the test dataset  as follows:

\[ \text{Err}=\frac{1}{M}\sum_{i=1}^M \frac{\norm{\bu^{pred}_i-\bu^{true}_i}^2}{\norm{\bu^{true}_i}^2},\]
where $M$ denotes the number of test data samples, $\bu^{pred}_i$ denotes the neural network prediction for the parameter field $u(\bx)$ for the $i^{th}$ sample and $\bu^{true}_i$ denotes the synthetic ground truth
parameters. Here, $\bu$ is a vector containing the solutions on a $16\times 16$ grid shown in \cref{mesh_details} (right).

\begin{figure}[h!]      
\centering
  \begin{tabular}{c}

      \begin{tabular}{c}

          \centering
          \includegraphics[scale=0.23]{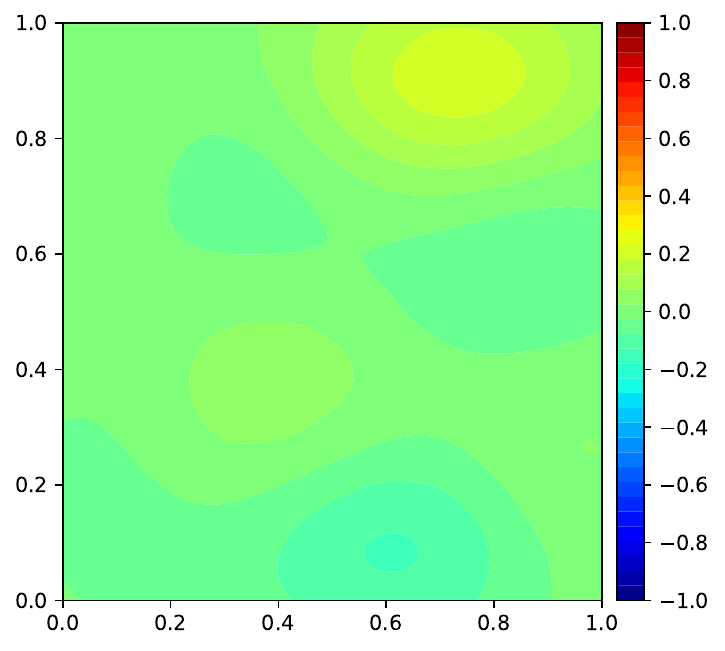}

      \end{tabular}





 \hspace{-0.35 cm}

      \begin{tabular}{c}

          \centering
          \includegraphics[scale=0.23]{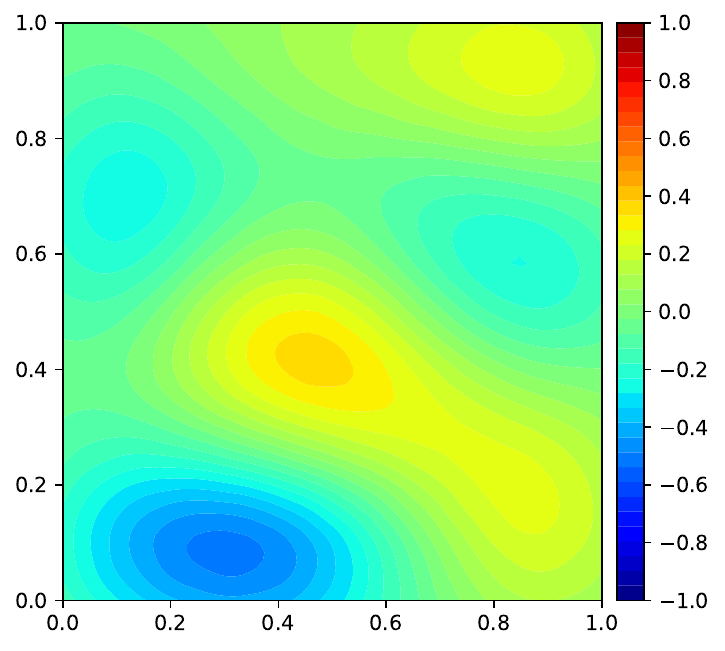}

      \end{tabular}

 \hspace{-0.35 cm}
 
\begin{tabular}{c}

          \centering
          \includegraphics[scale=0.23]{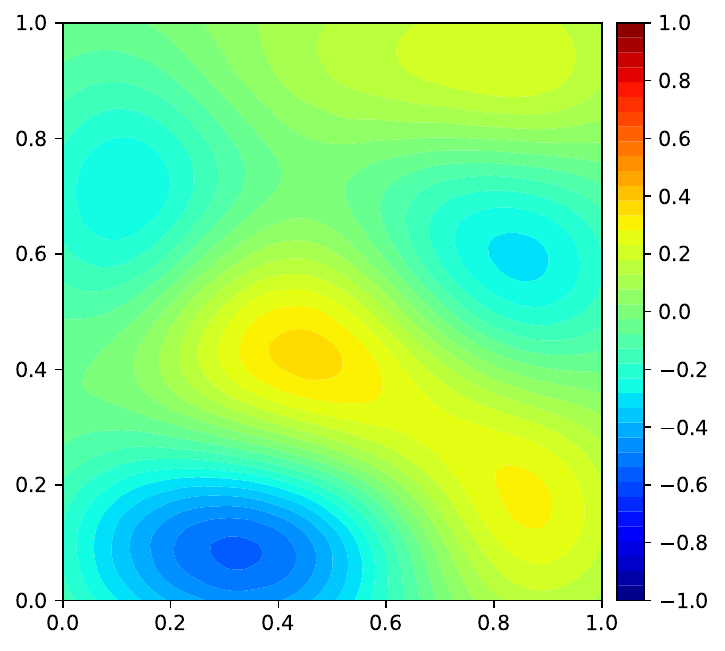}

      \end{tabular}

 \hspace{-0.35 cm}
 
       \begin{tabular}{c}

          \centering
          \includegraphics[scale=0.23]{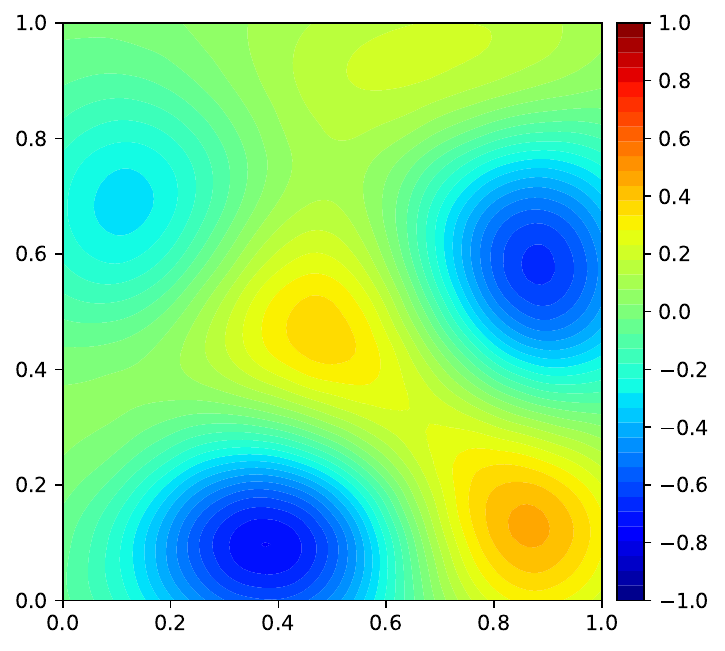}

      \end{tabular}

 \hspace{-0.35 cm}
 
       \begin{tabular}{c}

          \centering
          \includegraphics[scale=0.23]{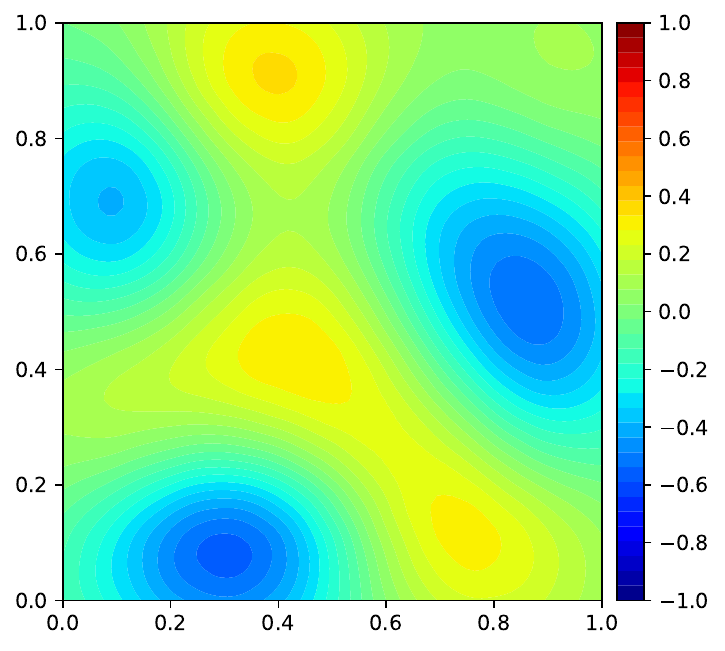}

      \end{tabular}
      
  \end{tabular}\\

  \caption{Evolution of parameter field $u(\bx)$ for a particular test observation upon adding new layers for $S=1000$ (Left to Right): inverse solution after the $1^{st}$ iteration;  inverse solution after the $3^{rd}$ iteration; inverse solution after the $5^{th}$ iteration;  inverse solution after the $7^{th}$ iteration; and the groundtruth parameter distribution.}
\label{param_evolution}
\end{figure}     

Further, we consider experiments with training data set of size $S=1000$ and $S=1500$. We consider an additional $200$ data points for generating the validation data set and $500$ data points for the testing data set. Other details on the hyperparameter settings are provided in \cref{Input_values_different_problems}. Note that for each adaptation approach, the best network (in terms of least average relative error on the validation dataset) out of $100$ random initialization is retained for comparison purposes (such as for generating  \cref{param_evolution} and \cref{comparis_po}).

 \cref{param_evolution} shows the parameter field (for a particular observational data input) predicted by our proposed approach at different iterations of our algorithm. It is interesting to see that while the initial networks with a smaller number of parameters learn the low-frequency components of the solution, later iterations of the algorithm capture the more complex features in the solution (as more parameters are added). In other words, our algorithm identifies missing high-level (high-frequency) features and inserts new layers to capture them.  Further, the final parameter field is visibly close to the ground truth solution as evident from  \cref{param_evolution}.
Further, in an attempt to estimate the accuracy of each adaptation strategy (for a graphical representation of the error by each approach), we define the pointwise average relative error as:
\begin{equation}
    \text{Err}_j=\frac{1}{M}\sum_{i=1}^M \frac{\LRp{\bu^{pred}_{i,j}-\bu^{true}_{i,j}}^2}{\norm{\bu^{true}_i}^2/|\bu_i|},
    \label{error_metric}
\end{equation}
where subscript $j$ denotes the $j^{th}$ component of $\bu_i$ and $|\bu_i|$ denotes the number of elements in the vector. \cref{comparis_po} shows the error $\text{Err}_j$ plotted for  the main adaptation strategies. It is quite clear from   \cref{comparis_po} that our approach outperforms all other adaptation strategies by a good margin.

\begin{table}[h!]
\caption{Statistics ($\mu \pm \sigma$) of the relative error (rel. error) for different methods (2D heat equation)}
\centering
\resizebox{1\textwidth}{!}{
\begin{tabular}{|c | c | c | c|c|c|}
        \hline
    Method &  rel. error  & rel. error & Best rel. error&Training \\ 
     &  ($S=1000$) & ($S=1500$) & $S=1000 \ \vline  \ S=1500$& time for $S=1000$ \\ \hline
     {\bf{  Proposed (II)}}   & ${\bf{0.400}}  \pm 0.032$ & ${\bf{0.391}} \pm 0.033$& ${\bf{0.327}}$   \vline \ ${\bf{0.321}}$ & 11.2 min \\ \hline
          {\bf{  Proposed (I)}} & $0.434 \pm 0.022$ 
&$0.429 \pm 0.023$& 0.351  \vline \  0.364 & 15.1 min\\ \hline
   Random layer insertion (I)   & $0.457 \pm 0.031$ & $0.447 \pm 0.029$& 0.392   \vline \  0.360 & 13.7 min\\ \hline
   Net2DeeperNet (II) \cite{chen2015net2net}  & $0.456 \pm 0.027$ & $0.451 \pm 0.022$& 0.400   \vline \ 0.362   & 12.4 min\\ \hline
     Baseline  network  & $0.50 \pm 0.028$ & $0.489 \pm 0.029$&  0.446   \vline \  0.420 & 16.4 min\\ \hline
Forward Thinking \cite{hettinger2017forward} &  $0.66 \pm 0.033$ &   $0.65 \pm 0.034$&  0.570   \vline \  0.555 & 12 min\\ \hline
NAS (our activation) \cite{li2020system,li2020random}&  $-- \pm --$ &   $-- \pm --$&  0.349   \vline \  0.344 & 376 min\\ \hline
NAS (ReLU activation) \cite{li2020system,li2020random}&  $-- \pm --$ &   $-- \pm --$&  0.355   \vline \  0.347 & 380 min\\ \hline
NAS (tanh activation) \cite{li2020system,li2020random}&  $-- \pm --$ &   $-- \pm --$&  0.350   \vline \  0.342 & 370 min\\ \hline
	\end{tabular} }
 \label{stat_po}
\end{table}

\begin{figure}[h!]      
\centering
  \begin{tabular}{c}

      \begin{tabular}{c}

          \centering
          \includegraphics[scale=0.23]{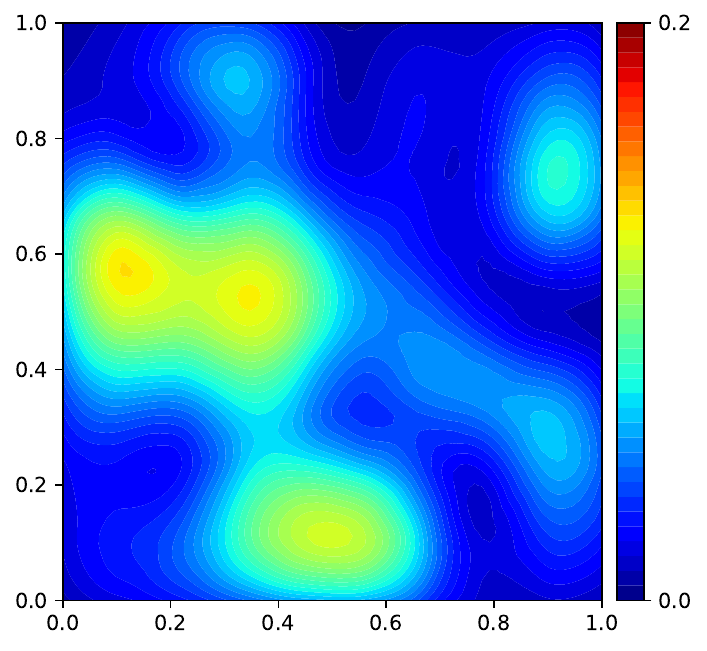}

      \end{tabular}

   \hspace{-0.35 cm}

      \begin{tabular}{c}

          \centering
          \includegraphics[scale=0.23]{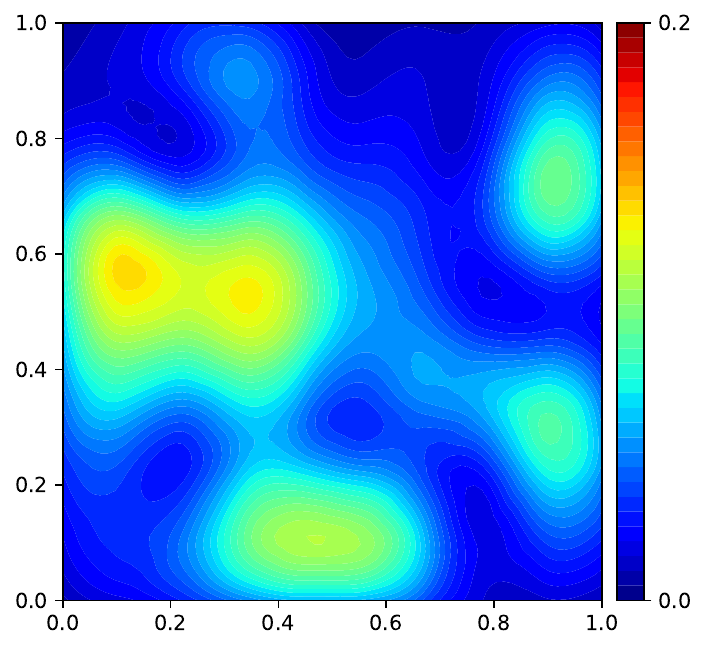}

      \end{tabular}

\hspace{-0.35 cm}

      \begin{tabular}{c}

          \centering
          \includegraphics[scale=0.23]{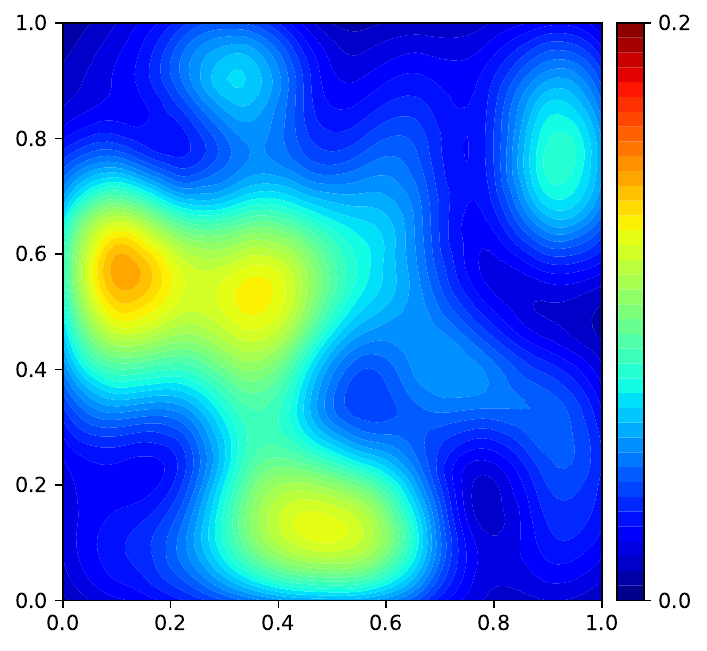}

      \end{tabular}

 \hspace{-0.35 cm}
 
 \begin{tabular}{c}

          \centering
          \includegraphics[scale=0.23]{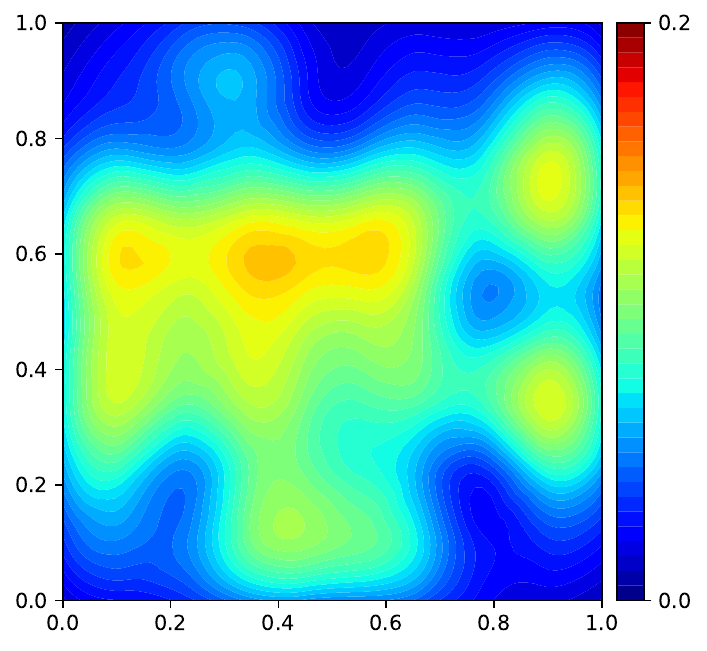}

      \end{tabular}

 \hspace{-0.35 cm}
 
           \begin{tabular}{c}

          \centering
          \includegraphics[scale=0.23]{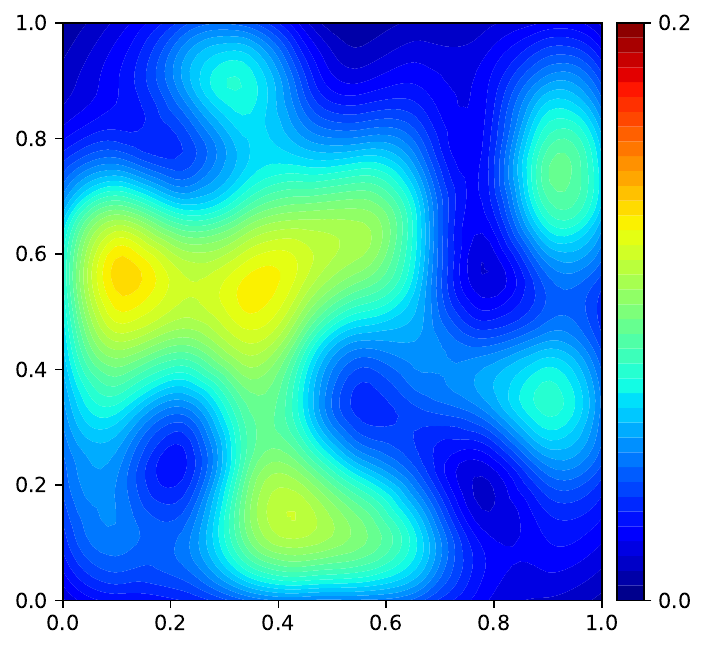}

      \end{tabular}

  \end{tabular}

  \caption{Average error in the predicted parameter field over the spatial domain (for the full test data set,  $S=1000$). Left to right: Proposed method (II); Random layer insertion (I);  Net2DeeperNet (II) \cite{chen2015net2net}; Forward Thinking \cite{hettinger2017forward}; Baseline. }
  \label{comparis_po}
\end{figure}     
The uncertainty associated with each method is also investigated and the results are provided in \cref{stat_po} where it is clear that our proposed method outperformed all other strategies (details of other methods are provided in \cref{hyper_parameter}). Note that the relative error reported is with respect to the test dataset. Further, we have also conducted experiments with a larger data set ($S=1500$) and the results are shown in  \cref{stat_po}.  When considering the best relative error achieved, we observe from  \cref{stat_po} that the effect of our proposed strategy (I) over other methods decreases as the number of data points increases. However, the fully-automated network growing presented in  \cref{automated} (Proposed (II)) seems to outperform all the other methods both in the low and high training data regime. More theoretical analysis is necessary to characterize our algorithm's performance in low and high data regimes (see the discussion in  \cref{conclude} for more details). {We hypothesize that the data-dependent initialization of the added layers in our approach plays a key role in guiding the optimization toward good local minima, leading to better generalization. Note that the baseline network presented in \cref{stat_po}, although having the same number of layers as Proposed (I), uses randomly initialized parameters leading to poor generalization performance.}

{Further, a comparison of training times for different algorithms is provided in \cref{stat_po}. As shown, our fully automated algorithm, Proposed (II), achieves the best relative error in the shortest computational time. Proposed (I) requires more time because each layer is trained for a larger number of epochs (due to the use of a scheduler) compared to Proposed (II). The baseline network takes even longer because it trains a larger number of parameters from the start. While \cref{stat_po} shows that neural architecture search (NAS) can produce good results, it is the most computationally expensive approach among all the methods.}

{Since our algorithm and all methods in \cref{stat_po} uses a new activation function $\sigma(x)=\alpha_1 \ \mathrm{Swish(x)}+\alpha_2\ \mathrm{tanh(x)}$, where the coefficients $\alpha_1,\ \alpha_2$ are defined in \cref{activ_2}, it is necessary to examine what happens when popular activations such as  $\mathrm{tanh(x)}$ or $\mathrm{ReLU(x)}$ are used instead in the hidden layers. We conducted a neural architecture search (NAS) using various activation functions, and the results are shown in \cref{stat_po}. As illustrated in \cref{stat_po}, our activation function performs on par with, and in some cases slightly better than, ReLU and tanh.}

\subsubsection{Learning the observable to parameter map for 2D Navier-Stokes equation}
\label{nav_main}
In this section, we consider another inverse problem concerning the 2D Navier-Stokes equation for
viscous and incompressible fluid \cite{li2020fourier} written as:
\begin{equation}
\begin{aligned}
    \partial_t u(\bx,\ t)+{\bv(\bx,\ t)\cdot \nabla} u(\bx,\ t)&=\nu \Delta u(\bx,\ t)+f(\bx),\quad \bx \in \LRp{0,\ 1}^2,\ t\in (0,\ T] \\
    \nabla \cdot \bv(\bx,\ t)&=0,\quad \quad \quad \quad \quad \quad \quad \quad \ \bx \in \LRp{0,\ 1}^2,\ t\in (0,T]\\
    u(\bx,\ 0)&=u_0(\bx),\quad \quad \quad \quad \quad \quad \ \bx \in \LRp{0,\ 1}^2,
\end{aligned}
\label{navier_equation_o}
\end{equation}
where $\bv(\bx,\ t)$ is the velocity field, $u(\bx,\ t)$ is the vorticity, $u_0(\bx)$ is the initial vorticity, $f(\bx)=0.1\LRp{\sin{(2\pi(x_1+x_2))}+\cos{(2\pi (x_1+x_2))}}$ is the forcing function, and $\nu=10^{-3}$ is the viscosity coefficient. The spatial domain is discretized with $32 \times 32$ uniform mesh, while the
time horizon $t \in (0,10)$ is subdivided into $1000$ time steps with $\Delta t=10^{-2}$. Our objective is to reconstruct the initial vorticity $u_0(\bx)$ from the measurements of vorticity at $20$ observed points at the final time $T=10$ (denoted as $\by$), i.e, in this case, we have input dimension $n_0=20$.  Additional details on dataset generation is provided in \cref{nav_addi}.

We consider experiments with training data set of size $S=250$ and $S=500$. 
We consider an additional $50$ data points for validation data set and $200$ data points for testing data set. Other details on the hyperparameter settings are provided in \cref{Input_values_different_problems}. 
\begin{figure}[h!]      
\centering
  \begin{tabular}{c}

      \begin{tabular}{c}

          \centering
          \includegraphics[trim={0 0 2cm 0},clip,scale=0.42]{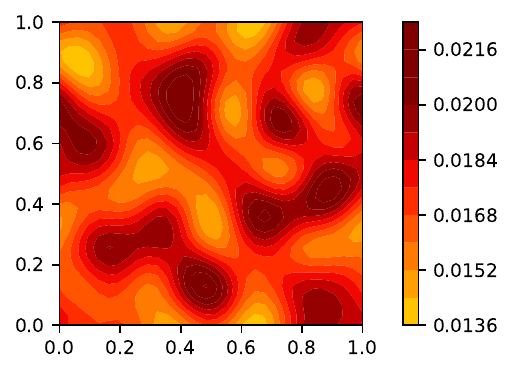}

      \end{tabular}

  \hspace{-0.5 cm}

      \begin{tabular}{c}

          \centering
           \includegraphics[trim={0 0 2cm 0},clip,scale=0.42]{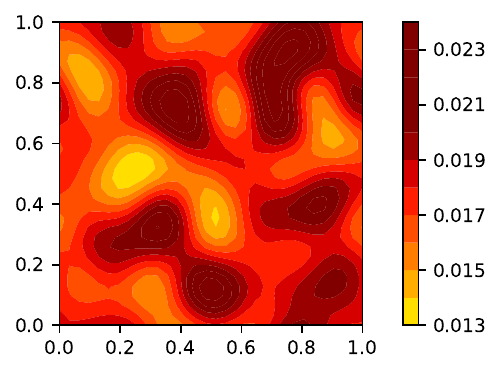}

      \end{tabular}

\hspace{-0.5 cm}

      \begin{tabular}{c}

          \centering
         \includegraphics[trim={0 0 2cm 0},clip,scale=0.42]{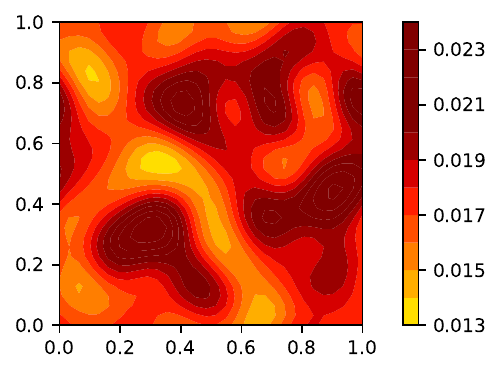}

      \end{tabular}

\hspace{-0.5 cm}

 \begin{tabular}{c}

          \centering
        \includegraphics[trim={0 0 2cm 0},clip,scale=0.42]{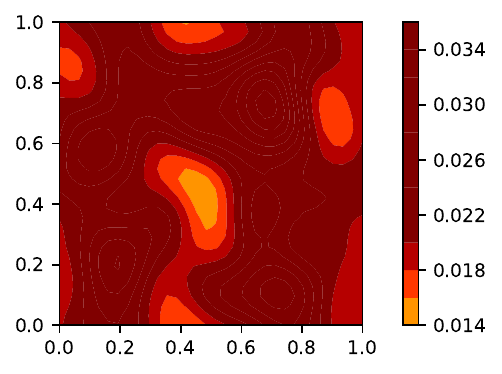}

      \end{tabular}

\hspace{-0.5 cm}

  \begin{tabular}{c}

          \centering
          \includegraphics[scale=0.42]{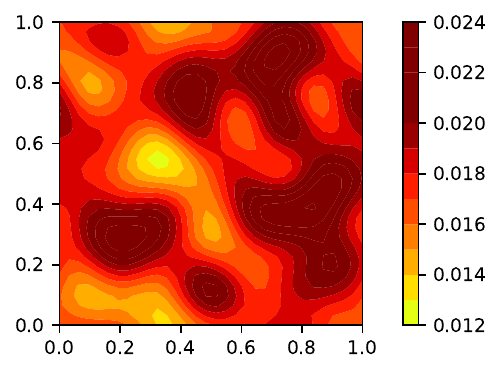}

      \end{tabular}
      
  \end{tabular}
  \caption{Average error in the predicted parameter field over the spatial domain (for the full test data set, $S=250$). Left to right: Proposed method (II) (best rel. error: 0.295); Random layer insertion (I) (best rel. error: 0.299); Net2DeeperNet (II) \cite{chen2015net2net} (best rel. error: 0.309); Forward Thinking \cite{hettinger2017forward} (best rel. error: 0.362); Baseline (best rel. error: 0.305). }
  \label{rel_error_nav}
\end{figure} 
 \cref{rel_error_nav} shows the error $\text{Err}_j$ (equation \eqref{error_metric}) plotted for each approach (for S=250) where it is clear that our proposed approach provided the best results. Additional results on the performance (statistics of relative error) for each method is also provided in  \cref{stat_na}, where we observed 
that in the low data regime, when one considers the average relative error, our proposed approach outperformed all other strategies, performing on par with a neural architecture search algorithm. However, as the dataset size increased, we observed that random layer insertion (I) strategy performed equally well to our approach (see \cref{stat_na}). We hypothesize that our initialization strategy based on local sensitivity analysis matters more (in terms of generalization) in the low-data regime. Similar observations have been made in \cref{poisson_sec}.

\subsection{Topological derivative informed transfer learning approach}
\label{trans_lear}
Transfer learning is a machine learning technique in which knowledge gained through one task or dataset is used to improve the model's performance on a new dataset by fine-tuning the model on the new dataset. In this section, we demonstrate how the results in \cref{exist_th} can be used for adapting a pre-trained model in the context of transfer learning.
\subsubsection{Improving performance of pre-trained state-of-the-art machine learning models}
\label{vision_t}
 Using models pre-trained on large datasets and transferring to tasks with fewer data points is a commonly adopted strategy in machine learning literature \cite{dosovitskiy2020image}. In this section, we consider a tiny vision transformer model (ViT) pre-trained on ImageNet dataset (more details on the specific architecture employed is provided in \cref{vit_tran}). Our objective is to fine-tune this model to achieve the best performance on the CIFAR-10 dataset. The traditional approach
is to modify the output layer, i.e the MLP (multilayer perception) head of the ViT to have an output dimension of $10$ and retrain the whole network to achieve the best performance on CIFAR-10 dataset. Let's call the best-performing model  obtained this way as ``ViT baseline" (as denoted in \cref{ViT})
Our objective in this section is to use the topological derivative approach as a post-processing stage to improve the performance of ``ViT baseline" by further adapting the MLP Head using  \cref{Algo_full}.

\begin{table}[h!]
\caption{Accuracy achieved by different adaptation strategies}
\centering
\begin{tabular}{|c | c | c | c|c|c|c|}
        \hline
    ViT  &  Proposed (II) & Proposed (I)  & Random layer & Net2DeepNet \cite{chen2015net2net}& Forward  \\ 
    baseline  &  &  &insertion (I) & & Thinking \cite{hettinger2017forward} \\ \hline
         $90.9\%$ & $91.37 \%$ &  ${\bf{91.52}}\%$ &  $91.11 \pm 0.027$ & $91.11\pm 0.0137$ & $90.9 \pm 0.0$\\ \hline
	\end{tabular} 
 \label{ViT}
\end{table}
The best accuracy achieved by different adaptation strategies is shown in  \cref{ViT} and it is clear that our approach produces the best results. Note that for this task there is no source of randomness in our proposed approach and hence the statistics is not reported. However, other adaptation strategies has an inherent randomness due to random initialization of an added layer.
Further, Forward Thinking \cite{hettinger2017forward} was unable to improve the performance over ViT baseline due to the freezing of previous layers at each adaptation step.  Other details of hyperparameters used for the problem are provided in \cref{hyper_parameter_n}. {Numerical experiments on fine-tuning internal encoder blocks, rather than the MLP head, are beyond the scope of the present work. In such a setting, our approach could be used to address the following questions: (i) which encoder block should be adapted; (ii) for the selected encoder block, where to add a new layer in the MLP; and (iii) how to initialize the added layer. State-of-the-art fine-tuning methods, such as LoRA \cite{hu2022lora}, while effective, do not address these questions in a principled manner. This type of multi-level adaptation framework is beyond the scope of the present work and will be explored in future research.}

\subsubsection{Application in parameter-efficient fine tuning}
\label{sci_tran}
Parameter-efficient fine tuning is a transfer learning approach to address catastrophic forgetting \cite{kirkpatrick2017overcoming} by freezing many parameters/layers in the pre-trained network thereby preventing overfitting on the new data \cite{poth2023adapters}. 
The procedure adopted in parameter-efficient transfer learning may be summarized as follows: a) Train a neural network from scratch on the large dataset  $\sD_s$; b) Fine tune a small part of the network (by freezing parameters in rest of the network) to fit the sparse data-set $\sD_e$. 
However, in this procedure, it is often unclear on which hidden layers of the network needs retraining to fit the new data $\sD_e$. Subel et al. \cite{subel2022explaining} pointed out that common wisdom guided transfer learning wherein one trains the last few layers is not necessarily the best option.  Often in literature one finds that the layers that need retraining are treated as a hyperparameter and a random search is conducted to arrive at the best decision \cite{olson2024transformer}.  

\begin{figure}[h!]      
 \centering
  \begin{tabular}{c}

      \begin{tabular}{c}

          \centering
          \includegraphics[scale=0.23]{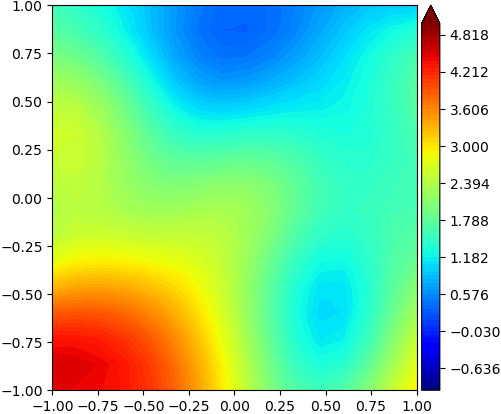}

      \end{tabular}

      \begin{tabular}{c}

          \centering
          \includegraphics[scale=0.23]{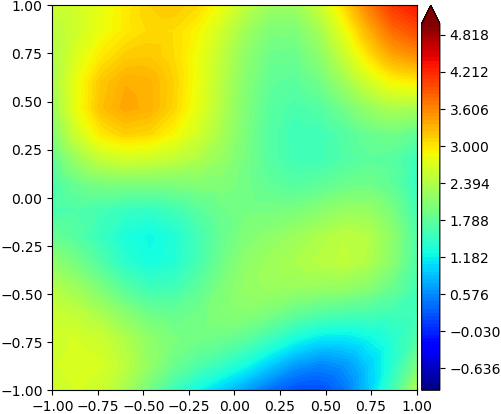}

      \end{tabular}

      \begin{tabular}{c}

          \centering
          \includegraphics[scale=0.23]{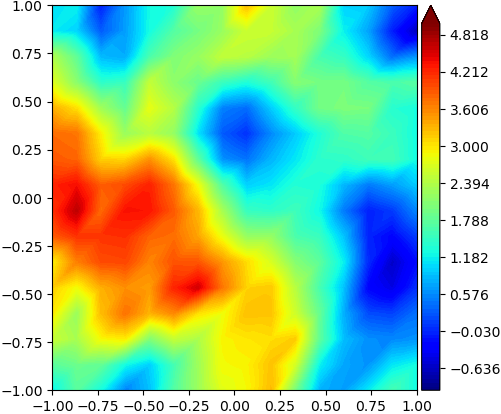}

      \end{tabular}

            \begin{tabular}{c}

          \centering
          \includegraphics[scale=0.23]{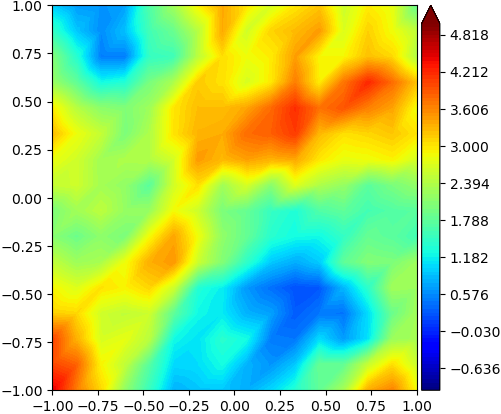}

      \end{tabular}

  \end{tabular}
  \caption{Samples of conductivity field drawn from different distributions. Left to right: First two are samples from autocorrelation  prior; Last two are samples from a BiLaplacian prior.  }
\label{param_efficient_tr}
\end{figure} 
In this section, we demonstrate how the topological derivative in \cref{exist_th} could be used to inform which layers need retraining in parameter-efficient transfer learning. In particular, we seek to determine the most sensitive part of the network with respect to 
the new data set $\sD_e$. This is accomplished by defining the loss functional in \eqref{topo_der} based on the new data set $\sD_e$ and computing the topological derivative which informs where to add a new layer as described in  \cref{deriv_algo}. 
For demonstration, we revisit the heat equation \eqref{heat_equation_o} where the objective now is to learn a surrogate for the parameter to observable map (PtO). In particular, the network takes in the conductivity field $\bu\in \real^{256}$ as input and predicts the corresponding temperature $\by\in \real^{50}$ at fixed locations on the domain.
To train the initial network, the data-set $\sD_s$ consists of $10,000$ data points where the conductivity field $\bu$ is drawn from a BiLaplacian prior \cite{wittmer2023unifying} with mean ${\bf{\mu}}=2$ and covariance $\sC$ given in \eqref{priors} with parameters $(\gamma,\ \delta)=(0.1,\ 0.5)$. It is expected that with the passage of time the distribution of conductivity field $\bu$ changes due to changes in field conditions. To simulate this shift in distribution, the new data-set $\sD_e$ consists of $50$ samples where $\bu$ is drawn from a Gaussian autocorrelation smoothness prior \cite{kaipio2006statistical} with mean $\bf{\mu}=2$ and covariance  ${\bf{\Gamma}}$ given in \eqref{priors} with with $\sigma^2=2,\ \rho=0.5$.
\begin{equation}
   \sC=\LRp{\delta I+\gamma \nabla \cdot ( \nabla)}^{-2},\quad {\bf{\Gamma}}_{ij}=\sigma^2\exp \LRp{-\frac{\norm{\bx_i-\bx_j}_2^2}{2\rho^2}}.
   \label{priors}
\end{equation}
\begin{table}[h!]
\caption{Mean squared error achieved by different transfer learning approaches}
\centering
\begin{tabular}{|c | c | c | c|c|}
        \hline
    Complete &  Traditional transfer learning& Proposed  approach &  Exhaustive search \\ 
   retraining   &  &  & Best case \vline \ Worst case \\ \hline
         $3.572$ & $5.040$ &  ${\bf{2.576}}$ &  2.604 \vline \ \ \  3.240  \\ \hline
	\end{tabular} 
 \label{transfer}
\end{table}
Samples of generated conductivity fields from the two distributions are shown in  \cref{param_efficient_tr}.  Performance of different transfer learning approaches on a test data set of size $1000$ is shown in \cref{transfer} where it is clear that our proposed approach provides the best results. In  \cref{transfer}, ``Complete retraining" refers to the case when the entire pre-trained network is trained to fit the new data; ``Traditional transfer learning" refers to retraining only the last layer in the network; ``Proposed approach" refers to the topological derivative approach where a new layer is added and retrained along with the first and last layers; and ``Exhaustive search" refers to the case of retraining the last layer, first layer and a randomly chosen hidden layer. In addition, we also observed that while our proposed approach and traditional transfer learning takes approximately $3$ min to retrain the network, an exhaustive search procedure conducted serially took around $20$ min to find the best retrained network.
Additional justification (with numerical results) on why topological derivative serves as a good indicator for
determining where to add a new layer is provided in \cref{param_eff}.

\section{Conclusion}
\label{conclude}

In this work, we derived an expression for the  ``topological derivative" for a neural network and demonstrated how the concept can be used for progressively adapting a neural network along the depth. In \cref{optimal_trans}, we also showed that our layer insertion strategy can be viewed through the lens of an optimal transport problem. 
Numerically, we observed that our proposed method outperforms other adaptation strategies for most datasets considered in the study. {In particular, our method  (especially Proposed (I)) exhibits superior performance compared to other approaches in the low training data regime. Numerical results further suggest that employing a scheduler (Proposed I) is in general effective leading to superior performance in comparison to other strategies although occasionally our Proposed (II) (fully automated growing in \cref{additional_rem}) exhibited superior performance. Note that when the training dataset is very large, it is sufficient to ensure a good fit to the training data to achieve strong generalization. In this setting, the problem of architecture adaptation—namely, how to distribute parameters within a network—becomes less relevant. Instead, it is often sufficient to consider a very deep network and train it to fit the data well, as demonstrated by the success of large language models.}  

Note that the topological derivative approach is a greedy approach where one chooses the best decision (locally) on where to add a layer along with the initialization at each step to reach at a globally optimal architecture. Here we define globally optimal architecture as the one that gives the lowest generalization error out of all possible architecture sampled from a predefined search space (see  \cref{hyper_parameter} for the search space we consider) and considering all random initializations from the normal distribution $\sN\LRp{{\bf{0}},\sigma_n^2 {\bf{I}}}$. In spite of our approach being greedy, we observe that our approach achieves comparable performance to that of a neural architecture search algorithm \cite{li2020random} for the datasets considered in this study. Deriving bounds on  the distance between the best architecture from our algorithm and the globally optimal architecture is beyond the scope of present work and will be investigated in the future (previous works have developed distance metric in the space of neural network
architectures \cite{kandasamy2018neural}). We anticipate that this analysis would provide insight on why our method (especially Proposed (I)) work particularly well in the low training data regime in comparison to other methods. 



\section*{Acknowledgment}
The authors would like to thank  Hai Van Nguyen for fruitful
discussions and helping to generate some of the data sets for computational experiments. The authors also acknowledge the Texas Advanced Computing Center (TACC)
at The University of Texas at Austin for providing HPC, visualization, database, or grid
resources that have contributed to some of the results reported within this paper. \href{http://www.overleaf.com}{URL:}  \url{http://www.tacc.utexas.edu.}

\bibliographystyle{siamplain}


\section*{SUPPLEMENTARY MATERIAL: TOPOLOGICAL DERIVATIVE APPROACH FOR DEEP NEURAL
NETWORK ARCHITECTURE ADAPTATION}

 \phantom{.}

\appendix

\begin{section}{Architecture adaptation for convolutional neural network (CNN)}
\label{CNN_application}
Note that though  \cref{opt_top_full} and \cref{full_topo} present the approach in the context of a fully connected network, the framework can also be applied to a CNN architecture where the input is a tensor of shape: (input height) $\times$ (input width) $\times$ (input channels). Note that in this case the function $\bff_{t+1}(\bx_{s,t};\ \Btheta_{t+1})$ in \eqref{training_problem} simply represents the 2D convolutional layer where $\bx_{s,t}$ represents the vectorized input and $\btheta_{t+1}$ represents the vectorized filter parameters. Note that $\bff_t(.;\ .)$ satisfies the assumptions in \cref{exist_th} when employing differentiable activation functions.

\begin{algorithm}
	\caption{CNN architecture adaptation algorithm}
	\hspace*{\algorithmicindent} \textbf{Input}: Training data $\bX$, labels $\bC$, validation data $\bX_1$, validation labels $\bC_1$, number of channels in each hidden layer $n$, loss function $\Phi$, filter size $f\times f \times n$, stride $s$, number of channels to activate in each hidden layer ${m}$, number of iterations $N_n$, parameter $\epsilon,\ \epsilon^t$, parameter $T_b$, hyperparameters and predefined scheduler for optimizer (\cref{hyper_parameter_n}).\\
	\hspace*{\algorithmicindent} \textbf{Initialize}:   Initialize  network $\sQ_1$ with $T_b$ hidden layers.\\
	\begin{algorithmic}[1] 
  \State Train network $\sQ_1$  and store the validation loss $(\epsilon_v)^{1}$.
  		\State set $i =1$,  $(\epsilon_v)^{0}>>(\epsilon_v)^{1}$, $\Lambda_l^m\geq \epsilon^t$
		\While{$i \le N_n$ \textbf{and} $\LRs{(\epsilon_v)^{i}\leq (\epsilon_v)^{i-1}}$\textbf{and} $\Lambda_l^m\geq \epsilon^t$} 
 \State Compute the topological derivative for each layer $l$  using \eqref{compute_full} and store as $\{ \Lambda_l\}$, also store $\Lambda_l^m=\max_l\{ \Lambda_l\}$.
   \State Store the corresponding eigenvectors for each layer as $\Phi_l$ representing $n$ filters of size $f\times f \times n$.
        \State Obtain the new network $\sQ_{i+1}$ by adding a new layer at position $l^*=\argmax\limits_l \{\Lambda_l\}$ with filter parameters  $\epsilon\Phi_{l^*}$.
        \State Perform a backtracking line search to update $\epsilon$ as outlined in \cref{back_track}.
        \State Update the parameters for optimizer if required (refer  \cref{hyper_parameter_n} for scheduler details).
        \State Train network $\sQ_{i+1}$  and store the best validation loss $(\epsilon_v)^{i+1}$ and the best network $\sQ_{i+1}$.
		\State $i = i+1$
		\EndWhile
	\end{algorithmic} \label{Algo_full_CNN}
\hspace*{\algorithmicindent} \textbf{Output}: Network $\sQ_{i-1}$
\end{algorithm}

In order to satisfy condition \ref{one} in  \cref{prop_admissible} (ResNet propagation), zero-padding is employed to preserve the original input size. Note that in this case $\Phi_l$ in \eqref{compute_full} denotes the filter parameters for the added layer (vectorized). Our algorithm for a CNN architecture is given in  \cref{Algo_full_CNN} and is self-explanatory.
Note that our proposed approach can be applied to any architecture as long as conditions in  \cref{prop_admissible} and \cref{exist_th} are satisfied.

\end{section}

\begin{section}{General setting for numerical experiments}{}
   \label{hyper_parameter}
All codes were written in PyTorch. Throughout the study, we have employed the Adam optimizer \cite{kingma2014adam} for minimizing the loss function.      Our proposed approach is compared with a number of different approaches as given below: 
\begin{equation*}
 \begin{aligned}
\text{Proposed  (I)}&: \text{Semi-automated architecture adaptation based on the }\\
& \text{\ \ \ topological  derivative approach as described in  }\\
& \text{\ \ \ \cref{Algo_full}, \cref{Algo_full_CNN}}.\\
\text{Proposed  (II)}&: \text{Automated architecture adaptation based on the }\\
& \text{\ \ \ topological derivative approach as described in }\\
& \text{\ \ \ \cref{automated} and \cref{Algo_full_auto}}.\\
\text{Random layer insertion (I)}&: \text{Adaptation strategy based on inserting a new layer at }\\
& \text{\ \ \ random position initialized with $\epsilon \Phi$ where $\Phi$ is a random }\\
& \text{\ \ \ unit vector}.\\
\text{ Net2DeeperNet  (II) }&: \text{Increasing depth of network based on  based on function }\\
& \text{\ \ \  preserving transformations. A layer is inserted at random  }\\
& \text{\ \ \ position with a small Gaussian noise added to the }\\
& \text{\ \ \ parameters to break symmetry \cite{chen2015net2net}}.\\
\text{ Baseline network (B)}&: \text{Training a randomly initialized network with the same }\\
& \text{\ \ \ final architecture as obtained by our proposed approach}.\\
\text{Forward Thinking (H)}&: \text{Algorithm for layerwise adaptation proposed by}\\
& \text{\ \ \  Hettinger et al. \cite{hettinger2017forward}}.\\
\text{Neural Architecture Search (NAS)}&: \text{Random search with early stopping proposed }\\
& \text{\ \ \  in \cite{li2020random,li2020system}}.\\
\end{aligned}  
\end{equation*}
We maintain the same activation functions and hyperparameters for all the adaptation strategies in order to make a fair comparison (except Proposed (II) which does not have a predefined scheduler (see  \cref{automated})). Note that the strategies Proposed (I), Random layer insertion (I), and Net2DeeperNet (II) differ only in the way a newly added layer is initialized and where a new layer is inserted. The Baseline network (B) is trained for the same total number of epochs as the proposed method (I). Note that, for all methods, the reported numerical results correspond to the model that achieves the best validation loss.  Further, the uncertainty in each approach is quantified (presented in the numerical results) by running the algorithm for different random initializations (100 in this case). For all problems we consider $\sF=\{ Swish,\ tanh\}$ (see  \cref{act_const}) for constructing the activation function $\sigma(x)$. Further, for neural architecture search \cite{li2020random,li2020system}, the search space is defined by considering architectures with a maximum of $N_n$ layers with each layer having a maximum of $n$ neurons (refer to \cref{hyper_parameter_n} for the values of $N_n$ and $n$ for each problem). We randomly sample a total of $50,000$ architectures in the beginning and follow the procedure in \cite{li2020random,li2020system} to arrive at the best architecture.
\end{section}

\begin{section}{Backtracking algorithm}
\label{back_tr}
The backtracking line search for choosing parameter $\epsilon$ is provided in \cref{back_track}. Note that lines 1-4 in \cref{back_track} try to find the $\epsilon$ that leads to a maximum decrease in loss $\sJ$. 
The values of $\tau_1$ and $\epsilon$ used is provided in   \cref{hyper_parameter_n}.
    \begin{algorithm} 
	\caption{Backtracking line search}
	\hspace*{\algorithmicindent} \textbf{Input}: Loss function $\sJ$, initialization of the added layer $\Phi_l$,  parameter $\tau_1$, $\epsilon$.\\
	\begin{algorithmic}[1] 
   \While{$\sJ((\epsilon+\tau_1) \Phi_l)\leq \sJ(\epsilon \Phi_l)$} 
   \State $\epsilon=\epsilon+\tau_1$
   \EndWhile
	\end{algorithmic} \label{back_track}
\hspace*{\algorithmicindent} \textbf{Output}: Return $\epsilon$ as the solution.
\end{algorithm}

\end{section}

\begin{section}{Additional numerical results}
\label{additional_numerical_res}

\subsection{Learning the observable to parameter map for 2D heat equation}
\label{poisson_supply}

\cref{mesh_details} describes the domain $\Omega$ along with the fixed locations $\bx_i$ in \cref{poisson_sec}.

\begin{figure}[h!]      
    \centering
  \begin{tabular}{c}
  
      \begin{tabular}{c}

          \hspace{0.1 cm}
              \includegraphics[scale=0.18]{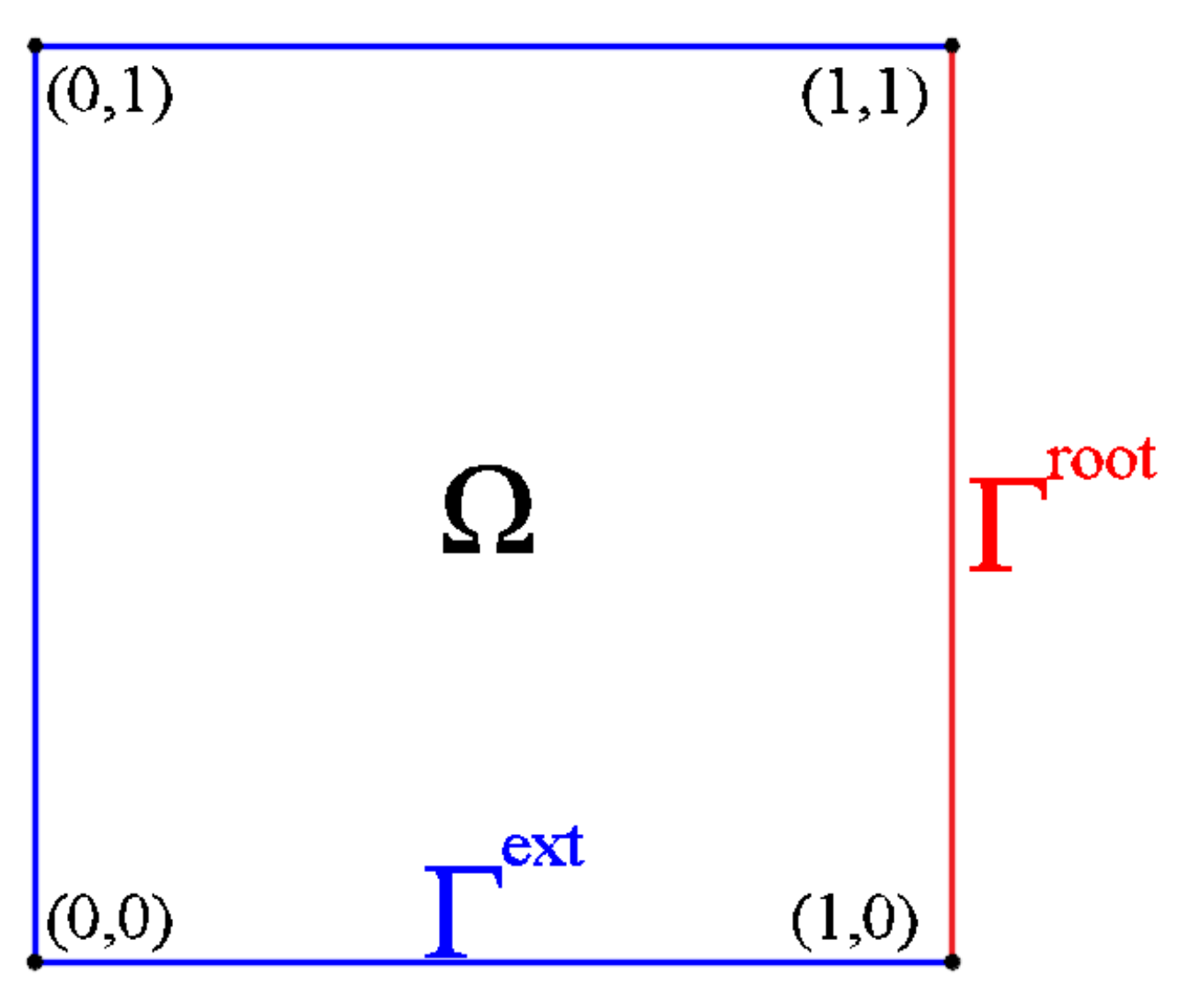}
        

      \end{tabular}
 \begin{tabular}{c}

          \hspace{0.8 cm}
           \includegraphics[scale=0.7]{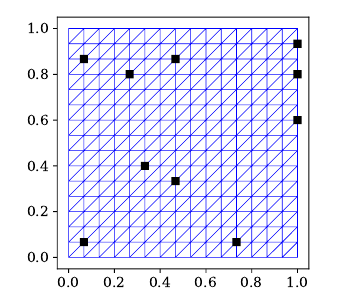}
         
      \end{tabular}

  \end{tabular}
       \caption{ 2D heat conductivity inversion problem (Left to Right): The domain and the boundaries; A $16\times 16$  finite element mesh and $10$ observational locations.}
  \label{mesh_details}
\end{figure}

\subsection{Learning the observable to parameter map for 2D Navier-Stokes equation}
\label{nav_addi}

\paragraph{Data generation and numerical results}

For dataset generation in \cref{nav_main}, we draw
samples of $u(\bx,\ 0)$ based on the truncated Karhunen-Lo\`eve expansion as:
\begin{equation}
    u(x,\ 0)=\sum_{i=1}^{n_T}\sqrt{\lambda_i}\omega_i(x)c_i,
    \label{kl_nav}
\end{equation}
 where $\bc = \LRp{c_1,\hdots, c_{n_T}}\sim \sN({\bf{0}},I)$ is a standard Gaussian random vector, $\LRp{\lambda_i, \ \mb{\omega}_i}$  are eigenpairs obtained by the eigendecomposition of the covariance operator $7^{\frac{3}{2}}\LRp{-\Delta+49 {\bf{I}}}^{-2.5}$ with periodic boundary conditions. For demonstration, we choose $n_T=15$. For a given $u_0(\bx)$, we solve the Navier-Stokes equation by the stream-function formulation with a pseudospectral method \cite{li2020fourier} to compute $u(\bx,\ T)$ on the grid, which is then used to generate the observation vector $\by$. Similar to  \cref{poisson_sec} for a new observation data $\by$, the network outputs the vector $\bc$ which can then be used to reconstruct $u_0(\bx)$ using \eqref{kl_nav}.
\begin{figure}[h!]      
 \centering
 
  \begin{tabular}{c}

      \begin{tabular}{c}

          \centering
          \includegraphics[scale=0.5]{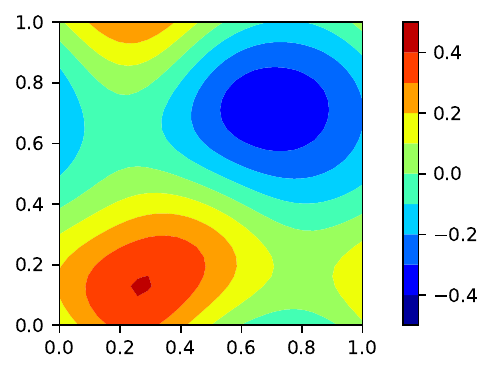}

      \end{tabular}

   \hspace{0.2 cm}

      \begin{tabular}{c}

          \centering
          \includegraphics[scale=0.5]{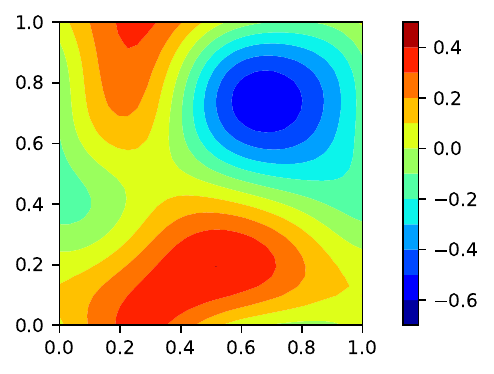}

      \end{tabular}

\hspace{0.2 cm}

      \begin{tabular}{c}

          \centering
          \includegraphics[scale=0.5]{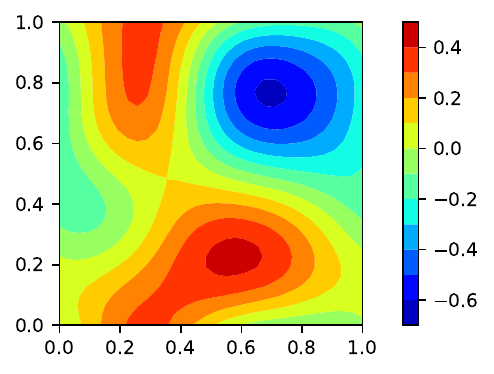}

      \end{tabular}

  \end{tabular}\\

\vspace{-0.2 cm}

  \begin{tabular}{c}

      \begin{tabular}{c}

          \centering
          \includegraphics[scale=0.5]{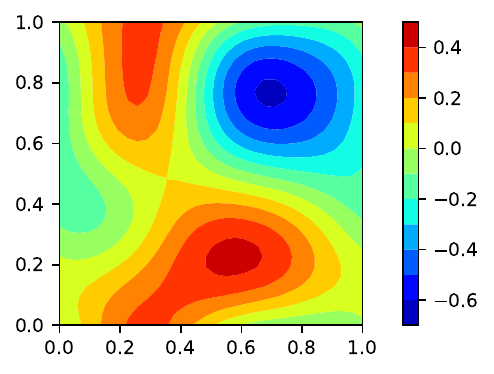}

      \end{tabular}

   \hspace{0.2 cm}

      \begin{tabular}{c}

          \centering
          \includegraphics[scale=0.5]{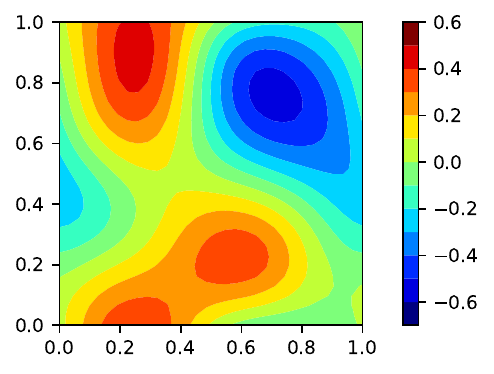}

      \end{tabular}

\hspace{0.2 cm}

      \begin{tabular}{c}

          \centering
          \includegraphics[scale=0.5]{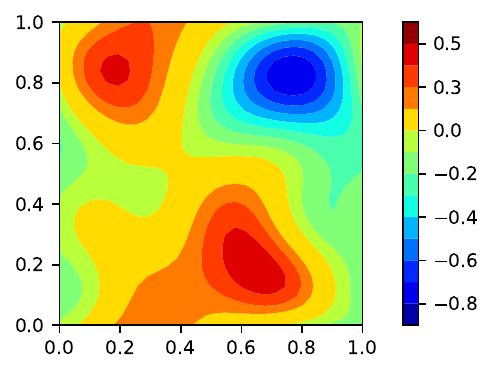}

      \end{tabular}

  \end{tabular}
  \caption{Evolution of parameter field  $u(\bx,\ T)$ for a particular test observation upon adding new layers for $S=250$ (Left to Right): inverse solution after the $1^{st}$ iteration;  inverse solution after the $2^{nd}$ iteration;  inverse solution after the $3^{rd}$ iteration; inverse solution after the $4^{th}$ iteration; inverse solution after the $6^{th}$ iteration; and the groundtruth parameter distribution.}
  \label{param_evol_nav}
\end{figure}   
 \cref{param_evol_nav} shows the parameter field (for a particular observational data input) predicted by our proposed approach at
different iterations of our algorithm where we see that the predicted vorticity field improves as one adds more parameters (weights/biases). 

Further, the performance (statistics of relative error) for each method is also provided in  \cref{stat_na}. Results in \cref{stat_na} are explained in \cref{nav_main}.
\begin{table}[h!]
\caption{Statistics ($\mu \pm \sigma$) of the relative error (rel. error) different methods (Navier-Stokes equation)}
\centering
\begin{tabular}{|c | c | c | c|c|}
        \hline
    Method &  rel. error & rel. error & Best  error\\ 
     &  ($S=250$) & ($S=500$) & $S=250 \ \ \ \vline  \ S=500$\\ \hline
     {\bf{  Proposed (II)}}   &  $0.328 \pm 0.0227$ & $0.283\pm 0.0039$& ${\bf{0.295}}$  \vline \  0.274\\ \hline
          {\bf{  Proposed (I)}} & ${\bf{0.320}} \pm 0.0198$ 
&${\bf{0.277}} \pm 0.0034$& $0.301$ \vline \  $0.271$\\ \hline
   Random layer insertion (I)   & $0.326 \pm 0.0219$ & ${\bf{0.277}} \pm 0.0036$ &$0.299$  \vline \ ${\bf{0.267}}$\\ \hline
   Net2DeeperNet (II) \cite{chen2015net2net}  & $0.347 \pm 0.050$ & $0.285 \pm 0.006$& $0.309$   \vline \ $0.275$  \\ \hline
     Baseline  network  &$0.350 \pm 0.038$ & $0.280 \pm 0.0037$&  $0.305$   \vline \  $0.273$\\ \hline
Forward Thinking \cite{hettinger2017forward} &  $0.429 \pm 0.0286$ &   $0.313 \pm 0.017$& $0.362$   \vline \  $0.284$ \\ \hline
NAS \cite{li2020system,li2020random} &  $-- \pm --$ &   $-- \pm --$&  ${\bf{0.295}}$  \vline \  0.273\\ \hline
	\end{tabular} 
 \label{stat_na}
\end{table}

\subsection{Performance on a real-world regression data set: The California housing dataset}

In this section, we consider experiments with a real-world data set, i.e the California housing dataset (ML data set in scikit-learn \cite{scikit-learn}) where the task is to predict the housing prices given a set of $8$ input features, i.e we have $n_0=8$ and $n_T=1$ for this problem. We consider experiments with training data sets of sizes $S=500$ and $S=1000$,  and considered a validation data set of size $100$  and a testing data set of size $6192$. Other details on the
hyperparameter settings are provided in \cref{Input_values_different_problems}. \cref{stat_cal} provides the summary of the result (we use mean squared error (mse) to quantify the performance) where it is clear that our proposed method (I) outperformed other strategies. However, the fully-automated network growing \cref{Algo_full_auto} (i.e proposed (II)) does not seem to be very effective for this dataset in comparison to other adaptation strategies which could be due to overfitting in the early iterations of the algorithm.

\begin{table}[h!]
\caption{Statistics ($\mu \pm \sigma$) of the mean squared error (MSE) for different methods (California housing dataset)}
\centering
\begin{tabular}{|c | c | c | c|c|}
        \hline
    Method & MSE  &MSE  & Best  error\\ 
     &  ($S=500$) & ($S=1000$) & $S=500 \ \ \ \vline  \ S=1000$\\ \hline
     {\bf{  Proposed (II)}}   &  $0.455 \pm 0.015$ & $0.407 \pm 0.020$& $0.398$  \vline \  0.350\\ \hline
          {\bf{  Proposed (I)}} & ${\bf{0.448}} \pm 0.020$ 
&${\bf{0.384}} \pm 0.0192$& ${\bf{0.392}}$ \vline \  ${\bf{0.346}}$\\ \hline
   Random layer insertion (I)   & $0.453 \pm 0.016$ & $0.386 \pm 0.0185$&$0.403$  \vline \ $0.350$\\ \hline
   Net2DeeperNet (II) \cite{chen2015net2net}  & $0.455 \pm 0.017$ & $0.391 \pm 0.0197$& $0.407$   \vline \ $0.346$  \\ \hline
     Baseline  network  & $0.481 \pm 0.032$ & $0.391 \pm 0.0207$&  $0.405$   \vline \  $0.351$\\ \hline
Forward Thinking \cite{hettinger2017forward} &  $0.541 \pm 0.028$ &   $0.430 \pm 0.033$& $0.440$   \vline \  $0.380$ \\ \hline
NAS \cite{li2020system,li2020random} &  $-- \pm --$ &   $-- \pm --$&  $0.395$  \vline \  0.347\\ \hline
	\end{tabular} 
 \label{stat_cal}
\end{table}

\subsection{Wind velocity reconstruction problem}
This example considers the wind velocity reconstruction problem where the objective is to predict the magnitude of wind velocity on a uniform 3D grid based on sparse measurement data. The observational dataset consists of $(x,y,z)$ position components and the corresponding velocity $\LRp{u(x,y,z),\ v(x,y,z),\ w(x,y,z)}$  over North America. We train a network that takes in the $\LRp{x,\ y,\ z}$ components as inputs and predicts the magnitude of wind velocity $\sqrt{u^2+v^2+z^2}$ as the output. We consider a training data set of sizes $S=1000$ and $S=5000$,   a validation data set of size $2000$  and a testing data set of size $21525$. The performance of each method is provided in  \cref{stat_wind}. It is clear that both our proposed method (I) and (II) outperformed all other strategies. Further, the improvement in solution (3D air current profile) on adding new layers is shown in  \cref{wind_evolution} where one clearly sees that the algorithm progressively picks up complex features in the solution as evident from more contour lines appearing in later stages of the algorithm. Further,  \cref{rel_error_wind} shows the relative error in predictions w.r.t truth (equation \eqref{error_metric}) over the 3D domain for different methods. It is clear from  \cref{rel_error_wind} that our proposed approach provides the most accurate estimates.

\begin{figure}[h!]      
\centering
  \begin{tabular}{c}

\hspace{-1.3 cm}
      \begin{tabular}{c}

          \centering
          \includegraphics[scale=0.4]{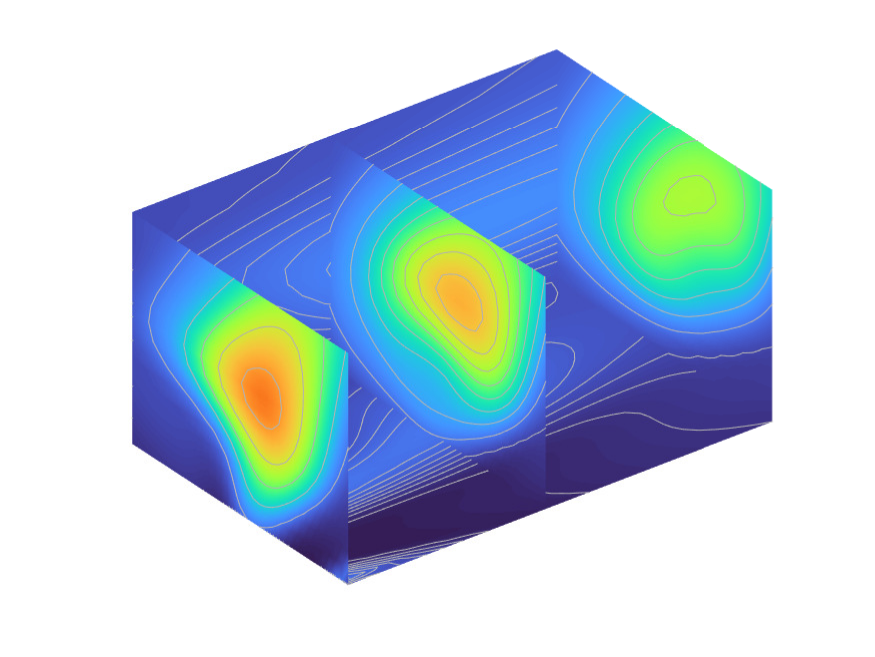}

      \end{tabular}

 \hspace{-1.1 cm}

      \begin{tabular}{c}

          \centering
          \includegraphics[scale=0.4]{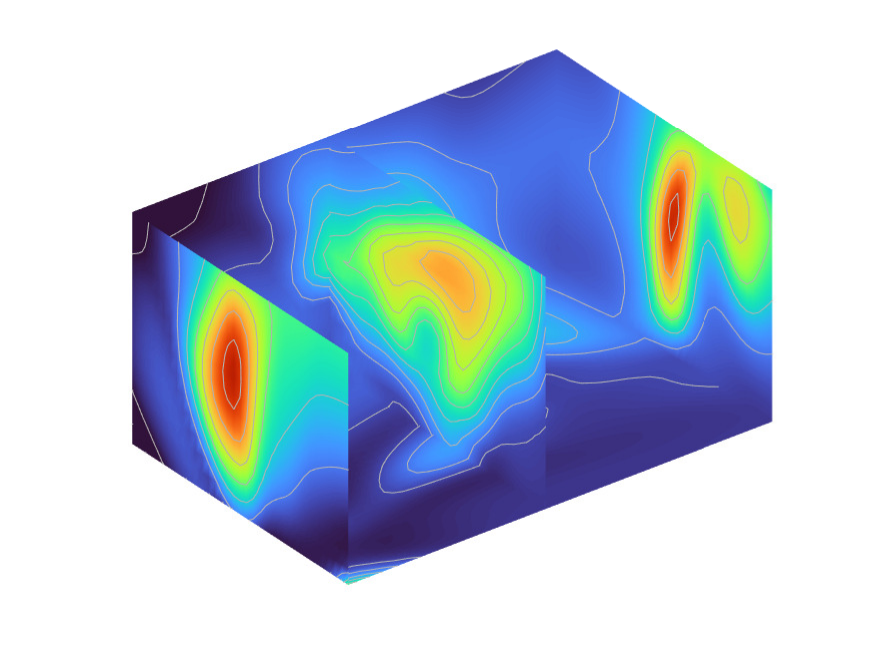}

      \end{tabular}

\hspace{-1.1 cm}
      \begin{tabular}{c}

          \centering
          \includegraphics[scale=0.4]{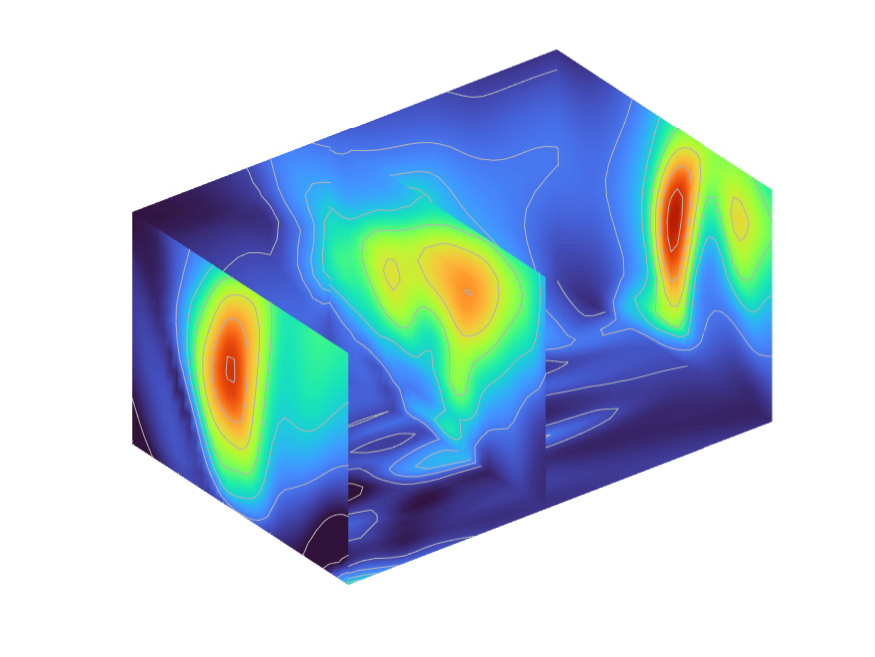}

      \end{tabular}

  \end{tabular}\\

  \begin{tabular}{c}

\hspace{-1.3 cm}
      \begin{tabular}{c}

          \centering
          \includegraphics[scale=0.4]{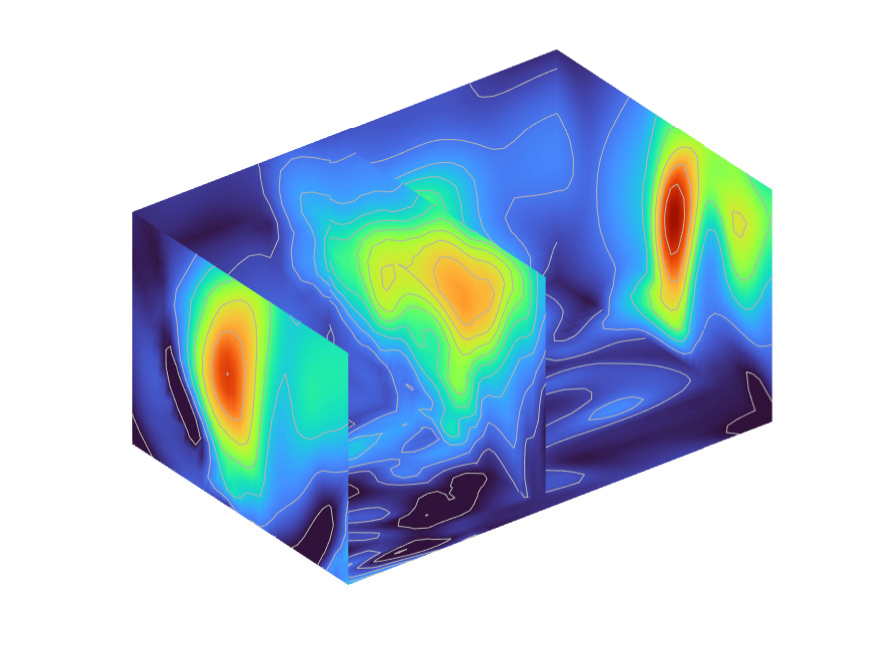}

      \end{tabular}

\hspace{-1.1 cm}

      \begin{tabular}{c}

          \centering
          \includegraphics[scale=0.4]{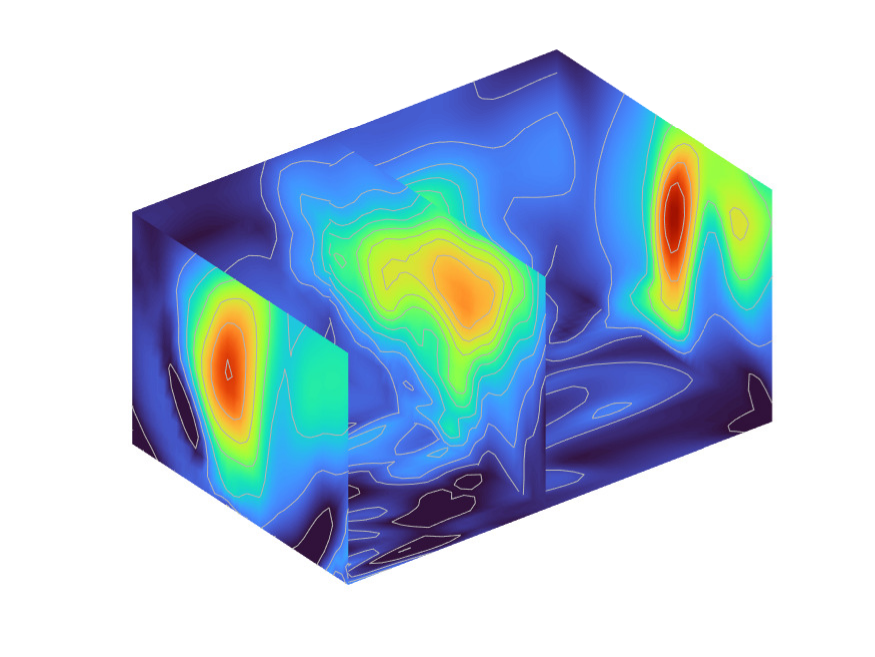}

      \end{tabular}

\hspace{-1.1 cm}

      \begin{tabular}{c}

          \centering
          \includegraphics[scale=0.4]{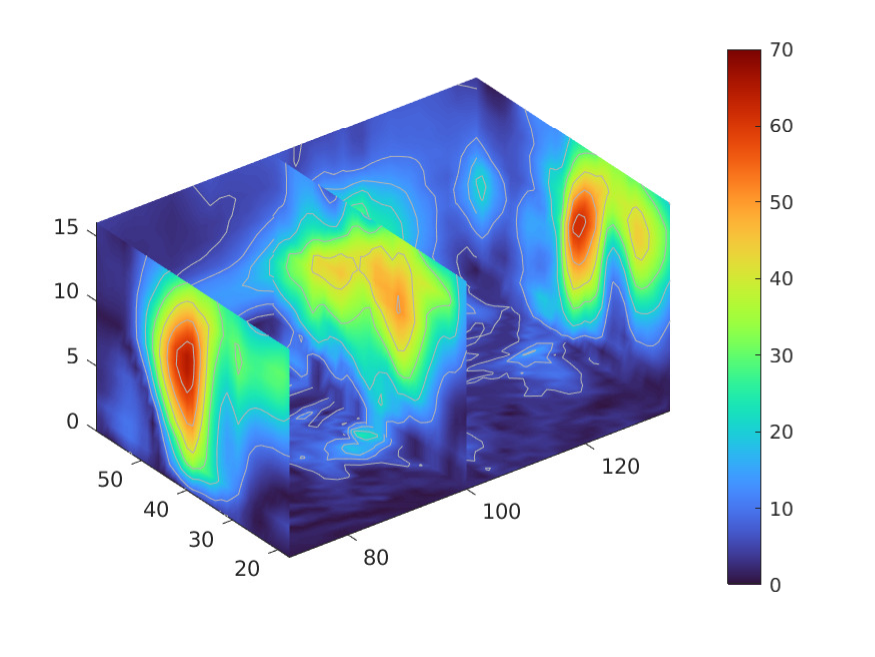}

      \end{tabular}

  \end{tabular}
  \caption{Evolution of solution (3D air current profile) upon adding new hidden layers for $S=1000$ (Left to right): Solution after the $4^{th}$ iteration; Solution after the $5^{th}$ iteration; Solution after the $6^{th}$ iteration; Solution after the $8^{th}$ iteration;  Solution after the $10^{th}$ iteration; Ground truth solution.  }
\label{wind_evolution}
\end{figure} 

\begin{table}[h!]
\caption{Statistics ($\mu \pm \sigma$) of the mean squared error (MSE) for different methods (Wind velocity reconstruction)}
\centering
\begin{tabular}{|c | c | c | c|c|}
        \hline
    Method &  MSE  & MSE & Best  error\\ 
     &  ($S=1000$) & ($S=5000$) & $S=1000 \ \ \ \vline \ \ S=5000$\\ \hline
     {\bf{  Proposed (II)}}   &  $16.57 \pm 2.26$ & $5.32 \pm 0.97$& 13.01  \vline \  \ 3.97\\ \hline
          {\bf{  Proposed (I)}} & ${\bf{16.01}} \pm 2.90$ & 
${\bf{4.89}} \pm 1.23$&{\bf{12.40}}\ \vline \ \ {\bf{3.77}}\\ \hline
   Random layer insertion (I)   &  $17.85 \pm 4.17$ & $5.84 \pm 1.41$& 13.29   \vline \ \ 4.68 \\ \hline
   Net2DeeperNet (II) \cite{chen2015net2net}  & $19.10 \pm 4.15$ & $5.70 \pm 1.86$& 13.28  \vline \ \ 4.60  \\ \hline
     Baseline  network  & $18.96\pm 4.62$ & $5.53 \pm 2.02$&  13.24  \vline \ \ 4.53\\ \hline
Forward Thinking \cite{hettinger2017forward} &  $87.90 \pm 17.64$ &   $20.59 \pm 10.23$&48.15  \ \vline \  15.23 \\ \hline
NAS \cite{li2020system,li2020random} &  $-- \pm --$ &   $-- \pm --$&  14.12  \vline \ \ 4.35\\ \hline
	\end{tabular} 
 \label{stat_wind}
\end{table}

\begin{figure}[h!]      
\centering
  \begin{tabular}{c}

 \hspace{-0.5 cm}
      \begin{tabular}{c}

          \centering
          \includegraphics[scale=0.37]{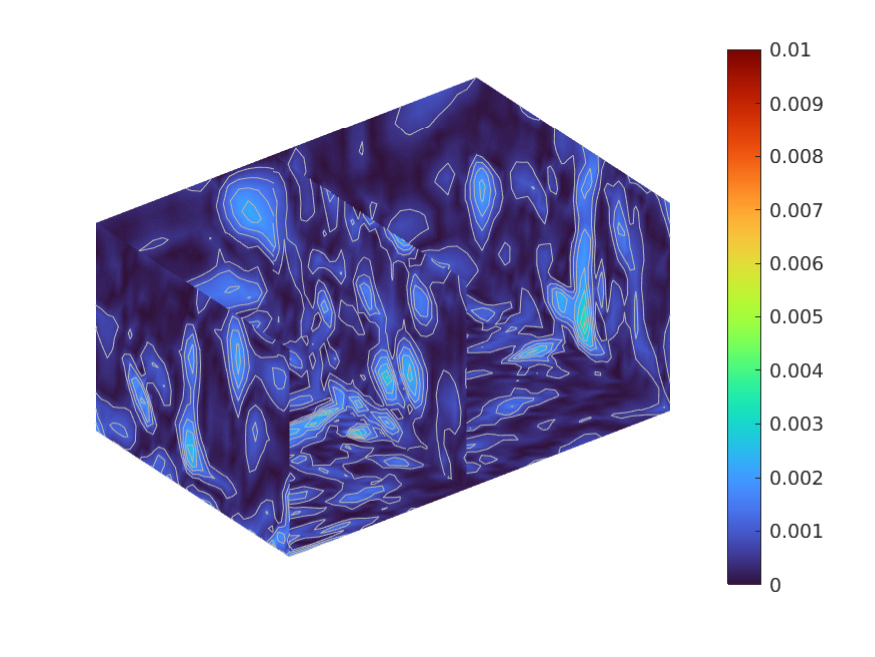}

      \end{tabular}

   \hspace{-1 cm}

      \begin{tabular}{c}

          \centering
          \includegraphics[scale=0.37]{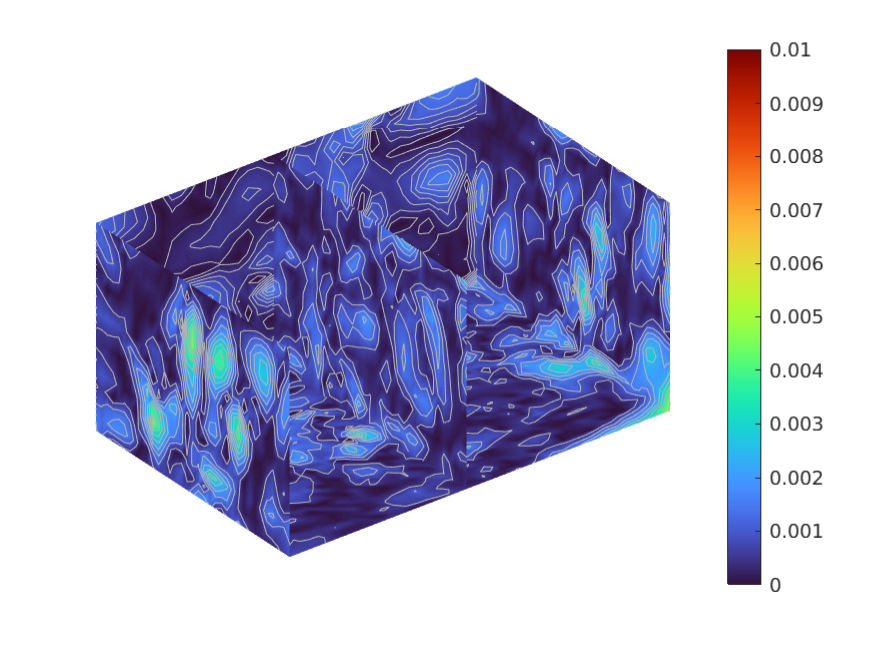}

      \end{tabular}

\hspace{-1 cm}

      \begin{tabular}{c}

          \centering
          \includegraphics[scale=0.37]{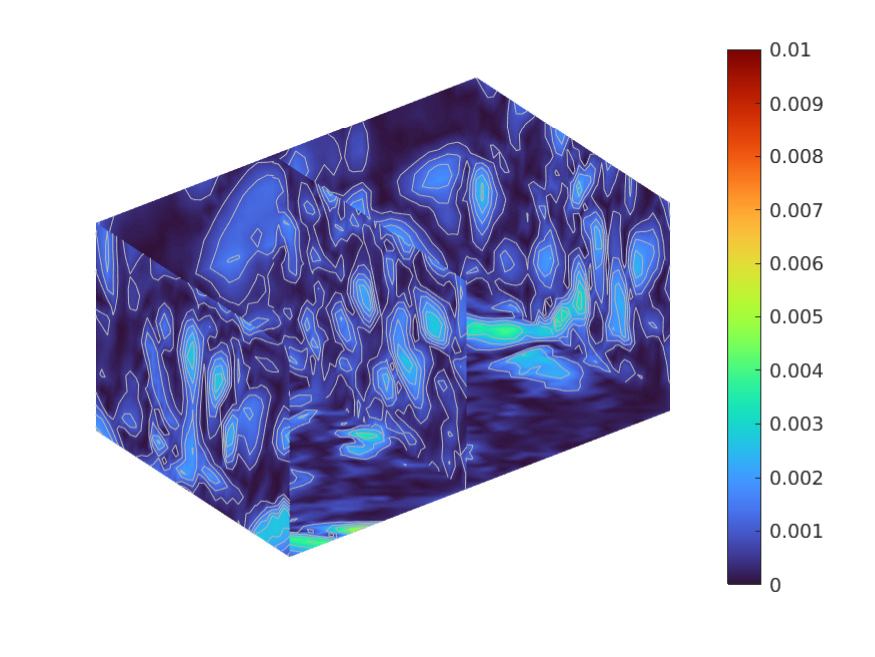}

      \end{tabular}

  \end{tabular}\\
    \begin{tabular}{c}

      \begin{tabular}{c}

          \centering
          \includegraphics[scale=0.37]{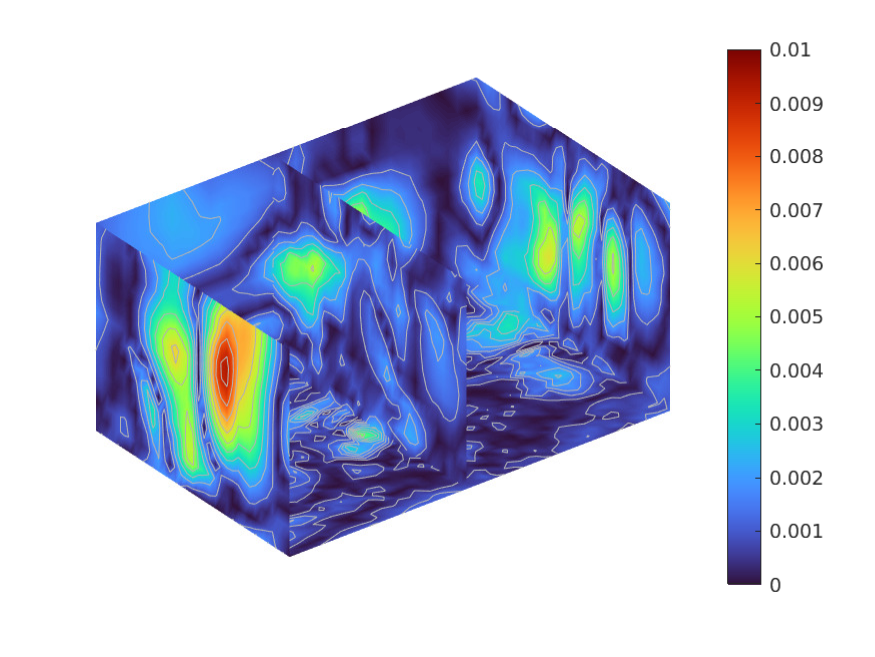}

      \end{tabular}

   \hspace{0.2 cm}

      \begin{tabular}{c}

          \centering
          \includegraphics[scale=0.37]{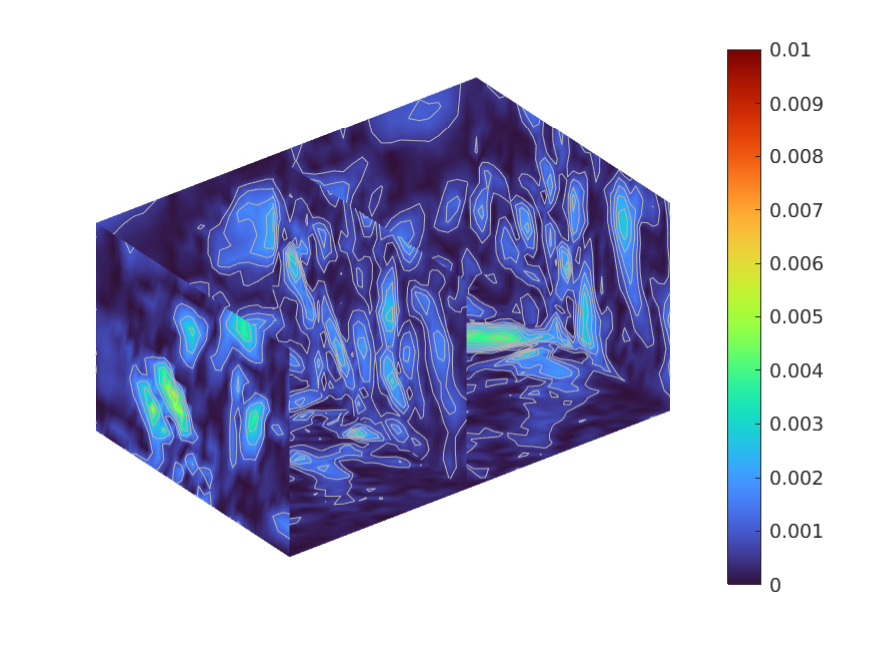}

      \end{tabular}
 \end{tabular}
  \caption{ Relative error in the predicted 3D air current profile over the spatial domain ($S=1000$). Left to right: Proposed method (I); Random layer insertion (I); Net2DeeperNet (II) \cite{chen2015net2net}; Forward Thinking \cite{hettinger2017forward}; Baseline. }
  \label{rel_error_wind}

\end{figure} 

\subsection{Image classification with convolutional neural network}

Finally, we test the performance of our adaptation scheme on a CNN architecture for an image classification task. In particular, we consider experiments on 
the MNIST handwritten dataset. Our architecture consists of an upsampling layer that maps the input image to $m$ channels, a sequence of residual layers that we adapt, a downsampling layer that maps $m$ channels to one channel, and a fully connected layer that outputs the probability vector to classify digits.  Other details on the
architecture are provided in  \cref{Input_values_different_problems}.   A summary of results for MNIST dataset (for two different sizes) is provided in \cref{stat_mnist}. Even in this case, we observe that our algorithm significantly outperforms other methods in the low data regime as evident from the maximum accuracy achieved in  \cref{stat_mnist}. However, when looking at the performance achieved by training on the full dataset we do not see considerable advantages over other approaches.
\begin{table}[h!]
\caption{Statistics ($\mu \pm \sigma$)  of the classification accuracy for different methods (MNIST Classification)}
\centering
\begin{tabular}{|c | c | c | c|c|}
        \hline
    Method &  Accuracy in $\%$ & Accuracy in $\%$  & Best  accuracy\\ 
     &  ($S=600$) & ($S=60000$) & $S=600 \ \ \ \vline  \ S=60000$\\ \hline
     {\bf{  Proposed (II)}}   &  $86.52  \pm 0.53$ & $98.90 \pm 0.07$& $87.40$  \vline \  98.95\\ \hline
          {\bf{  Proposed (I)}} & ${\bf{86.98}} \pm 0.73$ 
&$98.92 \pm 0.06$& ${\bf{88.44}}$ \vline \  $98.99$\\ \hline
   Random layer insertion (I)   & $86.78 \pm 0.62$ & $98.90 \pm 0.06$&$88.00$  \vline \ $98.95$\\ \hline
   Net2DeeperNet (II) \cite{chen2015net2net}  & $86.76 \pm 0.71$ & $98.92 \pm 0.07$& $87.64$   \vline \ $98.99$  \\ \hline
     Baseline  network  & $86.78 \pm 0.64$ & ${\bf{98.98}} \pm 0.06$&  $87.81$   \vline \  $99.05$\\ \hline
Forward Thinking \cite{hettinger2017forward} &  $86.60 \pm 0.85$ &   $98.90 \pm 0.08$ & $87.66$   \vline \  $98.95$ \\ \hline
NAS \cite{li2020system,li2020random} &  $-- \pm --$ &   $-- \pm --$&  $88.00$  \vline \  ${\bf{99.07}}$\\ \hline
	\end{tabular} 
\label{stat_mnist}
\end{table}

\subsection{Improving performance of pre-trained state-of-the-art machine learning models}
\label{vit_tran}
For discussion in \cref{vision_t}, we consider a ViT model by Google Brain Team with $5.8$ million parameters (vit\textunderscore tiny\textunderscore patch16\textunderscore 384), a schematic of which is shown in  \cref{vit}. The model is trained on ImageNet-21k dataset and fine-tuned on ImageNet-1k.
\begin{figure}[h!]      
  \centering
          \includegraphics[scale=1.2]{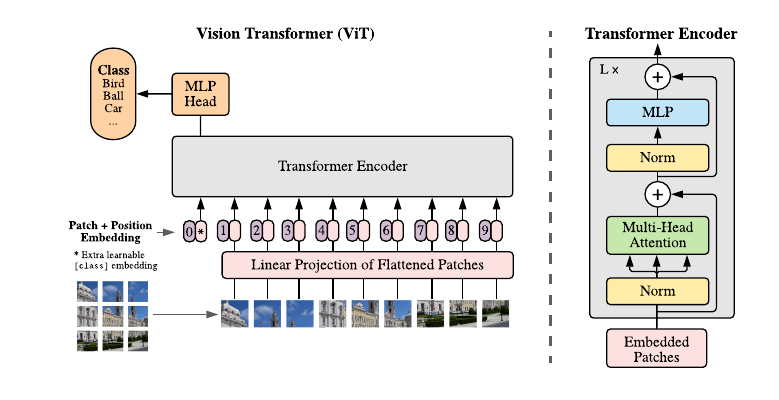}
    \caption{Schematic of a vision transformer (ViT) image classification model (Dosovitskiy et al. \cite{dosovitskiy2020image})}
  \label{vit}
\end{figure}
Note that in \cref{vision_t}, we have only considered adapting the MLP head in  \cref{vit} since it is reasonable to assume that the transformer Encoder block in \cref{vit} has learnt a good representation of the CIFAR-10 input data and it is only necessary to increase the capacity of the classifier (MLP Head) to further improve the classification accuracy. However, it is noteworthy that our approach may be extended to adapting MLP in transformer encoder blocks instead of the MLP Head. In this case one needs to  make decisions on the following aspects: (i) out of different encoder blocks, which encoder block should be adapted; (ii) for the selected encoder block, where to add a new layer within the MLP; and (iii) how to initialize the added layer.  This multi-level adaptation strategy is beyond the scope of the present work and will be investigated in the future.

\paragraph{Possibility of adapting the attention module in a transformer architecture}
{
The attention mechanism in transformers relies on computing the Query (${\boldsymbol{Q}}$), Key (${\boldsymbol{K}}$), and Value (${\boldsymbol{V}}$)  matrices by applying three separate linear projections to an input sequence $\boldsymbol{X}\in \real^{N\times D}$, where $N$ is the number of vectors in the sequence and $D$ is the embedded dimension of the vector. Specifically, one computes ${\boldsymbol{Q}}={\boldsymbol{X}}{\boldsymbol{W}}_q$, ${\boldsymbol{K}}={\boldsymbol{X}}{\boldsymbol{W}}_k$, ${\boldsymbol{V}}={\boldsymbol{X}}{\boldsymbol{W}}_v$, where ${\boldsymbol{W}}_q,\ {\boldsymbol{W}}_k,\ {\boldsymbol{W}}_v\in \real^{D\times D}$ are the learnable parameters of the attention module. The output representation is then computed as: $\mathrm{softmax}\LRp{{\boldsymbol{Q}}{\boldsymbol{K}^T}/\sqrt{D}}{\boldsymbol{V}}$, where $\mathrm{softmax}(.)$ operation is applied row-wise, normalizing each row of the resulting matrix so that it sums to $1$.}

{{\em Since our approach is, after all, a sensitivity method, it can be applied to any architecture, including transformer, as long as a user-specific update strategy needs sensitivity. For transformer, one can derive the sensitivy of the loss function with respect to  (${\boldsymbol{Q}}$), (${\boldsymbol{K}}$), and  (${\boldsymbol{V}}$). Together with the user-specific update choice, we can adapt (${\boldsymbol{Q}}$), (${\boldsymbol{K}}$), and  (${\boldsymbol{V}}$), respectively.}}

{On the other hand, if one considers a nonlinear modification to the attention module such as modifying the Query  as $\tilde{\bQ}=f_{\btheta}\LRp{\bQ}$, where $f_{\btheta}(.)$ is an MLP with residual connections (with all layers having the same  dimension), our proposed approach could potentially be used to add additional layers to the MLP during the adaptation process. The  procedure requires detailed investigation with user-specific choice of adaptivity.}

\paragraph{Possibility of layers that change the dimensionality of the network features}
{Note that for the MLP in each encoder block of a ViT (see \cref{vit}), one could consider a propagation of the form ${\boldsymbol{y}_1}={\boldsymbol{P}_1}({\boldsymbol{Q}_1}{\boldsymbol{x}}+{\boldsymbol{g}}({\boldsymbol{Q}_1}{\boldsymbol{x}}))$, where ${\boldsymbol{Q}_1}$ is the up-sampling/downsampling in the input layer of MLP, ${\boldsymbol{g}}(.)$ is the nonlinear function acting componentwise on the input ${\boldsymbol{Q}_1}{\boldsymbol{x}}\in \mathbb{R}^k$, and ${\boldsymbol{P}_1}$ denotes the upsampling/downsampling layer (the output layer of MLP). For this two hidden layer MLP, it is clear that the hidden dimension is $k$. Now the output ${\boldsymbol{y}}_1$ can be used as an input to a new MLP block with hidden dimension $l$  whose output is given as ${\boldsymbol{y}_2}={\boldsymbol{P}_2}({\boldsymbol{Q}_2}{\boldsymbol{y}_1}+{\boldsymbol{g}}({\boldsymbol{Q}_2}{\boldsymbol{y}_1}))$, where ${\boldsymbol{Q}_2}{\boldsymbol{x}}\in \mathbb{R}^l$. Consequently, we obtain two MLP blocks with different hidden dimensions.  Our approach could then be used to address the following questions: (i) which MLP block should be adapted; (ii) for the selected MLP block, where to add a new layer; and (iii) how to initialize the added layer. A detailed investigation of this multi-level adaptation strategy is left for future work.}

\subsection{Topological derivative informed transfer learning approach: Application in parameter efficient fine tuning}
\label{param_eff}

In order to further back our claim that topological derivative is indeed a good indicator for determining where to add a new layer (refer \cref{sci_tran}), we consider adding a new layer at different locations and the results are tabulated in \cref{correlation}. We found that the computed topological derivative correlates well with the observed mean squared error
and the best performance is achieved by adding a layer at the location of the highest topological derivative. Interestingly, we observe that retraining the first hidden layer yields the best results, contrary to traditional transfer learning, where the last few layers are typically retrained. Other details of hyperparameters used for the problem are provided in \cref{hyper_parameter_n}.

\begin{table}[h!]
\caption{Correlation of topological derivative with mean squared error achieved by proposed method}
\centering
\begin{tabular}{|c | c | c | c}
        \hline
   Hidden layer number&  Topological derivative& Mean squared error  \\ 
    (Added layer)  &  &   \\ \hline
         1 & ${\bf{0.974}}$ &  ${\bf{2.576}}$  \\ \hline
          3 & $0.941$ &  $2.716$  \\ \hline
           5 & $0.907$ &  $3.120$  \\ \hline
           7 & $0.851$ &  $3.280$  \\ \hline
           8 & \textcolor{red}{\textbf{0.821}} &  \textcolor{red}{\textbf{3.524}}  \\ \hline
	\end{tabular} 
 \label{correlation}
\end{table}

\end{section}

\begin{section}{Details of hyperparameter values for different problems}{}
   \label{hyper_parameter_n}
   Details of hyperparameters used in  \cref{Algo_full} and \cref{Algo_full_CNN} is provided in  \cref{Input_values_different_problems}.   \cref{Input_values_different_problems} additionally provides details on the hyperparameters used for the optimizer. 
The description of each problem is also provided below.
\begin{table}[h!]
	\caption{Details of hyperparameters  for \cref{Algo_full}} \label{Input_values_different_problems}
	\renewcommand{\arraystretch}{1.3}
	\centering	
	\begin{tabular}{|c | c | c | c | c | c | c | c | c |c|c|c|c|c|}
 \hline
		\centering
		 Problem &$n$ &$m$ &$N_n$ &   $ T_b$ & $i_s$ ($\%$)& $E_e$ & $\kappa_e$ & $b_s$ & $\ell_r$&  $\epsilon^t$& $\epsilon$& $\tau_1$\\
		\hline
  I (S=1000)& 10 &5 & 7& 2 & $30$ $\%$&2000 & 1000 &1000& 0.001& 0.01& 0.001 & 0.001\\
   I (S=1500)& 10 &5 & 7& 2 & $30$ $\%$ &2000 &1000 &1500& 0.001&  0.01 & 0.001 & 0.001\\ \hline
  II (S=250)& 20 &10 & 6& 2 & $25$ $\%$&2000 & 1000 &125& 0.001& 0.01& 0.001 & 0.001\\
   II (S=500)& 20 &10 & 6& 2 & $25$ $\%$ &2000 &1000 &250& 0.001& 0.01&  0.001& 0.001\\ \hline
  III (S=500)& 20 &10 & 6& 2 & $25$ $\%$&500 & 500 &500& 0.01& 0.01 &0.001& 0.001\\
   III (S=1000)& 20 &10 & 6& 2 & $25$ $\%$ &500 &500 &1000& 0.01&  0.01 & 0.001 & 0.001\\ \hline
  IV (S=1000)& 20 &10 & 6& 2 & $10$ $\%$&500 & 500 &1000& 0.001& 0.01 & 0.001 & 0.001\\ 
   IV (S=5000)& 20 &10 & 6& 2 & $10$ $\%$ &500 &500 &5000& 0.001&  0.01&  0.001 & 0.001\\ \hline
     VI & 10 &5&20& $0$  & $0$ $\%$ &1 &1&512& 0.001&  0.01&0.001 & 0.1\\ \hline
      VII & 50 &50& 1  & 8 &0 $\%$ &2000&0& 20& 0.001& 0.01& 0.001 & 0.001\\ 
  \hline
	\end{tabular} 
\end{table}
In \cref{Input_values_different_problems}, $b_s$ denotes the batch size, $\ell_r$ denotes the learning rate,  and $\sigma_n=0.01$ is chosen as the standard deviation of Gaussian noise for initializing the weights/biases. In order to reduce the number of parameters in the first layer, we introduce the sparsity parameter denoted as $i_s$. In our experiments, $i_s\%$ parameters in the first layer are initialized as zeros and the corresponding connections are removed (or those parameters are untrainable/frozen throughout the procedure). Further, as more layers are added to capture more complex features in the solution, it is also natural to consider training the bigger network for a larger number of epochs. We consider the following scheduler for choosing the training epochs at each iteration,
$E(i)=E_e+\kappa_e \times (i-1)$, where $i$ denotes the $i^{th}$ iteration of the algorithm. For  \cref{Algo_full_auto}, we use the same hyperparameters in  \cref{Input_values_different_problems}, \cref{Input_values_CNN} except that no scheduler is employed (i.e $E_e,\ \kappa_e$ is not employed) and we do not fix the parameter $m$. Further, the parameter $N_k$ in  \cref{Algo_full_auto} is chosen as $100$ for all problems. For MNIST classification problem we fix $N_k$ as 20.
\begin{table}[h!]
	\caption{Details of hyperparameters  for \cref{Algo_full_CNN}} \label{Input_values_CNN}
	\renewcommand{\arraystretch}{1.3}
	\centering	
	\begin{tabular}{|c | c | c | c | c | c | c | c | c |c|c|c|c|c|c|}
 \hline
		\centering
		 Problem &$n$ &$m$ &$N_n$ & $T_b$ &  $f$ & $ s$& $E_e$ & $\kappa_e$& $b_s$& $\ell_r$&  $\epsilon^t$ & $\epsilon$& $\tau_1$ \\
		\hline
  V  (S=600)& $10$ &$5$ & $6$& $2$ & $5$ &$1$ & $15$ & 0& $600$&$0.001$& 0.01& 0.0001& 0.00001\\
  V  (S=60000)& $10$ &$10$ & $7$& $6$ & $5$ &$1$ & $5$ &0 & $600$&$0.001$&  0.01& 0.0001& 0.00001\\ \hline
	\end{tabular} 
\end{table}

\begin{equation*}
 \begin{aligned}
\text{I}&: \text{Learning the observable to parameter map for 2D heat equation}.\\
\text{II }&: \text{Learning the observable to parameter map for Navier Stokes equation}.\\
\text{III }&: \text{Regression on  California housing dataset.}\\
\text{IV }&: \text{Wind velocity reconstruction problem.}\\
\text{V }&: \text{Image classification on MNIST dataset with CNN architecture.}\\
\text{VI }&: \text{Transfer learning for vision transformer (\cref{vision_t}).}\\
\text{VII }&: \text{Transfer learning involving scientific data-set ( \cref{sci_tran}).}
\end{aligned}  
\end{equation*}

\end{section}

\end{document}